\begin{document}

\title[Improved MC Planning via Causal Disentanglement for Structurally-Decomposed MDPs]{\titleText}

\author{Larkin Liu}
\affiliation{\institution{Technische Universität München} \country{Germany}}
\email{larkin.liu@tum.de}

\author{Shiqi Liu}
\affiliation{\institution{L'\'Ecole Polytechnique} \country{France}}

\email{shiqi.liu@polytechnique.edu}

\author{Yinruo Hua}
\affiliation{\institution{Technische Universität München} \country{Germany}}
\email{ge42vor@mytum.de	}

\author{Matej Jusup}
\affiliation{\institution{ETH Zürich} \country{Switzerland}}
\email{mjusup@ethz.ch}

\begin{abstract}
    Markov Decision Processes (MDPs), as a general-purpose framework, often overlook the benefits of incorporating the causal structure of the transition and reward dynamics. For a subclass of resource allocation problems, we introduce the \textit{Structurally Decomposed} MDP (\texttt{SD-MDP}), which leverages causal disentanglement to partition an MDP’s temporal causal graph into independent components. By exploiting this disentanglement, \texttt{SD-MDP} enables dimensionality reduction and computational efficiency gains in optimal value function estimation. We reduce the sequential optimization problem to a fractional knapsack problem with log-linear complexity $\mathcal{O}(T \log T)$, outperforming traditional stochastic programming methods that exhibit polynomial complexity with respect to the time horizon $T$. Additionally, \texttt{SD-MDP} computational advantages are independent of state-action space size, making it viable for high-dimensional spaces. Furthermore, our approach integrates seamlessly with Monte Carlo Tree Search (MCTS), achieving higher expected rewards under constrained simulation budgets while providing a vanishing simple regret bound. Empirical results demonstrate superior policy performance over benchmarks across various logistics and finance domains. \looseness=-1
\end{abstract}

\maketitle

\section{Introduction}

While Markov Decision Processes (MDPs) offer a comprehensive framework for many sequential decision-making problems under uncertainty, certain problem structures and assumptions allow for simplified approaches that avoid the full complexity of an standard MDP formulation. For instance, in linear quadratic Gaussian control problems, the optimal control policy has a reduced form due to the equivalence principle \cite{anderson:2007optimal}. In finance, under log utility and geometric Brownian motion asset dynamics, the optimal investment strategy has an explicit closed-form solution in some instances of financial derivatives \cite{zhou:2003markowitz}. Economic models with rational expectations and additive components can often leverage the certainty equivalence principle. This means that separating deterministic and stochastic components can simplify the model, and provide pathways to derive error bounds for Monte Carlo (MC) estimation algorithms. Sampling-based approaches that avoid modelling the full probability distribution can be carefully adopted to provide tractable solutions in various stochastic control applications while retaining key problem characteristics \cite{calafiore:2012robust}. \looseness=-1

The key idea which we introduce is the disentanglement of stochastic environmentally induced state transitions with deterministically action driven reward functions. When we disentangle these components from an MDP, we are able to independently make optimizations based on components that the agent can model perfectly at a lower fidelity, and compute expectations over stochastic outcomes separately, improving efficiency, and making it simpler to derive theoretical guarantees on any value approximation \cite{geng:2020deepPQR} \cite{todorov:2009effici_comp}. Furthermore, this type of construct allows us to obtain theoretical guarantees on value function estimates, which aids us in provide theoretical guarantees for value function estimates when integrating Monte Carlo approximations with MDP solvers utilizing online learning. To shed a new perspective on this family of problems, we formulate an abstraction for a specific class of MDPs, which can be used to flexibly model several types of resource allocation problems. We introduce the \texttt{SD-MDP} (in Sec.~\ref{sec:sd_mdp}), which provides a basis for more expressive stochastic modelling for specific problem settings akin to a restless bandit setting, as well as provide a standard pathway to derive important theoretical guarantees. 

\begin{figure}[!tb]\centering 
    \begin{tikzpicture}[x=0.75pt,y=0.75pt,yscale=-0.8,xscale=0.8]
    
    \draw   (179,88) .. controls (179,74.19) and (190.19,63) .. (204,63) .. controls (217.81,63) and (229,74.19) .. (229,88) .. controls (229,101.81) and (217.81,113) .. (204,113) .. controls (190.19,113) and (179,101.81) .. (179,88) -- cycle ;
    \draw   (308,88) .. controls (308,74.19) and (319.19,63) .. (333,63) .. controls (346.81,63) and (358,74.19) .. (358,88) .. controls (358,101.81) and (346.81,113) .. (333,113) .. controls (319.19,113) and (308,101.81) .. (308,88) -- cycle ;
    \draw    (229,88) -- (306,88) ;
    \draw [shift={(308,88)}, rotate = 180] [color={rgb, 255:red, 0; green, 0; blue, 0 }  ][line width=0.75]    (10.93,-3.29) .. controls (6.95,-1.4) and (3.31,-0.3) .. (0,0) .. controls (3.31,0.3) and (6.95,1.4) .. (10.93,3.29)   ;
    \draw [color={rgb, 255:red, 74; green, 144; blue, 226 }  ,draw opacity=1 ][fill={rgb, 255:red, 81; green, 231; blue, 197 }  ,fill opacity=1 ] [dash pattern={on 4.5pt off 4.5pt}]  (112.5,119) -- (424.5,120) ;
    \draw   (179,157) .. controls (179,143.19) and (190.19,132) .. (204,132) .. controls (217.81,132) and (229,143.19) .. (229,157) .. controls (229,170.81) and (217.81,182) .. (204,182) .. controls (190.19,182) and (179,170.81) .. (179,157) -- cycle ;
    \draw   (315,218) .. controls (315,204.19) and (326.19,193) .. (340,193) .. controls (353.81,193) and (365,204.19) .. (365,218) .. controls (365,231.81) and (353.81,243) .. (340,243) .. controls (326.19,243) and (315,231.81) .. (315,218) -- cycle ;
    \draw    (229,157) -- (310,157) ;
    \draw [shift={(312,157)}, rotate = 180] [color={rgb, 255:red, 0; green, 0; blue, 0 }  ][line width=0.75]    (10.93,-3.29) .. controls (6.95,-1.4) and (3.31,-0.3) .. (0,0) .. controls (3.31,0.3) and (6.95,1.4) .. (10.93,3.29)   ;
    \draw   (179,218) .. controls (179,204.19) and (190.19,193) .. (204,193) .. controls (217.81,193) and (229,204.19) .. (229,218) .. controls (229,231.81) and (217.81,243) .. (204,243) .. controls (190.19,243) and (179,231.81) .. (179,218) -- cycle ;
    \draw   (312,157) .. controls (312,143.19) and (323.19,132) .. (337,132) .. controls (350.81,132) and (362,143.19) .. (362,157) .. controls (362,170.81) and (350.81,182) .. (337,182) .. controls (323.19,182) and (312,170.81) .. (312,157) -- cycle ;
    \draw    (229,218) -- (313,218) ;
    \draw [shift={(315,218)}, rotate = 180] [color={rgb, 255:red, 0; green, 0; blue, 0 }  ][line width=0.75]    (10.93,-3.29) .. controls (6.95,-1.4) and (3.31,-0.3) .. (0,0) .. controls (3.31,0.3) and (6.95,1.4) .. (10.93,3.29)   ;
    \draw    (229,88) -- (313.9,216.33) ;
    \draw [shift={(315,218)}, rotate = 236.51] [color={rgb, 255:red, 0; green, 0; blue, 0 }  ][line width=0.75]    (10.93,-3.29) .. controls (6.95,-1.4) and (3.31,-0.3) .. (0,0) .. controls (3.31,0.3) and (6.95,1.4) .. (10.93,3.29)   ;
    \draw    (229,157) -- (313.37,216.84) ;
    \draw [shift={(315,218)}, rotate = 215.35] [color={rgb, 255:red, 0; green, 0; blue, 0 }  ][line width=0.75]    (10.93,-3.29) .. controls (6.95,-1.4) and (3.31,-0.3) .. (0,0) .. controls (3.31,0.3) and (6.95,1.4) .. (10.93,3.29)   ;
    \draw    (229,218) -- (310.39,158.18) ;
    \draw [shift={(312,157)}, rotate = 143.69] [color={rgb, 255:red, 0; green, 0; blue, 0 }  ][line width=0.75]    (10.93,-3.29) .. controls (6.95,-1.4) and (3.31,-0.3) .. (0,0) .. controls (3.31,0.3) and (6.95,1.4) .. (10.93,3.29)   ;
    \draw    (362,157) -- (429.5,157) ;
    \draw [shift={(431.5,157)}, rotate = 180] [color={rgb, 255:red, 0; green, 0; blue, 0 }  ][line width=0.75]    (10.93,-3.29) .. controls (6.95,-1.4) and (3.31,-0.3) .. (0,0) .. controls (3.31,0.3) and (6.95,1.4) .. (10.93,3.29)   ;
    \draw    (109.5,157) -- (177,157) ;
    \draw [shift={(179,157)}, rotate = 180] [color={rgb, 255:red, 0; green, 0; blue, 0 }  ][line width=0.75]    (10.93,-3.29) .. controls (6.95,-1.4) and (3.31,-0.3) .. (0,0) .. controls (3.31,0.3) and (6.95,1.4) .. (10.93,3.29)   ;
    \draw    (109.5,88) -- (177,88) ;
    \draw [shift={(179,88)}, rotate = 180] [color={rgb, 255:red, 0; green, 0; blue, 0 }  ][line width=0.75]    (10.93,-3.29) .. controls (6.95,-1.4) and (3.31,-0.3) .. (0,0) .. controls (3.31,0.3) and (6.95,1.4) .. (10.93,3.29)   ;
    \draw    (358,88) -- (425.5,88) ;
    \draw [shift={(427.5,88)}, rotate = 180] [color={rgb, 255:red, 0; green, 0; blue, 0 }  ][line width=0.75]    (10.93,-3.29) .. controls (6.95,-1.4) and (3.31,-0.3) .. (0,0) .. controls (3.31,0.3) and (6.95,1.4) .. (10.93,3.29)   ;
    
    \draw (110,121) node [anchor=north west][inner sep=0.75pt]   [align=left] {{\tiny Transition Separability}};
    \draw (198,209.4) node [anchor=north west][inner sep=0.75pt]    {$\mathbf{a}^{t}$};
    \draw (195,77.4) node [anchor=north west][inner sep=0.75pt]    {$\mathbf{x}_{\eta }^{t}$};
    \draw (316,77.4) node [anchor=north west][inner sep=0.75pt]    {$\mathbf{x}_{\eta }^{t+1}$};
    \draw (195,145.4) node [anchor=north west][inner sep=0.75pt]    {$\mathbf{x}_{d}^{t}$};
    \draw (321,145.4) node [anchor=north west][inner sep=0.75pt]    {$\mathbf{x}_{d}^{t+1}$};
    \draw (332,208.4) node [anchor=north west][inner sep=0.75pt]    {$\mu ^{t}$};
    \end{tikzpicture}
    \caption{\textbf{Causal Structure \& Partitioning of the \texttt{SD-MDP}:} The \texttt{SD-MDP} splits transition dynamics into stochastic component $\mathbf{x}_\eta^t$ and deterministic $\mathbf{x}_d^t$. The reward $\mu^t$ is driven by both partitions, and the action $\mathbf{a}^t$.}
    \label{fig:causal_sdmdp_diagram}
\vspace{-0.2cm}
\end{figure}

Specifically, we focus on the problem of resource allocation over a finite time horizon. Traditionally, resource allocation problems were solved using multi-stage stochastic programming or formulating the problem as an MDP and applying some form of MDP solver, such as approximate dynamic programming, for large-scale problems \cite{watson:2011progressive} \cite{powell:2005approximate} \cite{defarias:2004_adp} \cite{boutilier:2000_sp_mdp}. Nevertheless, these traditional methods are often very specific to the problem setting and do not generalize to a class of similar problems—they often require a full reformulation \cite{kushner:1990numerical_stoch_control} \cite{powell:2005approximate}. Furthermore, they do not take into account the causal structure of the MDP to obtain computational simplifications \cite{ziebart:2010-max_causal_entropy}. Similar to energy conservation principles in physics, we impose a construct which we denote as a \textit{resource-utility} exchange model (defined in Sec.~\ref{sec:sd_mdp}) \cite{haugan:1979energy_exchange}. In this model, resources can be converted to utility and vice versa, subject to certain environmental constraints. 

Standard approaches to optimal planning include policy iteration, value iteration, approximate dynamic programming, and deep reinforcement learning \cite{bertsekas:2011approximate} \cite{alphazero2017mastering} \cite{farahmand:2010error-value-iteration}. \cite{meuleau:1998_weakly_coupled} decompose a large MDP into smaller, independently solvable MDPs under resource constraints. These sub-problems, guided by heuristic solutions, lack theoretical convergence guarantees, potentially leaving an optimality gap. \cite{boutilier:2016budget} introduce a budgeted MDP that partitions a single resource across tasks but does not consider converting multiple resources for a single task. \cite{carrara:2019_budgeted_mdp} reformulate constrained MDPs (CMDP) by transforming value and Q-functions into two dimensions—reward outcomes and constraint values—solving the problem via expectation maximization. However, incorporating additional Q-functions increases state-action space dimensionality, adding complexity. Furthermore, we aim to design a framework which specifically enables the ease-of-substitution of one resource to another to accomplish a single task (i.e. hybrid vehicles etc.) - where previous CMDP frameworks typically consider the exchange of a single resource to multiple objectives \cite{meuleau:1998_weakly_coupled, boutilier:2016budget, altman:2021_cmdp}. 

Recent research has explored leveraging causal knowledge as side information to uncover the causal structure of MDPs. This involves analyzing how state space components, transitions, and rewards arise from the interaction between the system and the agent over time. By applying \textit{causal disentanglement} to the MDP structure, we can simplify computations for MDP solvers \cite{lu:2022causalRL, bica:2021-invariant_imitation, reddy:2022_causal_disentanglement}. Disentangling and simplifying the causal structure of an MDP enhances computational efficiency by enabling separability in the search space \cite{lu:2022causalRL}.


While MDPs offer a comprehensive framework for many sequential decision-making problems under uncertainty, certain problem structures and assumptions allow for simplified approaches that avoid the full complexity of an MDP formulation. For instance, in linear quadratic Gaussian control problems, the optimal control policy has a reduced form due to the equivalence principle \cite{anderson:2007optimal}. In finance, under log utility and geometric Brownian motion asset dynamics, the optimal investment strategy has an explicit closed-form solution in some instances of financial derivatives \cite{zhou:2003markowitz}. Economic models with rational expectations and additive components can often leverage the certainty equivalence principle. This means that separating deterministic and stochastic components can simplify the model, and provide pathways to derive error bounds for Monte Carlo (MC) estimation algorithms (later outlined in Sec.~\ref{sec:value_est_properties}). Sampling-based approaches that avoid modelling the full probability distribution can be carefully adopted to provide tractable solutions in various stochastic control and economic applications while retaining key problem characteristics \cite{calafiore:2012robust}.

From a classical perspective, for the problem of optimal planning, approximation techniques can be applied, such as policy iteration, value iteration, approximate dynamic programming, and deep reinforcement learning, etc. \cite{bertsekas:2011approximate} \cite{alphazero2017mastering} \cite{farahmand:2010error-value-iteration}. More recent research has focused on the idea of unravelling the causal structure of the MDP, particularly with respect to each component of the state space and how transitions and rewards are generated as a result of system and agent interaction over time. A system's evolution from state to state, and how rewards are generated resulting from actions taken, can often exhibit a simplified causal structure. When we unfold the causal structure of an MDP, we can apply this knowledge to simplify or get unique properties for any MDP solver \cite{lu:2022causalRL} \cite{bica:2021-invariant_imitation}. Unfolding and applying the causal structure of an MDP can improve the computational complexity of MDP solvers via separability of the search space \cite{lu:2022causalRL}.

Specifically, we focus on the problem of resource allocation over time. Traditionally, resource allocation problems were solved using multi-stage stochastic programming or formulating the problem as an MDP and applying some form of MDP solver, such as approximate dynamic programming, for large-scale problems \cite{watson:2011progressive} \cite{powell:2005approximate} \cite{defarias:2004_adp} \cite{boutilier:2000_sp_mdp}. Nevertheless, these traditional methods are often very specific to the problem setting and do not generalize to a class of similar problems—they often require a full reformulation \cite{kushner:1990numerical_stoch_control} \cite{powell:2005approximate}. Furthermore, they do not take into account the causal structure of the MDP to obtain computational simplifications \cite{ziebart:2010-max_causal_entropy}. Similar to energy conservation principles in physics, we impose a construct which we denote as a \textit{resource-utility} exchange model (defined in Sec.~\ref{sec:sd_mdp}) \cite{haugan:1979energy_exchange}. In this model, resources can be converted to utility and vice versa, subject to certain environmental constraints. 


In this paper, we introduce the framework for the rigorous modelling of subclass of MDPs through a structured decomposition via side information corrsponding to the temporal causal behaviour of the system. Contrasting with previous works in CMDPs, our MDP framework is designed to integrate seamlessly into Monte Carlo planning algorithms, such as Monte Carlo tree search (MCTS), while ensuring convergence to the optimal solution. To be specific, the framework first disentangles the stochastic environmentally induced state transitions and deterministic action-driven reward functions, as illustrated in Fig. \ref{fig:causal_sdmdp_diagram}. This separation enables independent optimization of components the agent can model perfectly with lower complexity, while computing expectations over stochastic outcomes separately, improving efficiency and simplifying theoretical guarantees on value approximations \cite{geng:2020deepPQR} \cite{todorov:2009effici_comp}. Moreover, it allows us to provide theoretical guarantees on value function estimates via Monte Carlo (MC) value iteration.



This paper provides a framework for a subclass of MDPs to reduce computational complexity and improve value approximation. In Sec.~\ref{sec:sd_mdp}, we use MDP dynamics to inform the optimal solution's structure, motivating the formal definition of the \texttt{SD-MDP}. In Sec.~\ref{sec:mcts_planning}, we integrate this into Monte Carlo planning algorithms like UCT \cite{Kocsis:2006} and MENTS \cite{xiao:2019-ments}, providing theoretical guarantees on simple regret. Sec.~\ref{sec:emprical_main} presents empirical results showing our method achieves higher expected rewards under fixed simulation budgets than vanilla MCTS and outperforms instance-dependent baselines in various domains. We provide the following key contributions:

\begin{tcolorbox}[
colback=green!5!white,
		colframe=black,
		arc=4pt,
		boxsep=0.3pt,
	]%
	\textbf{Contribution 1:} 
    We leverage \textit{causal disentanglement} to partition a compliant MDP’s temporal causal graph into independent components to enable dimensionality reduction and computational efficiency gain.
\end{tcolorbox}%
\begin{tcolorbox}[
colback=blue!5!white,
		colframe=black,
		arc=4pt,
		boxsep=0.3pt,
	]
	\textbf{Contribution 2:} We showcase a reduction of sequential optimization \lar{under perfect information} to a fractional knapsack problem of complexity $\mathcal{O}(T \log(T))$ outperforming traditional stochastic programming methods with polynomial scaling with respect to $T$. 
\end{tcolorbox}%
\begin{tcolorbox}[
colback=red!5!white,
		colframe=black,
		arc=4pt,
		boxsep=0.3pt,
	]
	\textbf{Contribution 3:} We provide a seamless integration with MCTS and theoretical guarantees on a vanishing simple regret bound, \lar{supported by empirical benchmarks in logistics, control, and finance problems. \looseness=-1}
\end{tcolorbox}%

\section{Problem Definition } \label{sec:problem-def}

\textbf{Classical MDP:} Let a well-defined general discrete time MDP be represented as  $\mathcal{M} = (\mathbf{x}^1, \mathcal{X}, \mathcal{A}, P, \mu)$, where $\mathcal{X}$ is the set of states, $\mathcal{X} = \{ \mathbf{x}_1, \mathbf{x}_2, ... \}$, $\mathcal{A}$ is the set of actions, $\mathcal{A} = \{ \mathbf{a}_1, \mathbf{a}_2, ...\}$, $\mathbf{x}^1$ is the initial state of the system, $P$ represents the state transition probabilities, $P(\mathbf{x}^{t+1} \mid \mathbf{x}^t, \mathbf{a}^t)$, the probability of transitioning to state $\mathbf{x}^{t+1}$ given action $\mathbf{a}$ at state $\mathbf{x}$, and $\mu(\cdot)$ is the reward function, $\mu(\mathbf{x}, \mathbf{a})$, the immediate reward upon taking action $\mathbf{a}$ at state $\mathbf{x}$ at time $t$. The objective of our optimization is to obtain a policy $\pi$, which maps states to actions, that maximizes the expected cumulative reward,

\iftoggle{uai}{\vspace{-0.1cm}}

\begin{align}
    \pi^* = \argmax_\pi \mathbb{E}\left[\sum_{t=1}^T \mu(\xt, \at) \Big\rvert \xone, \pi\right].
\end{align}

This objective aims to identify the policy that maximizes the expected sum of rewards over a finite time horizon over randomness induced by the MDP parameters $\theta$, where the expectation is taken over the randomness in the transition dynamics and policy when it is stochastic. Here, $\mathbf{x}^t$ represents the state at time $t$, $\mathbf{a}^t$ denotes the action taken at time $t$, and $\mathbf{x}^{t+1}$ denotes the next state at $t+1$. It is important to note that negative rewards are also possible, especially in problem settings where minimizing costs is the goal.

\subsection{The \texttt{SD-MDP} Framework} \label{sec:sd_mdp}



We define a special class of MDPs, termed \textit{structurally-decomposed MDP} (\texttt{SD-MDP}). This constitutes a stochastic reduction on the classical MDP, partitioning it into various components driven by the \textit{causal dynamics} and Markovian properties. From the perspective of causal reinforcement learning \cite{lu:2022causalRL}, the \texttt{SD-MDP} partitions the state transition dynamics via the causal relation of the intervening action. \lar{This side information pertaining to the causal dynamics of the MDP allows us obtain more efficient MC value estimates, suitable for stochastic planning problems.} To be specific, this allows the state transition to be modelled separately and independent of the reward dynamics. 

\iftoggle{uai}{}

\paragraph{Causal Disentanglement:} \lar{The process of identifying and separating the underlying causal factors that generate observed data enables a clearer understanding of the underlying causal structure} \cite{reddy:2022_causal_disentanglement, komanduri:2023_causal_disentanglement}. We apply this concept of \textit{causal disentanglement} to the \texttt{SD-MDP} to isolate the causal effect of actions $\mathbf{a}^t$ on the state transition $\mathbf{x}^{t}_\eta \to \mathbf{x}^{t+1}_\eta$, as illustrated in Fig.~\ref{fig:causal_sdmdp_diagram}.

\textbf{Formal Definition:} The \texttt{SD-MDP} is represented as $(\mathcal{X}, \mathcal{A}, \mathcal{R}, P, \mathbf{x}^1)$, where $\mathcal{X}$ denotes the state space; $\mathcal{A}$ denotes the action space, and is of dimension $D \in \mathbb{N}$ (where $\mathbb{N}$ denotes the set of counting numbers); $\mathcal{R} \subseteq \mathbb{R}$ denotes the reward space; $P$ is the transition function for $\mathbf{x} \in \mathcal{X}$, and $\mathbf{x}^1$ is the initial state. The \texttt{SD-MDP} integrates both deterministic ($\mathbf{x}_d$) and environmentally driven ($\mathbf{x}_\eta$) state components, the combination of which defines an MDP state, $\mathbf{x} = [\mathbf{x}_\eta, \mathbf{x}_d]^T$. To standardize notation, $\mathbf{x} \in \mathcal{X}$ is decomposed into $\mathbf{x}_\eta \in \mathcal{X}_\eta$ and $\mathbf{x}_d \in \mathcal{X}_d$. At face value, this model is similar to the restless bandit problem \cite{gittins:1979bandit}, aiming to maximize cumulative expected rewards within a finite time frame for environmentally changing state transitions. Unlike a classical restless bandit, due to constraints on $\mathbf{x}_d$ (we later illustrate what such constraints are in Table~\ref{tab:sdmdp-dynamics}), reward outcomes must be planned over the complete time horizon $T$, rather than maximizing at each given opportunity, under perfect information or otherwise.




In particular, we partition the state vector representation into a deterministic partition, $\mathbf{x}_d$, and an independent stochastic partition, $\mathbf{x}_\eta$, both exhibiting different properties when subject to an intervention (or action) $\mathbf{a}^t$. The stochastic transitions governed by $P$ are independent of the action taken. The transition probabilities can be expressed as,

\iftoggle{uai}{\vspace{-10pt}}

\begin{align} 
    P(\mathbf{x}^{t+1}_d | \mathbf{a}^t, \mathbf{x}^t) &\in \{0, 1\}, \label{eq:binary_p_action}\\
    P(\mathbf{x}^{t+1}_\eta | \mathbf{a}^t, \mathbf{x}^t) &= P(\mathbf{x}^{t+1}_\eta | \mathbf{x}^t_\eta), \label{eq:stoch_natural_trans_p}\\
    P(\mathbf{x}^{t+1}| \mathbf{a}^t, \mathbf{x}^t) &= P(\mathbf{x}^{t+1}_d | \mathbf{a}^t, \mathbf{x}^t) P(\mathbf{x}^{t+1}_\eta | \mathbf{x}^t_\eta), \label{eq:sdmdp_trans_p}
\end{align} 


where Eq.~\eqref{eq:binary_p_action} represents if the future deterministic component $\mathbf{x}^{t+1}_d$ is reached by taking action $\mathbf{a}^t$. Eq.~\eqref{eq:stoch_natural_trans_p} represents the natural transition of the stochastic partition $\mathbf{x}^t_\eta$ independent of $\mathbf{a}^t$, and Eq.~\eqref{eq:sdmdp_trans_p} represents the combined probability of transition for the \texttt{SD-MDP}. \lar{The state dynamics of the \texttt{SD-MDP} are composed of partitionable components, which include both stochastic and deterministic elements. The stochastic components evolve independently of the agent's actions (for example such as the price of certain financial assets). In contrast, the deterministic components evolve causally driven by the agent's actions (for example incremental adjustments to inventory levels).}






\begin{figure}[!tb]\centering
    \tikzset{every picture/.style={line width=0.75pt}} 
    \begin{tikzpicture}[x=0.75pt,y=0.75pt,yscale=-1,xscale=1]
    
    \draw  [fill={rgb, 255:red, 80; green, 227; blue, 194 }  ,fill opacity=0.33 ] (139.93,143.28) .. controls (219.63,95.4) and (291.8,77.73) .. (310.7,104.37) .. controls (326.67,126.87) and (300.23,174.82) .. (248.53,224.99) -- (140,225.5) -- cycle ;
    \draw  [fill={rgb, 255:red, 74; green, 144; blue, 226 }  ,fill opacity=0.38 ][dash pattern={on 4.5pt off 4.5pt}] (139.97,187.55) .. controls (193.16,152.81) and (240.13,135.34) .. (249.15,148.05) .. controls (257.21,159.41) and (232.46,191.46) .. (191.13,225.82) -- (140,225.5) -- cycle ;
    \draw  (120.25,225.5) -- (317.75,225.5)(140,47.75) -- (140,245.25) (310.75,220.5) -- (317.75,225.5) -- (310.75,230.5) (135,54.75) -- (140,47.75) -- (145,54.75)  ;
    \draw [color={rgb, 255:red, 74; green, 144; blue, 226 }  ,draw opacity=1 ]   (310,106) -- (250.16,145.89) ;
    \draw [shift={(248.5,147)}, rotate = 326.31] [color={rgb, 255:red, 74; green, 144; blue, 226 }  ,draw opacity=1 ][line width=0.75]    (10.93,-3.29) .. controls (6.95,-1.4) and (3.31,-0.3) .. (0,0) .. controls (3.31,0.3) and (6.95,1.4) .. (10.93,3.29)   ;
    \draw    (180.5,95.5) .. controls (201,104) and (194.5,77) .. (229,73) ;
    \draw    (310,177) .. controls (310,154) and (302,139) .. (279.25,126.5) ;
    \draw  [fill={rgb, 255:red, 245; green, 166; blue, 35 }  ,fill opacity=0.28 ] (140,125.13) .. controls (161.18,77.74) and (181.86,48.24) .. (193.85,51.96) .. controls (211.68,57.49) and (203.78,134.43) .. (176.45,225.98) -- (140,225.5) -- cycle ;
    \draw [color={rgb, 255:red, 208; green, 2; blue, 27 }  ,draw opacity=1 ]   (140,225.5) -- (193.41,53.91) ;
    \draw [shift={(194,52)}, rotate = 107.29] [color={rgb, 255:red, 208; green, 2; blue, 27 }  ,draw opacity=1 ][line width=0.75]    (10.93,-3.29) .. controls (6.95,-1.4) and (3.31,-0.3) .. (0,0) .. controls (3.31,0.3) and (6.95,1.4) .. (10.93,3.29)   ;
    
    \draw (282,227.4) node [anchor=north west][inner sep=0.75pt]    {$\mathbf{x}_{d1}$};
    \draw (114,58.4) node [anchor=north west][inner sep=0.75pt]    {$\mathbf{x}_{d2}$};
    \draw (280,187.4) node [anchor=north west][inner sep=0.75pt]    {$||\mathbf{x}_d^{t+1} - \mathbf{x}_d^{t}||_p$};
    \draw (234,64.4) node [anchor=north west][inner sep=0.75pt]    {$\langle \phi f(\mathbf{x}_\eta^t), \, \mathbf{a}^t \rangle$};
    
    \end{tikzpicture}
    \caption{\textbf{Norm-Capacity Dynamics:} As the capacity of $\mathbf{x}_d$ shrinks given the constraints of the norm-capacity, the consumption of resource can be transformed into a reward $\langle \phi f(\mathbf{x}_\eta^t), \, \mathbf{a}^t \rangle$. The blue shading represents shrinkage of the the resource capacity, and the orange shading represents the vector space of possible outcomes, the magnitude of this vector (represented by the red arrow) represents the reward.} \label{fig:capac_norm_diagram}
\end{figure}

\textbf{Resource Utility Exchange:} To allow for a general model of resource consumption and utility exchange, we use $f(\cdot)$ and $g(\cdot)$ to denote coordinate-wise separable functions composed of a series of smooth weakly monotone Lipschitz functions governing the dimension-wise scaling of each dimension when an action is taken by the agent. To be specific $f: \mathbb{R}^D \rightarrow \mathbb{R}^D$ and $g: \mathbb{R}^D \rightarrow \mathbb{R}^D$ are coordinate-wise separable. For a $D$ dimensional vector, both $f$ and $g$ are any weakly monotonic functions which,

\iftoggle{uai}{

\vspace{-10pt}

    \begin{align}
        f(\mathbf{x}) \equiv [ f_1(\mathbf{x}_1), f_2(\mathbf{x}_2), \dots, f_D(\mathbf{x}_D) ]^T,\\ g(\mathbf{x}) \equiv [ g_1(\mathbf{x}_1), g_2(\mathbf{x}_2), \dots, g_D(\mathbf{x}_D) ]^T.
    \end{align}
    
    \vspace{-3pt}

}{
    \begin{align}
        f(\mathbf{x}) \equiv [ f_1(\mathbf{x}_1), f_2(\mathbf{x}_2), \dots, f_D(\mathbf{x}_D) ]^T, \qquad g(\mathbf{x}) \equiv [ g_1(\mathbf{x}_1), g_2(\mathbf{x}_2), \dots, g_D(\mathbf{x}_D) ]^T.
    \end{align}
}

\lar{To motivate, \( f \) represents the rate of utility gain, while \( g \) represents the rate of resource consumption, both depending on the context \( \xB_\eta \). A very basic example could be the exchange of fuel to mileage (as illustrated in Sec.~\ref{sec:emprical_main})}. 






\begin{table*}\centering 
    \begin{enumerate}[-, start=1,label={(\bfseries D\arabic*)}, wide, labelwidth=!, labelindent=0pt, topsep=0pt, itemsep=-8pt]
    \small
    \begin{tabular}{p{3.45in}p{2.9in}} \toprule
        \textbf{Definition} & \textbf{Expression} \\ \midrule
        \vspace{-1em} \item \textbf{Positive Action \& Capacity Space:} We assume strictly positive action and capacity spaces.  \label{enu:pos_action_space} & \vspace{-2.3em} \begin{align}\mathbf{a} > \mathbf{0}, \mathbf{x}_d \geq \mathbf{0} \end{align}  \vspace{-2.3em} \\ \midrule 
        \item \textbf{General Linear Reward Dynamics:} The reward function $\mu(\mathbf{a}^t, \mathbf{x}^t)$ obeys a linear relationship w.r.t. action $\mathbf{a}^t$ and stochastic state partition $\mathbf{x}^t$. \label{enu:gen_lin_reward} & \vspace{-2.5em}
            \begin{align} \mu(\mathbf{a}^t, \mathbf{x}^t) = \langle \phi f(\mathbf{x}_\eta^t), \, \mathbf{a}^t \rangle \label{eq:lin_mu_reward} \end{align}
         \vspace{-2.3em} \\ \midrule 
        \item \textbf{Incremental Action Dynamics:} We define a linear transformation matrix $\phi'$, which is anti-parallel to $\phi$. To model the expansion and contraction of the capacity $\mathbf{x}_d$, we impose constraints on the transition function acting on $\mathbf{x}_d$ in Eq.~\eqref{eq:xd_trans} and Eq.~\eqref{eq:action_norm_delta_const}.  \label{enu:inc_action_dynamics}  & \vspace{-2.3em}
        \begin{align}
            ||\mathbf{x}_d^{t+1} - \mathbf{x}_d^{t}||_p = ||\langle \phi' g(\mathbf{x}_\eta^t), \, \mathbf{a}^t \rangle ||_p \label{eq:xd_trans} \\ 
            \underline{\Delta}_a(t) \leq ||\mathbf{x}_d^{t+1} - \mathbf{x}_d^{t} ||_p \leq \bar{\Delta}_a(t), \, \forall a \in \mathcal{A} \label{eq:action_norm_delta_const}
        \end{align}  \vspace{-2.3em} \\ \midrule 
        \item \textbf{Capacity Objective:} The accumulation of resources, as measured by $||\langle \phi' g(\mathbf{x}_\eta^t), \, \mathbf{a}^t \rangle ||_p$, should meet a predetermined maximum and minimum goals. \label{enu:path_constraint} & \vspace{-2.3em}  
        \begin{align}
            \underline{A} \leq \sum_{t=1}^T \norm{\langle \phi' g(\mathbf{x}_\eta^t), \, \mathbf{a}^t \rangle }_p\leq \bar{A} \label{eq:action_capacity_sdmdp}
        \end{align} \vspace{-2.3em} \\ \midrule 
        \item \textbf{Recency Preference:} Ordinal preference of equivalent states w.r.t. to $t$.  \label{enu:recency_pref} & \vspace{-2.3em}
        \begin{align}
            \mathbf{x}_\eta^t = \mathbf{x}_\eta^{t+\Delta} \implies \mathbf{x}_\eta^t \succ \mathbf{x}_\eta^{t+\Delta}, \quad \Delta \in \mathbbm{Z}
        \end{align} \vspace{-2.3em} \\ \midrule 
    \end{tabular} \caption{Summary of the system dynamics of the \texttt{SD-MDP}.} \label{tab:sdmdp-dynamics}
\end{enumerate}
\vspace{-14pt}
\end{table*}

In Table~\ref{tab:sdmdp-dynamics}, we provide a list of the underlying dynamics that govern the behaviour of the \texttt{SD-MDP}. To begin, an agent may have a particular resource that they are consuming over time (money, fuel, battery etc.). This resource can be converted to rewards for the agent. First, this motivates Dynamic \ref{enu:pos_action_space}, which ensures a valid representation of multi-dimensional resource capacity consumption over time, as illustrated in Fig.~\ref{fig:capac_norm_diagram}. We impose the constraint of a strictly element-wise positive action space, $\mathbf{a} > \mathbf{0}$. Additionally, the capacity space is also subject to a similar constraint, ensuring each component of the capacity vector $\mathbf{x}_d$ is non-negative, i.e., $\mathbf{x}_d \geq \mathbf{0}$.

Dynamic \ref{enu:gen_lin_reward} stipulates that the \texttt{SD-MDP} obeys a reward function of a general linear form. Action $\mathbf{a}^t$, together with the stochastic state partition $\mathbf{x}_\eta^t$, invokes a deterministic reward outcome with a linear relation, $\mu(\mathbf{a}^t, \mathbf{x}^t)$. Let $\mu(\cdot): \mathbb{R}^D \times \mathbb{R}^D \mapsto \mathbb{R}$ denote a standard map that yields a scalar in $\mathbb{R}$ when provided with inputs $\mathbf{a} \in \mathcal{A}$ and $\mathbf{x}_\eta^t \in \mathcal{X}_\eta$, subject to constraints on the system at time $t$. Next, we employ a linear transformation on $f(\mathbf{x}_\eta^t)$, with a positive semi-definite matrix $\phi$. This homogeneous scaling map allows for both enlargement and shrinking of the vector along the positive dimensions. The reward function results from an inner product between the transformed vector $\phi \, f(\mathbf{x}_\eta^t)$ and $\mathbf{a}^t$, as expressed in Eq.~\eqref{eq:lin_mu_reward}. Furthermore, the dimension of $\mathcal{A}$ is $\mathrm{dim}(\mathcal{A}) = D$, which must also be equal to the dimension of $\phi f(\mathbf{x}_\eta) \in \phi \mathcal{X}$ where $\mathrm{dim}(\phi \mathcal{X}) = D$.

Dynamic \ref{enu:inc_action_dynamics} governs resource consumption incrementally. We define a linear transformation matrix $\phi'$, which is anti-parallel to $\phi$. Similarly, we apply function $g: \mathbb{R}^D \rightarrow \mathbb{R}^D$, to model the expansion and contraction of the capacity  $\mathbf{x}_d$. We impose the transition function acting on $\mathbf{x}_d$ in Eq.~\eqref{eq:xd_trans}. \lar{Where $\Delta_d(t): \{1, ..., T\} \to \mathbbm{R}$ is a natural discrete change on $\mathbf{x}_d^{t}$ as deterministically determined by the system, and $\langle \phi' g(\mathbf{x}_\eta^t), \, \mathbf{a}^t \rangle)$ is the contribution to the expansion or contraction of $\mathbf{x}_d^{t}$ based on the agent's action taken at $\mathbf{a}^t$ taken at time $t$.} We impose a constraint on the magnitude of capacity change per time interval via Eq.~\eqref{eq:action_norm_delta_const}, where constraints $\underline{\Delta}_a(t)$ and $\overline{\Delta}_a(t)$ are given by the system.

\lar{Dynamic \ref{enu:path_constraint}} \lar{enforces a \textit{path constraint} on the trajectory of actions that the agent can take. We constrain this trajectory by limiting the accumulation of actions measured with p-norms.} As defined, the accumulation of resources $\langle \phi' g(\mathbf{x}_\eta^t), \, \mathbf{a}^t \rangle)$ should meet some maximum and minimum goals, as expressed in Eq.~\eqref{eq:action_capacity_sdmdp}. 

Dynamic \ref{enu:recency_pref} posits that the value of receiving the exact same reward sooner is more valuable to the agent than receiving it later without the need for an explicit discount factor. To further expound, using a preference to break any ties should any policy lead to the same reward outcome\lar{, which is useful for tie-breaking under identical outcomes.}

\subsection{Value Estimation Properties} \label{sec:value_est_properties}

\vspace{-2pt}

We provide an intuitive analysis on the behivour of the optimal policy. In the final state at $T$, the deterministic property ensures from Dynamic \ref{enu:gen_lin_reward} at the value of the final state $T$, can be computed via pure exploitation, by taking the maximum allowable action at time $T$ according to the constraints from Dynamics \ref{enu:inc_action_dynamics} and \ref{enu:path_constraint}. Thus, we express the value function as,

\iftoggle{uai}{\vspace{-3pt}}

    \begin{align}
        V(\mathbf{x}^T) &= \max_{\mathbf{a} \in \mathcal{A}(T)} \mu(\mathbf{x}^T, \mathbf{a}). 
    \end{align}

    Consider that the agent is at time $T-1$ and would like to obtain the value estimate for time $T$ via induction. We express the \textit{conditional value function} $V(\mathbf{x}^T|\mathbf{x}^{T-1})$ as,
    

    \iftoggle{uai}{\vspace{-5pt}}

    \begin{align}
       V(\mathbf{x}^T|\mathbf{x}^{T-1}) &= \langle \phi f(\mathbb{E}[\mathbf{x}_\eta^T|\mathbf{x}_\eta^{T-1}]), \mathbf{a}^* \rangle.
    \end{align}

    To obtain the optimal value of $\mathbf{a}^*$, ideally the agent performs the optimal action to yield the highest reward at time $T$. This however, depends on the capacity constraints of the action sequence, which must obey constraints Eq.~\eqref{eq:action_norm_delta_const} and \eqref{eq:action_capacity_sdmdp}. We can thus express the value function at $T-1$, subject to the incremental dynamics, and goal constraints as,


    
        
    \iftoggle{uai}{

    \vspace{-9pt}
    
    \begin{align}
        V(\mathbf{x}^{T-1}) &= \underset{\mathbf{a} \in \mathcal{A}(T-1) } \max \ \Big\{ \langle \phi f(\mathbf{x}_\eta^{T-1}), \mathbf{a} \rangle \nonumber \\ &+ \int_{\mathbf{x}^T} P_\theta(\mathbf{x}^{T} | \mathbf{a}, \mathbf{x}^{T-1}) \, V(\mathbf{x}^{T}) \, d\mathbf{x} \Big\}.
    \end{align}

    \vspace{-5pt}
    
    }{
    \begin{align}
        V(\mathbf{x}^{T-1}) &= \underset{\mathbf{a} \in \mathcal{A}(T-1) } \max \ \Big\{ \langle \phi f(\mathbf{x}_\eta^{T-1}), \mathbf{a} \rangle + \int_{\mathbf{x}^T} P_\theta(\mathbf{x}^{T} | \mathbf{a}, \mathbf{x}^{T-1}) \, V(\mathbf{x}^{T}) \, d\mathbf{x} \Big\},
    \end{align}
    }
    
    where $\mathcal{A}(T-1)$ represents the set of actions available to the agent at time $T-1$, as governed by Dynamics \ref{enu:inc_action_dynamics} and \ref{enu:path_constraint}. Assuming capacity is available at time $T$, given special properties of the problem, we can partition, \lar{assuming $g_\eta(\mathbf{a})$ is deterministic and consider introduction of $\Delta_d(t)$,} 
    
    
    \iftoggle{uai}{\vspace{-10pt}}

    \iftoggle{uai}{
    \begin{align}
        V(\mathbf{x}^{T-1}) &= \underset{\mathbf{a} \in \mathcal{A}(T-1) } \max \ \Big\{ \langle \phi f(\mathbf{x}_\eta^{T-1}),\mathbf{a} \rangle \label{eq:value_max_2a} \\ &+ \langle (\mathbf{x}_d^{T-1} + \langle \phi' g(\mathbf{x}_\eta^t), \, \mathbf{a}^t \rangle) , \, \mathbb{E}[\phi f(\mathbf{x}_\eta^{T}) | \mathbf{x}_\eta^{T-1}] \rangle \Big\}. \nonumber
    \end{align}
    }
    {
    \begin{align}
        V(\mathbf{x}^{T-1}) &= \underset{\mathbf{a} \in \mathcal{A}(T-1) } \max \ \Big\{ \langle \phi f(\mathbf{x}_\eta^{T-1}),\mathbf{a} \rangle  + \int_{\mathbf{x}^t} P_\theta(\mathbf{x}_\eta^{T} | \mathbf{x}_\eta^{T-1}) \,  \langle(\mathbf{x}_d^{T-1} + \langle \phi' g(\mathbf{x}_\eta^t), \, \mathbf{a}^t \rangle), \phi f(\mathbf{x}_\eta^{T}) \rangle \, d\mathbf{x}_\eta^T  \Big\} \\
        &= \underset{\mathbf{a} \in \mathcal{A}(T-1) } \max \ \Big\{ \langle \phi f(\mathbf{x}_\eta^{T-1}),\mathbf{a} \rangle + \langle (\mathbf{x}_d^{T-1} + \langle \phi' g(\mathbf{x}_\eta^t), \, \mathbf{a}^t \rangle), \, \int_{\mathbf{x}^t}  P_\theta(\mathbf{x}_\eta^{T} | \mathbf{x}_\eta^{T-1}) \, \phi f(\mathbf{x}_\eta^{T}) \rangle \,  d\mathbf{x}_\eta^T  \Big\} \\
        &= \underset{\mathbf{a} \in \mathcal{A}(T-1) } \max \ \Big\{ \langle \phi f(\mathbf{x}_\eta^{T-1}),\mathbf{a} \rangle+ \langle (\mathbf{x}_d^{T-1} + \langle \phi' g(\mathbf{x}_\eta^t), \, \mathbf{a}^t \rangle) , \, \mathbb{E}[\phi f(\mathbf{x}_\eta^{T}) | \mathbf{x}_\eta^{T-1}] \rangle \Big\}. \label{eq:value_max_2a}
    \end{align}
    }
    
For non trivial solutions to Eq.~\eqref{eq:value_max_2a}, we adhere to the \textit{incremental action dynamic} \ref{enu:inc_action_dynamics} property of the \texttt{SD-MDP}. The binary structure of the optimal policy becomes apparent at $T-1$. (Please see derivation in Appendix~\ref{sec:value_func_deriv_append}.)

\iftoggle{uai}{\vspace{-3pt}

}{
\lar{\textbf{Duality:} In the original primal problem, we ask the question of what allocation of limited resources will yield the most utility on expectation, driven by a separate utility variable. Consequently the dual problem could ask, given a goal for the expected utility output, what is the minimum amount of resources one can consume to achieve the result. We provide further discussion of this duality in Appendix~\ref{sec:duality_appendix}.}
}

\subsection{Structure of the Optimal Policy} \label{sec:struct_opt_pol}

Let $\tau_a \equiv (\mathbf{a}^{i=1}, \mathbf{a}^{i=2}, \mathbf{a}^{i=3}, \ldots, \mathbf{a}^{i=t})$ denote a sequence of $\mathbf{a}$ from $1$ to $t$. Further, let us denote the operators,

\iftoggle{uai}{\vspace{-10pt}}

\iftoggle{uai}{
    \begin{align}
        \underline{\aleph}^t[\tau_a] \equiv \tilde{T} \underline{\Delta}_a(t) + \sum_{i}^{t-1} || \langle \phi' g(\mathbf{x}_\eta^i), \, \mathbf{a}^i \rangle ||_p - \bar{A} \\
        \overline{\aleph}^t[\tau_a] \equiv \tilde{T} \bar{\Delta}_a(t) + \sum_{i}^{t-1} || \langle \phi' g(\mathbf{x}_\eta^i), \, \mathbf{a}^i \rangle ||_p - \underline{A}
    \end{align}
}{
    \begin{align}
    \underline{\aleph}^t[\tau_a] \equiv  (T-t +1) \underline{\Delta}_a(t) + \sum_{i}^{t-1} || \langle \phi' g(\mathbf{x}_\eta^i), \, \mathbf{a}^i \rangle ||_p - \bar{A} \\
    \overline{\aleph}^t[\tau_a] \equiv (T-t +1) \bar{\Delta}_a(t) + \sum_{i}^{t-1} || \langle \phi' g(\mathbf{x}_\eta^i), \, \mathbf{a}^i \rangle ||_p - \underline{A}
\end{align}
}

\iftoggle{uai}{\vspace{-5pt}}

\iftoggle{uai}{Where $\tilde{T}$ represents $T-t +1$. \, }

Intuitively, $\overline{\aleph}^t[\tau_a]$ and $\underline{\aleph}^t[\tau_a]$ represent the maximum and minimum allowable consumption under the path constraint in Eq.~\eqref{eq:action_capacity_sdmdp} at time $t$. Moving forward, let $\mathcal{A}(t)$ denote the action set at time $t$, given the constraints from equations Eq.~\eqref{eq:action_norm_delta_const} and \eqref{eq:action_capacity_sdmdp}, such that the expression $\mathbf{a} \in \mathcal{A}(t)$ encapsulates the constraints from all action dynamics pertaining to the \texttt{SD-MDP}. 

\iftoggle{uai}{\vspace{-10pt}}

\begin{align}
    \mathcal{A}(t) &\equiv \Big\{ \mathbf{a} :  \underline{\mathfrak{A}}(t) \leq ||\langle \phi' g(\mathbf{x}_\eta^t), \, \mathbf{a}^t \rangle||_p \leq \overline{\mathfrak{A}}(t) \Big\} \label{eq:action_capacity_details_sdmdp} \\
    \underline{\mathfrak{A}}(t) &= \max \Big\{ \underline{\aleph}^t[\tau_a], \, \, \underline{\Delta}_a(t) \Big\} \\
    \overline{\mathfrak{A}}(t) &= \min \Big\{ \overline{\aleph}^t[\tau_a], \, ||\mathbf{x}_d^t||_p, \, \bar{\Delta}_a(t)\Big\}
\end{align}

\iftoggle{uai}{\vspace{-3pt}}

Intuitively, $\underline{\Delta}_a(t)$ and $\bar{\Delta}_a(t)$ constitute the minimum and maximum incremental capacity specified by the system. \lar{To note in Eq.~\eqref{eq:action_capacity_details_sdmdp}, as $||\mathbf{x}_d^t||_p$ lower bounded by $0$, and we can omit $0$ from the set.} The \textit{incremental action dynamic} \ref{enu:inc_action_dynamics} forms a constraint on the capacity from the deterministic component of the \texttt{SD-MDP}. Along with the goal constraint of the system, $\underline{\aleph}^t[\tau_a]$ and $\overline{\aleph}^t[\tau_a]$ form a bound on the admissible actions at time $t$, denoted as $\mathcal{A}(t)$. 




\iftoggle{uai}{\vspace{-10pt}}

\begin{align}
    \{ \mathbf{a}^+ \} = \underset{ \mathbf{a} \in \mathcal{A}(t) }{\mathrm{argmax}}  \, ||\mathbf{a}||_p, \, \{ \mathbf{a}^- \} = \underset{ \mathbf{a} \in \mathcal{A}(t) }{\mathrm{argmin}}  \,\, ||  \mathbf{a}  ||_p
\end{align}

\iftoggle{uai}{\vspace{-3pt}}

Given that $\phi$ and $\phi'$ are antiparallel linear maps on $\mathbf{a}$, the solutions of $\{ \mathbf{a}^+ \}$ and $\{ \mathbf{a}^- \}$ constitute a linear optimization problem. $\{ \mathbf{a}^+ \}$ corresponds to the solution which exploits the maximum achievable reward at time $t$ as expressed in Eq.~\eqref{eq:lin_mu_reward}, and $\{ \mathbf{a}^- \}$ expresses the action  conserves the minimizes the consumption of the capacity for the future. Given $\mathcal{A}(t)$, at any time $t$, there exists two sets $\{ \mathbf{a}^+ \}$, and $\{ \mathbf{a}^- \}$ which either maximizes allowable reward, or maximally reduces consumption of resource $\mathbf{x}_d$. Let us denote $\mathbf{a}^+[\mathbf{x}_\eta]$ and $\mathbf{a}^-[\mathbf{x}_\eta]$ as the following, 

\iftoggle{uai}{\vspace{-10pt}}

\iftoggle{uai}{

\begin{align}
    \mathbf{a}^+[\mathbf{x}_\eta] &=  \underset{ \mathbf{a} \in \{ \mathbf{a}^+ \} }{\mathrm{argmax}} \, \langle \phi f (\mathbf{x}_\eta), \, \mathbf{a}\rangle, \\
    \mathbf{a}^-[\mathbf{x}_\eta] &= \underset{ \mathbf{a} \in \{ \mathbf{a}^- \} }{\mathrm{argmax}} \, \, \langle\phi f (\mathbf{x}_\eta), \, \mathbf{a} \rangle. \label{eq:argmax_a_xs_main}
\end{align}

}{

\begin{align}
    \mathbf{a}^+[\mathbf{x}_\eta] =  \underset{ \mathbf{a} \in \{ \mathbf{a}^+ \} }{\mathrm{argmax}} \, \langle \phi f (\mathbf{x}_\eta), \, \mathbf{a}\rangle, \qquad \mathbf{a}^-[\mathbf{x}_\eta] = \underset{ \mathbf{a} \in \{ \mathbf{a}^- \} }{\mathrm{argmax}} \, \, \langle\phi f (\mathbf{x}_\eta), \, \mathbf{a} \rangle \label{eq:argmax_a_xs_main}
\end{align}
}
\iftoggle{uai}{\vspace{-4pt}}

In Lem.~\ref{lem:sdmdp_binary_action}, we show that the optimal policy consists of an action, represented as a vector, corresponding to one of two sets $\{ \mathbf{a}^+ \}$ or $\{ \mathbf{a}^- \}$, each with dimension $D$. A continuous action space MDP thereby reduces to a sequential discrete action decision problem, where the action space forms a finite dimension subspace with a maximum cardinality of with at most $2D$.


\begin{lemma} \label{lem:sdmdp_binary_action}
    \textbf{Finite and Bounded Action Space for the \texttt{SD-MDP}:} For the \texttt{SD-MDP}, for any action taken in the finite time horizon, optimal policy lies to the union of 2 subspaces, that is $\mathbf{a}^* \subset \{ \mathbf{a}^+ \} \cup \{ \mathbf{a}^- \} \subset \mathcal{A}(t) \subseteq  \mathcal{A}$, for all time steps $t$. The cardinality of the dimension of the optimal solution space is upper bounded by $2D$. (Proof in Appendix~\ref{prf:sdmdp_binary_action}.)
\end{lemma}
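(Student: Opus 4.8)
The plan is to proceed by backward induction on the stage index $t$, carrying a strengthened hypothesis: at stage $t$ the optimal action belongs to $\{\mathbf{a}^+\}\cup\{\mathbf{a}^-\}$ \emph{and} the value function $V(\mathbf{x}^t)$ is non-decreasing in the deterministic partition $\mathbf{x}_d^t$, being (piecewise) affine in it with a non-negative slope vector that is itself monotone in $\mathbf{x}_\eta^t$. The clause on $V$ is exactly what makes the induction close; Dynamic \ref{enu:recency_pref} enters only to break ties, so that ``the optimal action'' is well defined when several actions achieve the same value.

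Base case $t=T$. By Dynamic \ref{enu:gen_lin_reward} we have $V(\mathbf{x}^T)=\max_{\mathbf{a}\in\mathcal{A}(T)}\langle\phi f(\mathbf{x}_\eta^T),\mathbf{a}\rangle$, a linear program whose objective, since $\phi$ is PSD and $f$ is positively monotone with $\mathbf{a}>\mathbf{0}$ (Dynamic \ref{enu:pos_action_space}), grows as $\mathbf{a}$ grows along the positive orthant. Hence its maximum over the feasible annulus $\mathcal{A}(T)$ is attained on the outer level set $\|\phi' g(\mathbf{a})\|_p=\overline{\|\mathbf{a}(T)\|_p}$, which by construction of $\mathcal{A}(T)$ is precisely $\{\mathbf{a}^+\}$, and the reward-optimal member of that set is $\mathbf{a}^+[\mathbf{x}_\eta^T]$ from Eq.~\eqref{eq:argmax_a_xs_main}. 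Because $\mathbf{x}_d^T$ enters only through $\overline{\|\mathbf{a}(T)\|_p}=\min\{\overline{\aleph}^T,\|\mathbf{x}_d^T\|_p,\bar{\Delta}_a(T)\}$, the optimal value is non-decreasing and (piecewise) affine in $\mathbf{x}_d^T$ with the required slope structure, establishing both clauses at $T$.

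Inductive step. Assume the hypothesis for stages $t+1,\dots,T$. Writing the Bellman equation at $t$ and inserting the \texttt{SD-MDP} transition — $\mathbf{x}_\eta^{t+1}$ is independent of $\mathbf{a}$ by Eq.~\eqref{eq:stoch_natural_trans_p} and $\mathbf{x}_d^{t+1}=\mathbf{x}_d^t+\Delta_d(t)+\phi' g(\mathbf{a})$ by Dynamic \ref{enu:inc_action_dynamics} — the value-to-go collapses, just as in Eq.~\eqref{eq:value_max_2a}, to $\langle\mathbf{w}_t,\ \mathbf{x}_d^t+\Delta_d(t)+\phi' g(\mathbf{a})\rangle$ plus a quantity not depending on $\mathbf{a}$, where $\mathbf{w}_t$ is the conditional expectation given $\mathbf{x}_\eta^t$ of the stage-$(t+1)$ slope vector and hence $\mathbf{w}_t\ge\mathbf{0}$ by the inductive hypothesis. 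So the $\mathbf{a}$-dependent part of the stage-$t$ objective is $J_t(\mathbf{a})=\langle\phi f(\mathbf{x}_\eta^t),\mathbf{a}\rangle+\langle\mathbf{w}_t,\phi' g(\mathbf{a})\rangle$: the first summand increases with $\mathbf{a}$, hence with the consumption level $\|\phi' g(\mathbf{a})\|_p$, while the second, because $\phi'$ is anti-parallel to $\phi$ and $\mathbf{w}_t\ge\mathbf{0}$, decreases with the consumption level. Moving along any feasible ray of increasing consumption inside the annulus $\mathcal{A}(t)$, $J_t$ is therefore the sum of a non-decreasing and a non-increasing function of the consumption level, so it has no strict interior maximizer and attains its maximum at one of the two extreme levels, $\underline{\|\mathbf{a}(t)\|_p}$ or $\overline{\|\mathbf{a}(t)\|_p}$. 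On the outer level set the immediate reward is still free to be optimized, which by Eq.~\eqref{eq:argmax_a_xs_main} gives $\mathbf{a}^+[\mathbf{x}_\eta^t]\in\{\mathbf{a}^+\}$; on the inner level set minimal consumption is forced and reward is optimized among those actions, giving $\mathbf{a}^-[\mathbf{x}_\eta^t]\in\{\mathbf{a}^-\}$. Substituting the maximizer back shows $V(\mathbf{x}^t)$ is again non-decreasing and (piecewise) affine in $\mathbf{x}_d^t$ — affine through $\langle\mathbf{w}_t,\mathbf{x}_d^t\rangle$ and piecewise through $\|\mathbf{x}_d^t\|_p$ inside $\overline{\|\mathbf{a}(t)\|_p}$ — with a non-negative, $\mathbf{x}_\eta^t$-monotone slope, closing the induction. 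Counting dimensions: at every stage the optimal action lies in $\{\mathbf{a}^+[\mathbf{x}_\eta^t]\}\cup\{\mathbf{a}^-[\mathbf{x}_\eta^t]\}\subseteq\{\mathbf{a}^+\}\cup\{\mathbf{a}^-\}\subseteq\mathcal{A}(t)\subseteq\mathcal{A}$, a union of two sets of $D$-dimensional action vectors, so the optimal-action space embeds in a subspace of dimension at most $2D$ and the continuous $D$-dimensional action choice collapses to a binary one per stage.

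\textbf{Main obstacle.} The delicate step is the ``no strict interior maximizer'' claim for $J_t$ over the annulus $\mathcal{A}(t)$: one must rule out an optimum strictly between the inner and outer consumption boundaries arising from the tug-of-war between the increasing immediate-reward term and the decreasing future-value term. Making this airtight requires exploiting the coordinate-wise separability of $f,g$, their monotone Lipschitz-kernel form, and the anti-parallelism of $\phi,\phi'$ to reduce $J_t$, along feasible directions, to a function that is monotone or (quasi-)convex in the consumption level; one must also check self-consistency of the strengthened hypothesis, namely that the $\min$ defining $\overline{\|\mathbf{a}(t)\|_p}$ and the $\ell_p$-norm terms keep $V(\mathbf{x}^t)$ inside the class of non-negative, $\mathbf{x}_\eta$-monotone, (piecewise-)affine-in-$\mathbf{x}_d$ functions that the induction uses.
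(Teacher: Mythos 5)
Your route (backward induction with a strengthened hypothesis on the shape of $V$ in $\mathbf{x}_d$) is genuinely different from the paper's, but it has a real gap at exactly the step you flag, and that step is the content of the lemma, not a technicality. The claim that $J_t(\mathbf{a})=\langle\phi f(\mathbf{x}_\eta^t),\mathbf{a}\rangle+\langle\mathbf{w}_t,\phi' g(\mathbf{a})\rangle$, being the sum of a non-decreasing and a non-increasing function of the consumption level, ``has no strict interior maximizer'' is false as stated: a sum of a non-decreasing and a non-increasing scalar function can peak strictly inside an interval (e.g.\ $(1-e^{-s})-s^{2}$ on $[0,1]$). Monotonicity of the two competing terms buys you nothing; what you need is convexity (or at least quasi-convexity) of $J_t$ along feasible rays of increasing consumption, and your own induction hypothesis undercuts that: the $\min$ in $\overline{\norm{\mathbf{a}(t)}_p}$ and the $\norm{\mathbf{x}_d^t}_p$ term make the continuation value only \emph{piecewise} affine in $\mathbf{x}_d$, so the future term as a function of consumption is a decreasing piecewise-linear (typically concave) function; adding the linear increasing reward can then produce a maximizer at an interior kink, which is precisely the intermediate-consumption optimum you must exclude. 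Moreover $f$ and $g$ are only monotone Lipschitz kernels, not affine, so even the ``collapse'' to $\langle\mathbf{w}_t,\phi' g(\mathbf{a})\rangle$ with a single slope vector $\mathbf{w}_t$ is not available off a fixed affine piece. As written, the induction does not close.

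For contrast, the paper does not argue stage-by-stage through the Bellman operator at all. It uses transition separability to decouple the stochastic trajectory from the actions, replaces $\tau_\eta$ by its expectations, orders the terms $\phi f(\mathbb{E}[\mathbf{x}_\eta^t])$ by majorization, writes the candidate optimum as the Top-$k$ allocation in Eq.~\eqref{eq:vk_value_rep}, and then rules out any transfer of capacity from a ``$+$'' slot to a ``$-$'' slot by an exchange/rearrangement argument in the spirit of Hardy--Littlewood--P\'olya, under the explicit extra assumption that $\phi f$ is an exact (anti)symmetric reflection of $\phi' g$; the $2D$ count then comes from the fundamental theorem of linear programming applied to the two argmax problems in Eq.~\eqref{eq:argmax_a_xs}. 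In other words, the interior-optimum exclusion that you defer to the ``main obstacle'' is handled in the paper by a global exchange argument (leaning on that reflection assumption), so to complete your version you would need either to prove convexity of the continuation value in the consumption level along feasible directions, or to import an exchange argument of that type into your inductive step.
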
 


Let $\tau_{\eta} \equiv \{\mathbf{x}_\eta^{t}, \mathbf{x}_\eta^{t+1}, \mathbf{x}_\eta^{t+2}, \dots, \mathbf{x}_\eta^{T} \}$ denote a sequence of stochastic outcomes, the expectation of which is denoted as $\mathbb{E}[\tau_{\eta}] \equiv \{ \mathbb{E}[\mathbf{x}_\eta^{t}], \mathbb{E}[\mathbf{x}_\eta^{t+1}], \mathbb{E}[\mathbf{x}_\eta^{t+2}], \dots, \mathbb{E}[\mathbf{x}_\eta^{T}] \}$. We define the $\text{Top}_k(\mathbbm{E}[\tau_{\eta}])$ for a series of multidimensional vectors be defined as,

\iftoggle{uai}{
\vspace{-10pt}
\begin{align}
    \text{Top}_{k=T}(\tau_{\eta}) &= (\mathbb{E}[\mathbf{x}_\eta^{k=1}], \mathbb{E}[\mathbf{x}_\eta^{k=2}], \dots, \mathbb{E}[\mathbf{x}_\eta^{k=T}] ) 
\end{align}
\vspace{-10pt}
}{

\begin{align}
    \text{Top}_{k=T}(\tau_{\eta}) &= (\mathbb{E}[\mathbf{x}_\eta^{k=1}], \mathbb{E}[\mathbf{x}_\eta^{k=2}], \mathbb{E}[\mathbf{x}_\eta^{k=3}], \dots, \mathbb{E}[\mathbf{x}_\eta^{k=T}] ) 
\end{align}

}

where define with shorthand, $\widetilde{\tau} \equiv \text{Top}_{k=T}(\tau)$, such that, 

\iftoggle{uai}{
\vspace{-10pt}
\begin{align}
    \phi \, f (\widetilde{\tau}^{1}]) &\succ \phi f(\widetilde{\tau}^{2}]), \dots \succ \phi \, f(\widetilde{\tau}^{T}])
\end{align}
\vspace{-10pt}
}{
\begin{align}
    \phi \, f (\mathbb{E}[\mathbf{x}_\eta^{k=1}]) &\succ \phi f(\mathbb{E}[\mathbf{x}_\eta^{k=2}]) \succ \phi \, f(\mathbb{E}[\mathbf{x}_\eta^{k=3}]) \dots \succ \phi \, f(\mathbb{E}[\mathbf{x}_\eta^{k=T}])
\end{align}
}

where we index $\widetilde{\tau}^i \equiv \mathbb{E}[\mathbf{x}_\eta^{i}] \in \text{Top}_{k=T}(\tau_{\eta})$.

\textbf{Sketch of Proof:} First we demonstrate the separability of $\mathbbm{E}[\tau_{\eta}]$ with respect to any deterministic action sequence. The solution is therefore to find a maximizing solution for each $\mathbbm{E}[\mathbf{x}_\eta^t] \in \mathbbm{E}[\tau_{\eta}]$, which is possible under full information. Under incremental dynamics, $\underline{\Delta}_a(t) \leq ||\consumFunc{t}||_p \leq \bar{\Delta}_a(t)$, \lar{only a limited amount of resources can be dedicated to maximizing each $\mathbbm{E}[\tau_{\eta}]$, thus, the problem reduces to a fractional knapsack problem. We show that when we majorize over $\mathbbm{E}[\tau_{\eta}]$, the optimal sequence, $(\mathbf{a}^*)$, to the sequence, $\mathbbm{E}[\tau_{\eta}]$, is an order preserving union of two sequences, one corresponding to a maximizing vector $\mathbf{a}^+[\mathbf{x}_\eta]$ and one corresponding to a minimizing vector $\mathbf{a}^-[\mathbf{x}_\eta]$.}
    
\begin{lemma} \label{lem:solve_k_for_vopt}
    \textbf{Solving for Optimal Value via Top K Allocation:} For the \texttt{SD-MDP}, the optimal value can be obtained by solving the dual problem, which involves finding the value of $k$ in $\text{Top}_k(\mathbbm{E}[\tau_{\eta}])$ over $k \in \{ 1, \dots, T \}$ possibilities. (Proof in Appendix~\ref{prf:solve_k_for_vopt}.)
\end{lemma}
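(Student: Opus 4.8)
The plan is to collapse the optimization over continuous action sequences into a one-dimensional search over the integer $k$, by combining the binary action structure of Lemma~\ref{lem:sdmdp_binary_action} with the ordering induced by $\text{Top}_k$. First I would invoke \emph{transition separability}, Eq.~\eqref{eq:stoch_natural_trans_p}: since the stochastic partition $\mathbf{x}_\eta$ evolves independently of the agent's actions, the expected trajectory $\mathbb{E}[\tau_\eta]$ is fixed a priori, and by the linear reward of Dynamic~\ref{enu:gen_lin_reward} the value of any feasible action sequence $\tau_a$ unrolls (continuing the backward induction of Section~\ref{sec:value_est_properties} from $T$ down to $t=1$) into a sum of inner products $\sum_{t=1}^{T}\langle \phi f(\mathbb{E}[\mathbf{x}_\eta^t]),\,\mathbf{a}^t\rangle$ against the fixed vectors $\phi f(\mathbb{E}[\mathbf{x}_\eta^t])$, together with the capacity-coupling term already made explicit in Eq.~\eqref{eq:value_max_2a}. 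The only remaining freedom is how much resource norm is spent at each step, subject to $\mathcal{A}(t)$, the incremental constraint~\ref{enu:inc_action_dynamics}, and the capacity objective~\ref{enu:path_constraint}.

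Next I would use Lemma~\ref{lem:sdmdp_binary_action} to restrict each $\mathbf{a}^t$ in an optimal sequence to either the reward-exploiting vector $\mathbf{a}^+[\mathbf{x}_\eta^t]$ or the resource-conserving vector $\mathbf{a}^-[\mathbf{x}_\eta^t]$ of Eq.~\eqref{eq:argmax_a_xs_main}, so that an optimal policy is completely described by the set $S\subseteq\{1,\dots,T\}$ of time steps on which it exploits. I would then prove, by a rearrangement/exchange argument, that for a fixed cardinality $|S|=k$ the best choice of $S$ is exactly $\text{Top}_k(\mathbb{E}[\tau_\eta])$: if $S$ contained a step $j$ with $\phi f(\mathbb{E}[\mathbf{x}_\eta^j])\prec\phi f(\mathbb{E}[\mathbf{x}_\eta^i])$ for some $i\notin S$, reassigning the exploitation from $j$ to $i$ moves the same spent norm to the higher-ranked state; monotonicity of $f$ and the anti-parallel relation between $\phi'$ and $\phi$ make this a pure interchange of the exploited and conserved norms between the two steps, so the objective cannot decrease (ties resolved by the recency preference~\ref{enu:recency_pref}). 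Iterating the swap yields the claim. Finally I would characterize the admissible $k$: the total consumption $\sum_t\|\phi' g(\mathbf{a}^t)\|_p$ of the top-$k$ policy is monotone in $k$ (each added exploited step contributes a spent norm in $[\underline{\Delta}_a(t),\bar{\Delta}_a(t)]$ dominating that of a conserved step), so the capacity objective $\underline{A}\le\sum_t\|\phi' g(\mathbf{a}^t)\|_p\le\bar{A}$ together with the per-step envelope of Eq.~\eqref{eq:action_capacity_details_sdmdp} carves out a contiguous block of feasible $k$. Hence $V^* = \max_{k} V_k$, where $V_k$ is the closed-form value of exploiting on $\text{Top}_k(\mathbb{E}[\tau_\eta])$ and conserving elsewhere — the dual optimization over $k\in\{1,\dots,T\}$ asserted by the lemma.

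The step I expect to be hardest is the exchange argument, because $\mathcal{A}(t)$ is history-dependent: it depends on the prefix $\tau_a$ through $\underline{\aleph}^t[\tau_a]$ and $\overline{\aleph}^t[\tau_a]$ in Eq.~\eqref{eq:action_capacity_sdmdp}, so reassigning which steps exploit can shift the feasible sets of the intermediate steps. I would neutralize this by decoupling the two claims: first establish the value comparison \emph{conditional} on feasibility — among all feasible exploitation sets of size $k$, the rearrangement inequality forces $\text{Top}_k(\mathbb{E}[\tau_\eta])$ to be optimal — and then argue feasibility separately, using that the path constraint bounds only running totals while each exploited step spends at most $\bar{\Delta}_a(t)$, so any size-$k$ set respecting the global budget can be realized, the top-$k$ set included. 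A secondary subtlety is the boundary case where the capacity $\mathbf{x}_d$ is exhausted before $T$ — the ``capacity available at $T$'' caveat of Section~\ref{sec:value_est_properties} — which I would fold into the definition of feasible $k$ via the constraint $\|\phi' g(\mathbf{a}^t)\|_p\le\|\mathbf{x}_d^t\|_p$ already present in $\overline{\norm{\mathbf{a}(t)}_p}$.
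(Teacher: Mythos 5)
Your proposal is correct and follows essentially the same route as the paper: Lemma~\ref{lem:sdmdp_binary_action} reduces each step to the binary choice between $\mathbf{a}^+$ and $\mathbf{a}^-$, a rearrangement/majorization argument (carried out in the paper inside the proof of Lemma~\ref{lem:sdmdp_binary_action} via Hardy--Littlewood--P\'olya) places the exploiting steps on $\text{Top}_k(\mathbb{E}[\tau_\eta])$, and what remains is an optimization over $k$ subject to the capacity budget. The only substantive difference is at the final step, where the paper casts the selection of $k$ as an explicit dual linear program (maximizing $K$ with $\|\mathbf{a}^-\|_p$ as a free variable under the budget constraints) while you take a direct maximum of $V_k$ over the feasible set of $k$; your explicit handling of the history-dependent sets $\mathcal{A}(t)$ and of the exhausted-capacity boundary case is more careful than the paper's own proof, which glosses over both.
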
 

\textbf{Sketch of Proof:} Given the separability of $\mathbbm{E}[\tau_{\eta}]$ with respect to any deterministic action sequence. We show that when we majorize over $\mathbbm{E}[\tau_{\eta}]$, to produce an ordered set of sequences, we simply select the top $k$ vectors in this ordered list which satisfies the norm maximization constraints for the resource allocation. For the rest of the $\mathbbm{E}[\tau_{\eta}]$ we allocate minimum resources within the constraints. The solution involves therefore simply finding the value of $k$ which maximizes the value function in Eq.~\eqref{eq:vk_value_rep_main} subject to constraints derived from Eq.~\eqref{eq:action_capacity_details_sdmdp}.



\iftoggle{uai}{
\vspace{-10pt}
\begin{align}
    V_k(\mathbf{x}^{t}) &= \sum_{i=1}^k \innerP{\phi f \left(\widetilde{\tau}^{i} \right), \mathbf{a}^+[\widetilde{\tau}^{i}]} + \sum_{i=k+1}^T \innerP{\phi f \left(\widetilde{\tau}^{i} \right), \mathbf{a}^-[\widetilde{\tau}^{i}]}\label{eq:vk_value_rep_main}
\end{align}
\vspace{-3pt}
}{
\begin{align}
    V_k(\mathbf{x}^{t}) &= \sum_{k \in \text{Top}_k} \innerP{\phi f \odot \left(\mathbbm{E}[\xB_{\eta}^{1:k}] \right), \mathbf{a}^+[\mathbbm{E}[\mathbf{x}_\eta^{1:k}]]} + \sum_{k \notin \text{Top}_k} \innerP{\phi f \odot \left(\mathbbm{E}[\xB_{\eta}^{k+1:T}] \right), \mathbf{a}^-[\mathbbm{E}[\mathbf{x}_\eta^{k+1:T}]]}\label{eq:vk_value_rep_main}
\end{align}
}




\lar{As shown by Lemmas \ref{lem:sdmdp_binary_action} and \ref{lem:solve_k_for_vopt}, under perfect information where $\mathbbm{E}[\tau_{\eta}]$ is known, the problem reduces from a complex sequential optimization to a fractional knapsack problem with computational complexity $\mathcal{O}(T \log(T))$ for exact solutions. This is a significant improvement over stochastic programming methods which scale polynomially with $T^s$, for number of scenarios $s > 1$ always \cite{dyer:2006_stoch_prog_complexity, shapiro:2005_stoch_prog_complexity}. In addition, our approach is independent of the state-action space size, which potentially be infinite, offering a strong advantage for many real-world problems over value or policy iteration, whose complexity scales polynomially with the state-action space \cite{wingate:2004_vi_pi_poly, sutton:2018_RL}.}

\textbf{Monte Carlo Value Estimation for the \texttt{SD-MDP}:} Thm.~\ref{thm:v_bound_sdmdp} states that any value function estimate using $N$ MC estimate, will have a best case estimate error on the order of $\mathcal{O}(1/\sqrt{N})$. \lar{The key underlying approach is that we compute the expectation over the trajectory, $\mathbbm{E}[\tau_{\eta}]$, via MC simulation. Via this approximation, we solve for an approximate optimal policy, which constitutes a pure strategy (deterministic policy) in a limited action space as a consequence of Lem.~\ref{lem:sdmdp_binary_action}. This policy only depends on the discrete time position, allowing MC simulations to take place to estimate $V_k(\mathbf{x}^{t})$.   }

\begin{theorem} \label{thm:v_bound_sdmdp}
     \textbf{Upper bound on the Monte Carlo Value Estimation for the \texttt{SD-MDP}:} For the \texttt{SD-MDP} abstraction partitionable MDP, the optimal policy, where the value function is upper bounded by $|\hat{V}_N - V^*(\mathbf{x}) | \leq \mathcal{O}((\delta \sqrt{N})^{-1})$, with probability $1-\delta$. Where $\hat{V}_N$ is the Monte Carlo simulation estimate of the value function under $N$ iterations. (Proof in Appendix~\ref{prf:v_bound_sdmdp}.) 
\end{theorem}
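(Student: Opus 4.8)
The plan is to reduce the estimation of $V^*(\mathbf{x})$ to the estimation of finitely many expectation vectors, and then propagate the sampling error through the closed-form value expression of Lemma~\ref{lem:solve_k_for_vopt}. By Lemmas~\ref{lem:sdmdp_binary_action} and~\ref{lem:solve_k_for_vopt}, the optimal value can be written as $V^*(\mathbf{x}) = \max_{k\in\{1,\dots,T\}} V_k(\mathbf{x})$, where each $V_k$ is the linear functional of the expectation vectors $\mu^t := \mathbb{E}[\phi f(\mathbf{x}_\eta^t)]$ displayed in Eq.~\eqref{eq:vk_value_rep_main}, with coefficients given by the bounded action vectors $\mathbf{a}^+[\cdot],\mathbf{a}^-[\cdot]$. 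First I would establish that, for a fixed $k$ and fixed admissible actions, $V_k$ is Lipschitz in the tuple $(\mu^1,\dots,\mu^T)$ with constant $C$ controlled by $\max_{\mathbf{a}\in\mathcal{A}(t)}\|\mathbf{a}\|_p$ (finite by Dynamics~\ref{enu:pos_action_space}, \ref{enu:inc_action_dynamics}, \ref{enu:path_constraint}) and the operator norm of $\phi$.

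Next I would define the estimator $\hat{V}_N$ as the value obtained by replacing each $\mu^t$ with the empirical mean $\hat\mu_N^t = \tfrac1N\sum_{i=1}^N \phi f(\mathbf{x}_{\eta,i}^t)$ over $N$ independent rollouts of the action-independent chain $P_\theta$ from $\mathbf{x}^1$, and re-solving the $\mathrm{Top}_k$ allocation. A standard plug-in optimizer argument handles the fact that the maximizing $k$ and the chosen actions depend on the estimates: writing $V(\mathbf{a},k;\mu)$ for the value of a fixed policy under mean profile $\mu$, optimality of $(\hat{\mathbf{a}},\hat k)$ under $\hat\mu$ together with optimality of $(\mathbf{a}^*,k^*)$ under $\mu$ yields the two-sided bound $|\hat{V}_N - V^*| \le \max_{\mathbf{a},k}|V(\mathbf{a},k;\hat\mu) - V(\mathbf{a},k;\mu)| \le C\sum_{t=1}^T \|\hat\mu_N^t - \mu^t\|_p$.

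For the concentration step I would use a second-moment (Chebyshev/Markov) argument rather than Hoeffding, since the framework only assumes Lipschitz $f$ and (implicitly) finite-variance $\mathbf{x}_\eta$: each coordinate of $\hat\mu_N^t - \mu^t$ has standard deviation $\sigma_t/\sqrt N$, so by Jensen $\mathbb{E}\|\hat\mu_N^t - \mu^t\|_p \le \sigma_t\sqrt{D}/\sqrt N$, and Markov's inequality gives $\|\hat\mu_N^t - \mu^t\|_p \le \sigma_t\sqrt{D}/(\delta'\sqrt N)$ with probability $1-\delta'$. Choosing $\delta' = \delta/T$ and union-bounding over $t=1,\dots,T$, then absorbing $T$, $D$, $\sigma_{\max}$, $C$ and $\|\phi\|$ into a single constant, produces $|\hat{V}_N - V^*(\mathbf{x})| \le \mathcal{O}\big((\delta\sqrt N)^{-1}\big)$ with probability at least $1-\delta$, which is exactly the claimed rate (and $\mathcal{O}(1/\sqrt N)$ for fixed $\delta$).

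The main obstacle I anticipate is the plug-in optimizer step: one must verify that the $\mathrm{Top}_k$ reordering (the majorization underpinning Lemma~\ref{lem:solve_k_for_vopt}) is stable enough that the \emph{value} — though not necessarily the identity of the optimal $k$ or the argmax actions — varies boundedly with the estimated expectations, and, crucially, that the admissible sets $\mathcal{A}(t)$ do not themselves depend on the stochastic estimates (which holds because the constraints in Eq.~\eqref{eq:action_capacity_details_sdmdp} involve only the deterministic partition $\mathbf{x}_d$). A secondary point requiring care is justifying that $N$ independent rollouts of $P_\theta$ from $\mathbf{x}^1$ yield i.i.d.\ samples of each $\phi f(\mathbf{x}_\eta^t)$ with the finite variance needed for Chebyshev; this follows from the Markov property of the $\eta$-partition in Eq.~\eqref{eq:stoch_natural_trans_p} and the Lipschitz assumption on $f$, but should be recorded explicitly as a regularity assumption.
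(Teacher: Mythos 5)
Your proposal is sound, but it takes a genuinely different route from the paper. The paper's proof never touches the expectation vectors $\mathbb{E}[\phi f(\mathbf{x}_\eta^t)]$ directly: it uses the recency preference and Lemma \ref{lem:sdmdp_binary_action} to map each stochastic trajectory $\tau_\eta$ to a unique hindsight-optimal action sequence $\tau^*_{a|\eta}$, partitions trajectory space into at most $(2D)^T$ \emph{indifference classes} $\tilde{\tau}_\eta$ sharing the same optimal sequence, writes $V^*(\mathbf{x}^t)=\sum_{\mathcal{Q}(\{\tau_\eta\})} P_\theta(\tau_\eta\in\tilde{\tau}_\eta)\,\mathcal{H}(\tilde{\tau}_\eta)$ with $\mathcal{H}$ deterministic per class, estimates the multinomial class probabilities by Monte Carlo counting, and applies Hoeffding to the empirical proportions, so the randomness sits entirely in bounded indicator variables and the error aggregates into a constant $\Delta_V^+/\sqrt{N}$ that implicitly sums over exponentially many classes. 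You instead estimate the $T$ mean vectors $\mu^t$, exploit linearity/Lipschitzness of the fixed-policy value $V(\mathbf{a},k;\mu)$ in $\mu$, use the standard plug-in-optimizer inequality over the fixed (estimate-independent) family of admissible allocations, and close with Chebyshev/Markov plus a union bound. Each approach buys something: yours avoids the exponential partition (constants polynomial in $T$ and $D$), analyses a simpler and more natural estimator, and actually matches the theorem's stated $\mathcal{O}((\delta\sqrt{N})^{-1})$ dependence, which Markov delivers directly; the paper's version only needs boundedness of indicators (no moment assumption on $\phi f(\mathbf{x}_\eta^t)$, which you correctly flag as an extra regularity hypothesis in your argument) and yields the sharper $\sqrt{\log(1/\delta)/N}$ deviation, at the price of an estimator that must classify each sampled trajectory into its indifference class and a constant hiding a sum over $(2D)^T$ terms. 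Note also that you are bounding a different (plug-in) estimator than the paper's class-frequency estimator; since the theorem statement does not pin $\hat{V}_N$ down beyond "Monte Carlo estimate under $N$ iterations," this is acceptable, but it is worth stating explicitly. Both arguments lean equally on the paper's identification (Lemma \ref{lem:solve_k_for_vopt} and the hindsight/anticipative conflation) of $V^*$ with a Top-$k$ allocation value, so your reliance on that lemma introduces no gap beyond what the paper itself assumes.
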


\iftoggle{uai}{}{
    \reD{Update this for the upper bound. We may want to do some theoretical analysis on the hindsight gap. Perhaps a relation of the hindsight gap to $|S||A||H|$. And how does this work for the American put option?}
}

\textbf{Sketch of Proof:} Any naturally evolving time series has an expected outcome which can be computed $\mathbbm{E}[\tau_{\eta}]$, and thus the problem reduces to an allocation problem which can be solved using the dual formulation, in solving for $\text{Top}_k(\cdot)$ in Lem.~\ref{lem:solve_k_for_vopt}. Via Hoeffding's inequality, we can upper bound the approximation error from Monte Carlo sampling.





\section{Monte Carlo Planning with Value Function Approximation} \label{sec:mcts_planning}

\begin{figure*}[!tb]
\centering
\subfloat[Maritime]{%
  \includegraphics[width=43mm]{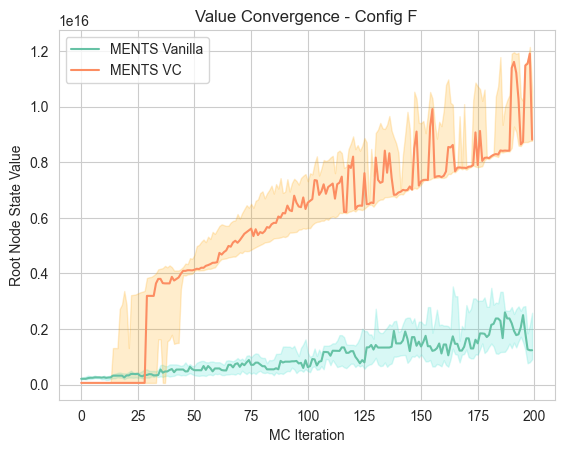}%
  \label{fig:market3}%
}\hspace{10pt}
\subfloat[Hybrid Fuel.]{%
  \includegraphics[width=43mm]{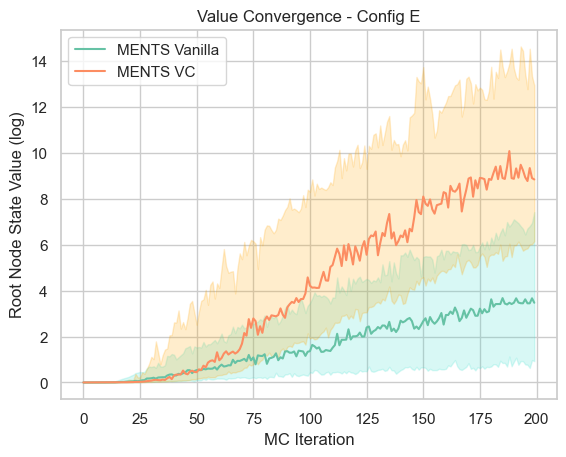}%
  \label{fig:market2}%
}\hspace{10pt}
\subfloat[Financial Options.]{%
  \includegraphics[width=43mm]{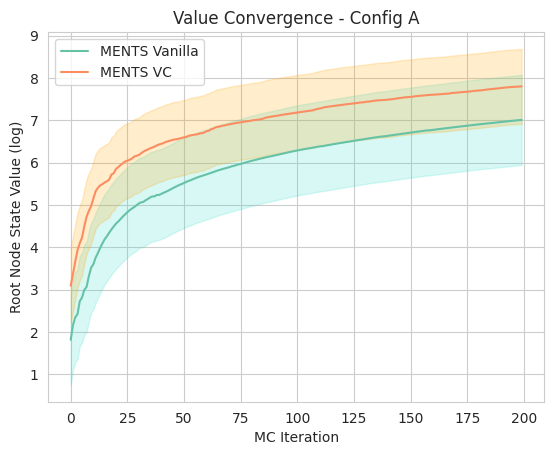}
  \label{fig:market4}%
}
\caption{\lar{We illustrate the convergence to the optimal value function as a function of the number of MC iterations for the MENTS algorithm \cite{xiao:2019-ments}. We demonstrate that MENTS VC yields stronger value convergence properties compared to vanilla MENTS.}} \label{fig:value_plot}
\vspace{-15pt}
\end{figure*}

\iftoggle{uai}{
\lar{To yield improvements to MCTS within the \texttt{SD-MDP} framework, in the \textit{Expansion} phase, knowledge from Lem.~\ref{lem:sdmdp_binary_action} is used to restrict the action space away from suboptimal actions. In the \textit{Rollout} phase, drawing from knowledge from Lem.~\ref{lem:solve_k_for_vopt}, a more efficient value function estimator is employed, obviating the need for an uninformed (typically uniform) rollout policy. Moreover, as shown by Thm.~\ref{thm:v_bound_sdmdp}, we guarantee that for any $\hat{V}_N(\mathbf{x})$, as the simulation budget increases, $N \to \infty$, the approximation error $|\hat{V}_N(\mathbf{x}) - V^*(\mathbf{x})| \to 0$. We employ two variants of MCTS: Upper Confidence Tree (UCT)  \cite{Kocsis:2006} and Maximum Entropy Monte Carlo Planning (MENTS) \cite{xiao:2019-ments}.}

}{

We consider an algorithm that operates on the \texttt{SD-MDP} in the planning setting, using a value function approximator ($\hat{V}_N$) with a fixed budget $N_s$. We guarantee that for any $\hat{V}_N(\mathbf{x})$, as $N \to \infty$, the approximation error $|\hat{V}_N(\mathbf{x}) - V^*(\mathbf{x})| \to 0$, as shown by Thm.~\ref{thm:v_bound_sdmdp}. $\hat{V}_N(\mathbf{x})$ depends on the simulation budget $N$ and the parameters of the \texttt{SD-MDP}, $\theta$, in the planning setting.

We apply MCTS, which performs tree traversal in four phases: \textit{Selection}, \textit{Expansion}, \textit{Simulation}, and \textit{Backpropagation}. In \textit{Selection}, actions are chosen based on prior state exploration, guided by a selection policy such as $\pi_{UCT}$ or $\pi_{M}$. \textit{Expansion} involves taking an action in the environment and adding a new state-reward node to the tree. When an unexplored node is reached, \textit{Simulation} or \textit{Rollout} approximates the Q-function, $\hat{Q}(\cdot)$, based on a rollout policy $\pi_s(\cdot)$. This process continues iteratively until a termination condition is met. Finally, during \textit{Backpropagation}, value and Q-function estimates are updated and propagated back up the tree, effectively propagating rewards to the root node.

}

\textbf{UCT:}  Leveraging bandit algorithms like UCB, as discussed in \cite{Coquelin:2007} and \cite{Kocsis:2006}, MCTS efficiently approximates $\pi^*(\mathbf{x}, a)$ by navigating the state-action space via the UCT metric. Note $\bar{\mu}(t)$ is the reward for a trajectory, prior and after using uniform rollout.

\iftoggle{uai}{
\vspace{-10pt}
\begin{align}
    \pi_{\mathrm{UCT}}(\mathbf{x})=\operatorname*{max}_{a\in A}\bar{Q}(\mathbf{x},\mathbf{a})+c\sqrt{\frac{\log{ N}(\mathbf{x})}{N(\mathbf{x},\mathbf{a})}}, \label{eq:select-uct}
\end{align}
\vspace{-8pt}
\begin{align}
    \bar{Q}(\mathbf{x}^t,\mathbf{a}^t)\leftarrow\bar{Q}(\mathbf{x}^t,\mathbf{a}^t)+\frac{\bar{\mu}(t)-\bar{Q}(\mathbf{x}^t,\mathbf{a}^t)}{N(\mathbf{x}^t,\mathbf{a}^t)+1}. \label{eq:update-uct}
\end{align}
\vspace{-3pt}
}{
\begin{multicols}{2}
  \begin{equation}
    \pi_{\mathrm{UCT}}(\mathbf{x})=\operatorname*{max}_{a\in A}\bar{Q}(\mathbf{x},\mathbf{a})+c\sqrt{\frac{\log{ N}(\mathbf{x})}{N(\mathbf{x},\mathbf{a})}}, \label{eq:select-uct}
  \end{equation}\break
  \begin{equation}
    \bar{Q}(\mathbf{x}^t,\mathbf{a}^t)\leftarrow\bar{Q}(\mathbf{x}^t,\mathbf{a}^t)+\frac{\bar{\mu}(t)-\bar{Q}(\mathbf{x}^t,\mathbf{a}^t)}{N(\mathbf{x}^t,\mathbf{a}^t)+1}. \label{eq:update-uct}
  \end{equation}
\end{multicols}
}

\textbf{Softmax Entropy Policies (MENTS):} Nevertheless, an alternative to UCT are softmax style Boltzmann policies, where the key difference is that a stochastic selection policy is used for action selection versus a deterministic policy. This encourages exploration and has been shown to be faster to converge compared to UCT. \cite{grill:2020-mcts-reg} elaborates on the use of AlphaZero and UCT, offering theoretical bounds such as $\hat{\pi} \leq \bar{\pi}$, where $\hat{\pi}$ represents empirical policy, $\bar{\pi}$ is a softmaxed policy balancing an empirical Q function, and $\pi_\theta$ denotes the supervised learning policy. The effectiveness of regularized MCTS, particularly in low sample count scenarios, is highlighted by \cite{grill:2020-mcts-reg}. \cite{hazan:2011-convex-regret-min} showed its effectiveness, achieving regret of $O(\log(T))$. 

\textbf{Maximum Entropy Monte Carlo Planning (MENTS):} \cite{xiao:2019-ments} proposes the use of a convex regularizer, which upper-bounds the value function estimate to improve the sampling efficiency of MCTS. In \textit{maximum entropy MCTS} (MENTS), entropy is used to enhance exploration and convergence to the optimal policy for MDP planning. Furthermore, theoretical guarantees are also provided in \cite{xiao:2019-ments} with respect to the suboptimality of the alogrithm over time.  Let us define our \textit{approximate Bellman update function},


\iftoggle{uai}{\vspace{-10pt}}

\begin{align}
    \mathcal{G}_{P}(\mathbf{x},\mathbf{a}, V(\cdot) ) = \mu(\mathbf{x}, \mathbf{a})+\sum_{\mathrm{\mathcal{S}}(\mathbf{x}',\mathbf{a})}\left(\frac{N(\mathbf{x}^{\prime})}{N(\mathbf{x}, \mathbf{a})} V(\mathbf{x}^{\prime})\right)
\end{align}

\iftoggle{uai}{\vspace{-3pt}}

Given a tree with visited states $P$, and a value function estimator based on uniform rollout, $V_s(\cdot)$, in MENTS, there are two modes of updates,

\iftoggle{uai}{\vspace{-10pt}}

\begin{align}
    Q_{\mathrm{sft}}(\mathbf{x}^t,\mathbf{a}^t) = 
        \begin{cases} 
            \mathcal{G}_{P}(\mathbf{x},\mathbf{a}, V_{\mathrm{sft}}(\cdot)) \qquad &\text{if} \,\, \text{non-terminal in $P$} \\
            \mathcal{G}_{P}(\mathbf{x},\mathbf{a}, V_s(\cdot)) \qquad &\text{else} \label{eq:ments-q-update}
    \end{cases} 
\end{align}

\iftoggle{uai}{\vspace{-3pt}}

Where $\mathcal{G}_{P}(\mathbf{x},\mathbf{a}, V_{\mathrm{sft}}(\cdot))$ represents the softmax value Q-update based on the softmax value function, and $\mathcal{G}_{P}(\mathbf{x},\mathbf{a}, V_s(\cdot))$ which is the value function estimate obtained from a uniform rollout policy. The softmax value function, $V_{\mathrm{sft}}(\cdot)$ is updated by a regularized function of the softmax Q function $Q_{\mathrm{sft}}(\cdot)$.

\iftoggle{uai}{
\vspace{-10pt}
\begin{align}
    V_{\mathrm{sft}}(\mathbf{x}^t) \leftarrow \alpha\log\sum_{\mathbf{a} \in \mathcal{A}}\exp\left(\frac{1}{\alpha} Q_{\mathrm{sft}}(\mathbf{x},\mathbf{a})\right) \label{eq:ments-value-update}
\end{align}
\vspace{-15pt}
\begin{align}
    \pi_{\mathrm{M}}(\mathbf{a} | \mathbf{x})=(1-\lambda_{s})\frac{1}{\alpha}\left(Q_{\mathrm{sft}}(\mathbf{s},\mathbf{a})-V_{\mathrm{sft}}(\mathbf{s}) \right) +\frac{\lambda_{s}}{|\mathbf{a}|}
  \end{align}
\vspace{-3pt}
}{
\begin{multicols}{2}
  \begin{equation}
    V_{\mathrm{sft}}(\mathbf{x}^t) \leftarrow \alpha\log\sum_{\mathbf{a} \in \mathcal{A}}\exp\left(\frac{1}{\alpha} Q_{\mathrm{sft}}(\mathbf{x},\mathbf{a})\right) \label{eq:ments-value-update}
  \end{equation}\break
  \begin{equation}
    \pi_{\mathrm{M}}(\mathbf{a} | \mathbf{x})=(1-\lambda_{s})\frac{1}{\alpha}\left(Q_{\mathrm{sft}}(\mathbf{s},\mathbf{a})-V_{\mathrm{sft}}(\mathbf{s}) \right) +\frac{\lambda_{s}}{|\mathbf{a}|}
  \end{equation}
\end{multicols}
}



MENTS uses a soft Bellman update for $\hat{Q}_{\mathrm{sft}}(\cdot)$, unlike the rollout policy via UCT. \cite{painter:2024mcts_dents} suggests that MENTS is not consistent, meaning it will not always converge, and there may exist MDPs where MENTS fails to converge. Decaying entropy MCTS (DENTS) is an MCTS algorithm that guarantees convergence as $t \to \infty$, but it lacks strong guarantees regarding the probability of taking suboptimal actions ($P(\mathbf{a}^* \neq \mathbf{a})$), needed for proving Thm.~\ref{thm:sdmdp_mc_simple_regret}.

\subsection{Value Clipping} \label{sec:value_clipping}

\begin{figure*}[!tb]
\centering
\subfloat[Maritime (Cost Based.)]{%
  \includegraphics[width=43mm]{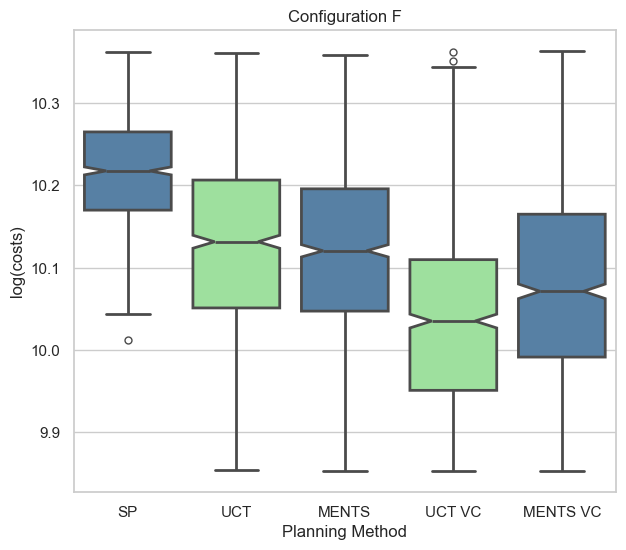}%
  \label{fig:market3}%
}\hspace{10pt}
\subfloat[Hybrid Fuel.]{%
  \includegraphics[width=43mm]{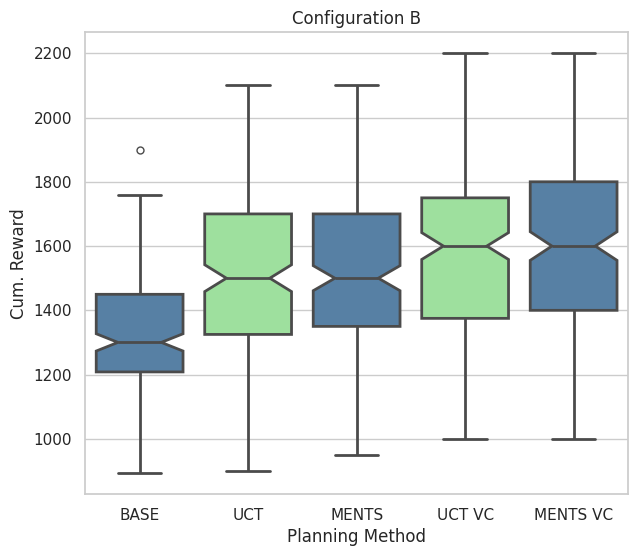}%
  \label{fig:market2}%
}\hspace{10pt}
\subfloat[Financial Options.]{%
  \includegraphics[width=43mm]{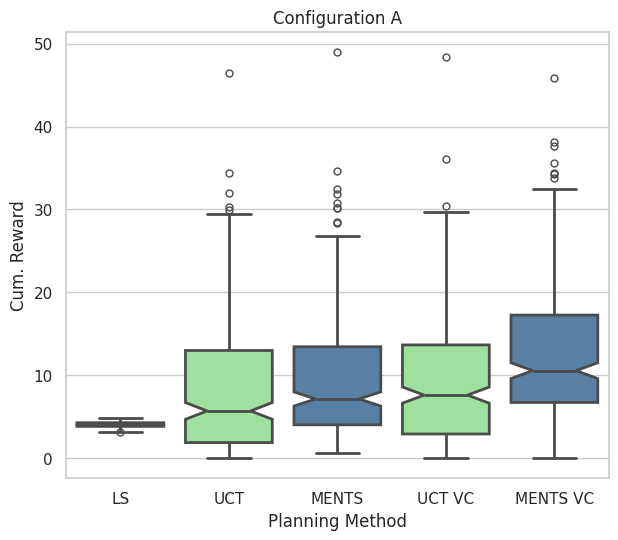}
  \label{fig:market4}%
}
\caption{\lar{We compare empirical results based on cost reduction or reward maximization. The leftmost boxplot presents an instance-dependent baseline for reference. Evidently, MCTS value clipping within the \texttt{SD-MDP} framework improves expected cost/reward performance over vanilla MCTS, as shown for both UCT and MENTS variants.}} \label{fig:boxplot_costs}
\vspace{-10pt}
\end{figure*}

We study the problem of optimal Monte Carlo planning under perfect information. From Sec.~\ref{sec:value_est_properties} we can leverage the MC estimation properties of the value function approximator (VFA) to enhance any MDP solver.  As we draw more samples our estimate of the model parameters increases, nevertheless the MC VFA still relies on a simulation budget to estimate the value function. \lar{We can further leverage the theoretical guarantees in Thm.~\ref{thm:v_bound_sdmdp} with respect to the MC estimation error of the optimal value function.} $\overline{V}^*$ represents the maximum value estimate under perfect information, for actions in \textit{hindsight}. $\underline{V}^*$ represents the value estimate under perfect information for the \textit{anticipative} solution on expectation. Let us define,

\iftoggle{uai}{\vspace{-10pt}}

\begin{align}
    \Delta_V(\mathbf{x}) = \overline{V}^*(\mathbf{x}) - \underline{V}^*(\mathbf{x}), \quad V^*(\mathbf{x}) \in [\underline{V}^*(\mathbf{x}), \overline{V}^*(\mathbf{x})]
\end{align}

\iftoggle{uai}{\vspace{-3pt}}

Where the true solution $V^*(\mathbf{x})$ belongs somewhere between the optimal value in hindsight and the anticipative solution on expectation. We leverage $\overline{V}(\mathbf{x})^*$ and $\underline{V}(\mathbf{x})^*$ to improve the convergence rate of our planning algorithm. Therefore, given $\Delta_V(\mathbf{x})$ we can clip the outcome of any rollout policy in MCTS by $\Delta_V(\mathbf{x})$. Typically, when a new node is added to the search tree $P$, a uniform rollout policy or neural network is implemented to provide an initial estimate of the $V^*(\cdot)$.



\textbf{Guarantees on Simple Regret:} We can provide a guarantee on simple regret based on the structure of the \texttt{SD-MDP}, and extending the work of \cite{xiao:2019-ments} for UCT and \cite{Kocsis:2006} for MENTS. Let simple regret be defined as,

\iftoggle{uai}{\vspace{-10pt}}

\begin{align}
    \mathrm{reg}(T) = \sup_{\mathbf{x} \in \mathcal{X}} \left( V^*(\mathbf{x}) - V^T(\mathbf{x}) \right) \label{eq:simple_regret}
\end{align}

\iftoggle{uai}{\vspace{-3pt}}

Where $V^T(\mathbf{x})$ is the value estimate of value function estimator after $T$ samples. We provide high probability bounds on the simple regret for MCTS-MENTS, $\mathrm{reg}_M(T)$ and MCTS-UCT  $\mathrm{reg}_U(T)$, where there exists some constant $C \in \mathbbm{R}^+$ \lar{to bound the simple regret.}

\begin{theorem} \label{thm:sdmdp_mc_simple_regret}
    \textbf{Simple Regret:} Given a Monte Carlo planning algorithm, $\mathcal{M}$, where $\tilde{p}(T) = P(\mathbf{a}^* \neq \mathbf{a})$, where $\lim_{T \to \infty} \tilde{p}(T) = 0$ and $\tilde{p}(T)$ is asymptotically bounded above by $\mathcal{O}(\frac{1}{T})$ for $T$ samples, when running $\mathcal{M}$ over the \texttt{SD-MDP}, the  the simple regret $\mathrm{reg}(T)$, as defined in Eq.~\eqref{eq:simple_regret}, is bounded by $C_k T \tilde{p}(T)$, for some value $C_k \in \mathbbm{R}^+$. (Proof in Appendix~\ref{prf:sdmdp_mc_simple_regret}.)
\end{theorem}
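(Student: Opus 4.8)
The plan is to lift the single-node guarantee $\tilde{p}(T)=P(\mathbf{a}^*\neq\mathbf{a})$ to a trajectory-level regret bound, using the additive resource-allocation structure of the \texttt{SD-MDP} to prevent per-step errors from compounding. First I would fix an arbitrary root state $\mathbf{x}$ and identify $V^T(\mathbf{x})$ with the expected return of executing the policy $\hat{\pi}_T$ recommended by $\mathcal{M}$ (at budget $T$) over the remaining horizon. Lemma~\ref{lem:sdmdp_binary_action} makes ``choosing a suboptimal action'' a well-defined event at each step, since the candidate optimal actions lie in a finite set of cardinality at most $2D$, and the recency preference (Dynamic~\ref{enu:recency_pref}) breaks ties so that $\mathbf{a}^*$ is unambiguous. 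By Lemma~\ref{lem:solve_k_for_vopt} the optimal return from $\mathbf{x}$ equals the $\text{Top}_k$ allocation~\eqref{eq:vk_value_rep_main} over the $T$ expected stochastic states $\mathbb{E}[\tau_{\eta}]$, and since $P_\theta(\mathbf{x}_\eta^{t+1}\mid\mathbf{x}_\eta^t)$ is action-independent (Eq.~\eqref{eq:stoch_natural_trans_p}) this allocation is taken over a fixed sequence regardless of which actions are executed. Combining positive semidefiniteness of $\phi$, the smooth monotone Lipschitz kernels $f,g$, and the bounded action and capacity sets of Dynamics~\ref{enu:pos_action_space}--\ref{enu:path_constraint}, I would establish that the per-step value gap $\kappa := \sup_{\mathbf{x}_\eta}\big(\langle\phi f(\mathbf{x}_\eta),\,\mathbf{a}^+[\mathbf{x}_\eta]\rangle - \langle\phi f(\mathbf{x}_\eta),\,\mathbf{a}^-[\mathbf{x}_\eta]\rangle\big)$ is a finite constant; moreover value clipping (Section~\ref{sec:value_clipping}) keeps every rollout estimate inside $[\underline{V}^*,\overline{V}^*]$, so a bound expressed through $\Delta_V(\mathbf{x})$ also controls the loss that a single mistake induces in the backed-up estimate.

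Next I would decompose the regret additively along the executed trajectory. Writing the realised return as a sum of per-step rewards, linearity of expectation gives $V^*(\mathbf{x}) - V^T(\mathbf{x}) = \sum_{t}\mathbb{E}[\ell_t]$, where $\ell_t\geq 0$ is the return forfeited because $\hat{\pi}_T$'s choice at step $t$ departs from optimal. Using Lemma~\ref{lem:solve_k_for_vopt} together with the adaptivity of the admissible sets $\mathcal{A}(t)$ in Eq.~\eqref{eq:action_capacity_details_sdmdp}, I would argue that a wrong action at step $t$ merely displaces the allocation frontier by one slot while the incremental- and path-constraints (Dynamics~\ref{enu:inc_action_dynamics},~\ref{enu:path_constraint}) reabsorb the perturbation within the remaining budget, so $\ell_t\leq\kappa$ almost surely, and hence $\mathbb{E}[\ell_t]\leq\kappa\cdot P(\hat{\pi}_T \text{ mistakes at step } t)\leq\kappa\,\tilde{p}(T)$. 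There are at most $T$ such steps, so $V^*(\mathbf{x}) - V^T(\mathbf{x})\leq\kappa T\,\tilde{p}(T)$; taking the supremum over $\mathbf{x}$ and setting $C_k=\kappa$ yields $\mathrm{reg}(T)\leq C_k T\,\tilde{p}(T)$. The coarser union-bound version, $P(\exists t:\ \hat{\pi}_T \text{ mistakes at } t)\leq T\tilde{p}(T)$ with worst-case loss at most $\Delta_V(\mathbf{x})$ on that event, gives the same shape and serves both as a consistency check and as the fallback when the ``one-slot'' argument degenerates.

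Finally I would instantiate the result for the two concrete planners and confirm the hypothesis on $\tilde{p}(T)$. For MCTS-UCT the error probability decays at the rate required by the hypothesis via \cite{Kocsis:2006,Coquelin:2007}, and for MCTS-MENTS the exponential decay of $P(\mathbf{a}^*\neq\mathbf{a})$ established in \cite{xiao:2019-ments} certainly satisfies $\tilde{p}(T)\to 0$ and $\tilde{p}(T)=\mathcal{O}(1/T)$; applying the bound gives the stated control on $\mathrm{reg}_U(T)$ and $\mathrm{reg}_M(T)$. I would also record why DENTS falls outside Theorem~\ref{thm:sdmdp_mc_simple_regret}: \cite{painter:2024mcts_dents} guarantees only asymptotic convergence without a rate on $P(\mathbf{a}^*\neq\mathbf{a})$, so the hypothesis cannot be invoked.

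The main obstacle will be the non-compounding claim $\ell_t\leq\kappa$: a priori a suboptimal action early in the horizon could both forfeit its own reward and make the Capacity Objective~\ref{enu:path_constraint} harder (or infeasible) to meet later, inflating the loss to $\mathcal{O}(\kappa T)$ and producing an $\mathcal{O}(T^2\tilde{p}(T))$ bound. Showing that the \texttt{SD-MDP} structure genuinely localises a mistake --- that the residual problem after a wrong action is again an instance of the $\text{Top}_k$ allocation of Lemma~\ref{lem:solve_k_for_vopt} over the remaining steps with a shifted but still-feasible budget, so the only definite loss is the displaced slot --- is the delicate step, and the boundary cases where feasibility is threatened are exactly where value clipping by $\Delta_V(\mathbf{x})$ is needed as a uniform, sample-count-independent backstop. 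A secondary and routine point is that the union-bound and linearity steps require no independence across the per-step recommendation events, which is immediate.
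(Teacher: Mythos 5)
Your proposal is correct in substance and shares the paper's core strategy --- a mistake at a decision point happens with probability $\tilde{p}(T)$, each mistake costs at most the finite gap between the best action in $\{\mathbf{a}^+\}$ and in $\{\mathbf{a}^-\}$ (your $\kappa$, the paper's $\tilde{\Delta}_a$), and there are $T$ decision points --- but the way you compute the expectation is genuinely different and more elementary. You use linearity of expectation over per-step losses, $\mathrm{reg}(T)\le\sum_t \kappa\,P(\text{mistake at }t)\le \kappa\,T\,\tilde{p}(T)$, which bypasses everything after Eq.~\eqref{eq:reg1} in the paper: the paper instead models the number of swaps as a Binomial$(T,\tilde{p})$, caps the per-trajectory regret at $k\tilde{\Delta}_a$ (with $k$ the Top-$k$ capacity number), controls the tail with the Feller-type bound of Lemma~\ref{lem:binom_sum} (which is why the hypothesis $\tilde{p}(T)=\mathcal{O}(1/T)$, i.e.\ $k>T\tilde{p}$, enters), and then upper-bounds the binomial mass terms by a polynomial envelope whose constant term $\gamma(T)\to T\tilde{p}(T)$, finally collecting constants into $C_k$. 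Your route buys simplicity and does not even need the $\mathcal{O}(1/T)$ assumption to reach the stated form $C_kT\tilde{p}(T)$; the paper's route buys the cap at $k$ swaps (so the constant genuinely depends on the capacity $k$ rather than on the horizon), at the price of heavier machinery. The one step you flag as delicate --- that a single wrong action costs only the gap and does not compound through the path constraint \ref{enu:path_constraint} --- is exactly the step the paper also treats lightly: it argues ``by logic'' that a mistake is a swap, i.e.\ the displaced $\mathbf{a}^+$ allocation is played at another time slot so feasibility and the majorization structure of Eq.~\eqref{eq:vk_value_rep_main} are preserved and the loss per swap is $\tilde{\Delta}_a$; so your non-compounding concern is shared with, not resolved by, the published proof, and your clipping-based fallback (worst-case loss $\Delta_V$ on the union event, still of the form $CT\tilde{p}(T)$) is a reasonable additional safeguard the paper does not provide.
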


\textbf{Sketch of Proof:} \lar{The proof of  $\mathrm{reg}(T)$ bounds for MENTS and UCT for the \texttt{SD-MDP} begins with the identification} that the optimal action $\mathbf{a}^*$ belongs to a discrete set. The $\mathrm{reg}(T)$ is then analyzed based on the probability of action swaps, with the worst-case regret per swap denoted by $\tilde{\Delta}_a$. By quantifying the expectation over simple regret, we bound it using binomial probabilities and an upper-bounding polynomial function. As $T \to \infty$, the expectation over simple regret is shown to be upper-bounded by $\mathcal{O}(T^2 \exp(T/\log^3(T)))$ for MENTS and $\mathcal{O}(T^{1-\rho})$ for UCT. 

When running MENTS or UCT over the \texttt{SD-MDP}, the simple regret, $\mathrm{reg}(T)$, is bounded by $\mathcal{O}(T^2 \exp ( T / \log^3(T)))$ for MENTS, and $\mathcal{O}(T^{1- \rho})$ for UCT, as expressed in Eq.~\eqref{eq:ments_regret} and Eq.~\eqref{eq:uct_regret} respectively, as a consequence of Thm.~\ref{thm:sdmdp_mc_simple_regret}. Eq.~\eqref{eq:ments_regret} results from Thm. 5 of  \cite{xiao:2019-ments} and Eq.~\eqref{eq:uct_regret} results from Thm. 5 from \cite{Kocsis:2006}. 

\iftoggle{uai}{
\vspace{-10pt}
\begin{align}
    \mathrm{reg}_{\mathrm{MENTS}}(T) &\leq C T^2 \exp ( T / \log^3(T) ) \label{eq:ments_regret} \\
   \mathrm{reg}_{\mathrm{UCT}}(T) &\leq CT^{1- \rho} \label{eq:uct_regret}
  \end{align}
}{
\begin{multicols}{2}
  \begin{equation}
    \mathrm{reg}_{\mathrm{MENTS}}(T) \leq C T^2 \exp ( T / \log^3(T) ) \label{eq:ments_regret}
  \end{equation}\break
  \begin{equation}
   \mathrm{reg}_{\mathrm{UCT}}(T) \leq CT^{1- \rho} \label{eq:uct_regret}
  \end{equation}
\end{multicols}
}

\iftoggle{uai}{\vspace{-3pt}}

\lar{The upper bound on the simple regret of UCT converges to zero as $T \to \infty$ for $\rho > 1$. Likewise, the simple regret of MENTS also vanishes asymptotically.} A detailed outline of the value clipping implementation in combination with MENTS and UCT can be found in Appendix~\ref{sec:algorithm_details}.


\section{Empirical Results} \label{sec:emprical_main}

\lar{We provide a series of empirical experiments to justify the efficacy of our algorithm. We further impose a computational constraint of the power of MCTS, such that the number of MCTS iterations $N \leq K_c (2D)^T, \ K_c = 0.1$, that is it is only possible to explore at most $K_c$ percentage of all possible trajectories using MCTS, before making a decision. This constraint prevents us from overpowering MCTS in such a way that would allow it to brute force search all possible combinations, and must rely on efficient exploration. For all experiments, we compare our solution with an instance-dependent baseline solution, traditionally used to solve such problems. For MCTS we apply the selection strategies of both UCT, or MENTS, to provide comparative study. We record empirical results on cumulative reward/cost optimization, as well as improvement of the of the optimal value estimate at the root node. (The details for all experiments are outlined in Appendices \ref{sec:maritime-bunkering-appendix}, \ref{sec:hyb_fuel_details}, \& \ref{sec:option_trading_appendix} respectively.) }

\iftoggle{uai}{\vspace{-0.3cm}}

\paragraph{Maritime Bunkering:} Maritime bunkering is a logistical challenge in seaborn transportation aimed at minimizing fuel costs for fleets like ships by optimizing refuelling policies at ports or at sea. The planning problem is traditionally addressed through stochastic programming, which considers factors like fuel consumption, tank capacity, and price variability to find cost-effective refuelling strategies under fixed schedules. The solution via multi-stage stochastic programming is often computationally challenging. 

\iftoggle{uai}{\vspace{-0.3cm}}

\paragraph{Hybrid Fuel Consumption:} In hybrid vehicles, a driving system dynamically switches between power sources based on driving conditions, with regenerative braking replenishing the battery during deceleration. The challenge lies in optimizing fuel allocation over a journey under stochasticity. We model such conditions as a Hidden Markov Process, where the vehicle must balance immediate efficiency with future resource availability to maximize overall mileage. Solutions often involve belief-state MDPs which is applied as a benchmark against our \texttt{SD-MDP} MCTS framework.

\iftoggle{uai}{\vspace{-0.3cm}}

\paragraph{Financial Options Pricing:} In financial trading, American options, which allow holders to exercise at any time before maturity, present an optimal stopping problem where the goal is to maximize profit by deciding when to exercise the option. Serving as a baseline, the Longstaff-Schwartz algorithm is a standard approach, using Monte Carlo simulations and polynomial regression to estimate the continuation value of holding the option, comparing it to the immediate exercise value to determine the optimal strategy. We provide a comparison of the baseline  performance against our  our \texttt{SD-MDP} MCTS framework.

\iftoggle{uai}{\vspace{-0.03cm}}

\section{Conclusion}

Certain stochastic decision processes in optimal control and economics demonstrate remarkable efficacy when coupled with specific assumptions and advanced approximation techniques, particularly in value approximation. Disentangling the causal structure of Monte Carlo (MDPs) not only yields unique insights but also significantly simplifies problem-solving. However, traditional methods for addressing resource allocation problems often struggle to seamlessly integrate with Monte Carlo planning techniques. In response, we propose \texttt{SD-MDP}, an innovative framework for solving structurally decomposed MDPs, offering a versatile modeling approach alongside robust theoretical guarantees. Inspired by fundamental energy conservation principles, we introduce a resource-utility exchange model, which not only enhances computational efficiency but also reduces planning problem complexity. Moreover, we showcase the effective disentanglement of Monte Carlo sampling from the planning process within the \texttt{SD-MDP} framework, facilitating the derivation of Monte Carlo value estimates for both upper and lower bounds of the MDP problem at each state. By seamlessly integrating this approach into MCTS, we not only establish theoretical guarantees but also provide empirical evidence of its efficacy in addressing well-known problems in economic logistics. As future avenues, we envision extending this tool to tackle a broader spectrum of economic problems while delving deeper into the learning setting, where the parameters of the MDP must be learned rather than given.




\iftoggle{uai}{}{

\section*{Acknowledgements}

We would like to express our sincere gratitude to Yu-Chung Lin, Chenjun Xiao, and Seyed Jalal Etesami for their advice, insightful discussions, and helpful tips throughout this work. Their editorial suggestions and support greatly contributed to the clarity and quality of this manuscript.

}


\printbibliography

\clearpage

\appendix
\onecolumn 

\title{(Supplementary Material)}

\section{Proofs and Theory} \label{sec:proofs}

\subsection{Proof of Lem.~\ref{lem:sdmdp_binary_action}} \label{prf:sdmdp_binary_action}

\textbf{Finite and Bounded Action Space for the \texttt{SD-MDP}:} Under the \texttt{SD-MDP} framework, when $\phi f(\cdot)$ is a perfect antisymetric reflection of $\phi f(\cdot)$, the optimal policy belongs to the union of 2 subspaces, that is $\mathbf{a}^* \subset \{ \mathbf{a}^+ \} \cup \{ \mathbf{a}^- \} \subset \mathcal{A}$, for all time steps $t$. The cardinality of the dimension of the optimal solution space is upper bounded by $2D$ where $D$ is the dimension of $\mathbf{a}$. 

\begin{proof}
    We use proof by induction. Suppose we start at any time $t$. The reward function,

    \begin{align}
        \mu(\mathbf{x}^t, \mathbf{a}) = \langle  \phi f(\mathbf{x}_\eta^t), \,  \mathbf{a} \rangle
    \end{align}

    To remind the reader, we denote $\{ \mathbf{a}^+ \}$ and $\{ \mathbf{a}^- \}$ as the following, 
    

    \begin{align}
        \{ \mathbf{a}^+ \} = \underset{ \mathbf{a} \in \mathcal{A}(t) }{\mathrm{argmax}}  \, || \mathbf{a} ||_p, \qquad \{ \mathbf{a}^- \} = \underset{ \mathbf{a} \in \mathcal{A}(t) }{\mathrm{argmin}}  \,\, || \mathbf{a} ||_p
    \end{align}

    Given $\mathcal{A}(t)$, at any time $t$, there exists two sets $\{ \mathbf{a}^+ \}$, and $\{ \mathbf{a}^- \}$ which either maximizes allowable reward, or maximally reduces consumption of $\mathbf{x}_d$. Let us denote $\mathbf{a}^+[\mathbf{x}_\eta]$ and $\mathbf{a}^-[\mathbf{x}_\eta]$ as the following, 
    
    \begin{align}
        \mathbf{a}^+[\mathbf{x}_\eta] = \underset{ \mathbf{a} \in \{ \mathbf{a}^+ \} }{\mathrm{argmax}}  \langle \, \phi f (\mathbf{x}_\eta), \, \mathbf{a} \rangle, \qquad \mathbf{a}^-[\mathbf{x}_\eta] = \underset{ \mathbf{a} \in \{ \mathbf{a}^- \} }{\mathrm{argmax}} \, \, \langle \phi f (\mathbf{x}_\eta), \, \mathbf{a} \rangle \label{eq:argmax_a_xs}
    \end{align}

    We know that the dimension of $|\mathbf{a}|$, $|\mathbf{x}_d|$, and $|\mathbf{x}_\eta|$ are equal, denoted as $D$. From the fundamental theorem of linear programming there can be at most $D$ solutions at the vertices of the convex hulls in Eq.~\eqref{eq:argmax_a_xs}. Depending on the scaling property of $\phi(\cdot)$, the sets $\mathbf{a}^+[\mathbf{x}_\eta]$ and $\mathbf{a}^-[\mathbf{x}_\eta]$ could partially overlap or be disjoint, nevertheless the number of unique solutions would to the either maximization problem would be at most $D$.


    The solutions of $\{ \mathbf{a}^+ \}$ and $\{ \mathbf{a}^- \}$ constitute disjoint sets, where $\mathbf{a}^+[\mathbf{x}_\eta]$ is the action which myopically maximize the reward obtained at time $t$, and $\mathbf{a}^-[\mathbf{x}_\eta]$ is the action which maximizes the potential for the future (or conserves curretn resources).  In fact, when $p=1$ the solution to $\mathbf{a}^+$ is unique due when the feasible region is bounded, and the objective function is linear and convex in the feasible region. 
    

    

    
    \textbf{Value function w.r.t. time:} Under perfect information, we can naturally compute the expectation over $\tau_{\eta}$, leading to the expected stochastic outcome at each discrete time step, denoted as $\mathbb{E}[\tau_{\eta}] \equiv \{\mathbf{x}_\eta^{t}, \mathbb{E}[\mathbf{x}_\eta^{t+1}], \mathbb{E}[\mathbf{x}_\eta^{t+2}], \dots, \mathbb{E}[\mathbf{x}_\eta^{T}] \}$. This evolution transitions independently w.r.t. to any action sequence $( \mathbf{a}^t, \dots , \mathbf{a}^T )$, due to the properties of the \texttt{SD-MDP}. Thus, for any stochastic trajectory $\tau_{\eta}$, constituting a single scenario outcome, there exists a solution which optimizes the cumulative rewards from $t$ to $T$.
    
    Next let us denote the top $k$ elements of $\mathbb{E}[\tau_{\eta}]$ a set of vectors with size $k$, as $\text{Top}_k(\tau_{\eta})$. Should the dimension of $\mathbf{x}_\eta^t$ be greater than 1, in order to compare various values of $\mathbf{x}_\eta^t$, we majorize over the sequence provided by $\phi f \odot \mathbb{E}[\tau_{\eta}]$ to ensure a ranking,

    \begin{align}
        \text{Top}_{k=T}(\mathbbm{E}[\tau_{\eta}]) = (\mathbb{E}[\mathbf{x}_\eta^{k=1}], \mathbb{E}[\mathbf{x}_\eta^{k=2}], \mathbb{E}[\mathbf{x}_\eta^{k=3}], \dots, \mathbb{E}[\mathbf{x}_\eta^{k=T}] )
    \end{align}

    Such that,
    
    \begin{align}
        \phi f (\mathbb{E}[\mathbf{x}_\eta^{k=1}]) \succ \phi f(\mathbb{E}[\mathbf{x}_\eta^{k=2}]) \succ \phi f(\mathbb{E}[\mathbf{x}_\eta^{k=3}]) \dots \succ \phi f(\mathbb{E}[\mathbf{x}_\eta^{k=T}])
    \end{align}

    Also expressed as,

    \begin{align}
        \phi f (\widetilde{\tau}^{k=1}) \succ \phi f(\widetilde{\tau}^{k=2}) \succ \phi f(\widetilde{\tau}^{k=3}) \dots \succ \phi f(\widetilde{\tau}^{k=T})
    \end{align}

    The $\text{Top}_k(\tau_{\eta})$ operator will truncate over this ordered sequence to only include the top $k$ elements. And should we generalize for any $t \in T$, the optimal value function can be expressed as,

    
    \begin{align}
        V_k(\mathbf{x}^{t}) &= \sum_{i=1}^k \innerP{\phi f \left(\widetilde{\tau}^{i} \right), \mathbf{a}^+[\widetilde{\tau}^{i}]} + \sum_{i=k+1}^T \innerP{\phi f \left(\widetilde{\tau}^{i} \right), \mathbf{a}^-[\widetilde{\tau}^{i}]} \label{eq:vk_value_rep}
    \end{align}

    Let sequence $(\mathbf{a}^+) \equiv (\mathbf{a}^+[\mathbf{x}_\eta^{1}], \dots, \mathbf{a}^+[\mathbf{x}_\eta^{t}])$, and $(\mathbf{a}^-) \equiv (\mathbf{a}^-[\mathbf{x}_\eta^{1}], \dots, \mathbf{a}^-[\mathbf{x}_\eta^{t}])$. Let $\tilde{\mathbf{a}} \equiv (\mathbf{a}^+) \tilde{\cup} (\mathbf{a}^-)$, where $\tilde{\cup}$ represents an order preserving union of sequences. As an aside, we can also express Eq.~\eqref{eq:vk_value_rep} therefore as,

    \begin{align}
        V_k(\mathbf{x}^{t}) &= \phi f \odot \text{Top}_{k=T}(\tau_{\eta}) \odot  \tilde{\mathbf{a}} \label{eq:vk_value_rep_short}
    \end{align}

    Suppose that the optimal policy consists of the of the expression in Eq. \eqref{eq:vk_value_rep}, we demonstrate that there exists no way of achieving a higher value should $k$ be optimal. 
    
    \begin{itemize}
        \item Under the norm constraints, suppose a reduction in a vector belonging to $(\mathbf{a}^+)$ occurs, and is transferred to a vector in $(\mathbf{a}^-)$, the maximization of $V_k(\mathbf{x}^t)$ would yield a lesser result. This is due to the explicit assumption that $\phi f$ is a strict orthogonal reflection of $\phi' g$.
        \item By definition the sequence $\tilde{\mathbf{a}}$ is majorized, and by extension of the Hardy-Littlewold-Polya Theorem \cite{hardy:1952inequalities} \cite{day:1972rearrangement}, the sum of Hadmard product of any ranked majorized sequence with another ranked majorized sequence is always maximizing. We can infer that by swapping any element from $(\mathbf{a}^+)$ with $(\mathbf{a}^-)$ would result in a sub-optimal value. 
    \end{itemize}

    We know that the dimension of $|\mathbf{a}|$, $|\mathbf{x}_d|$, and $|\mathbf{x}_\eta|$ are equal of dimension $D$, and by Eq.~\eqref{eq:vk_value_rep}, the solution over trajectory $\mathbbm{E}[\tau_{\eta}]$ consists of either $\mathbf{a}^+[\mathbf{x}_\eta^{t}]$ or $\mathbf{a}^-[\mathbf{x}_\eta^{t}]$ at each time interval $t$. Therefore the structure of the optimal policy posits that the optimal solution will fall into any of the two sets, $\mathbf{a}^+[\mathbf{x}_\eta] \in \{\mathbf{a}^+\}$ or $\mathbf{a}^-[\mathbf{x}_\eta] \in \{\mathbf{a}^-\}$ both with dimension $D$, and therefore the maximum number of unique solutions is $2D$, when $\{ \mathbf{a}^+ \}$ and $\{ \mathbf{a}^- \}$ are disjoint.

\end{proof}

\subsection{Proof of Lem.~\ref{lem:solve_k_for_vopt}} \label{prf:solve_k_for_vopt}

\textbf{Solving for Optimal Value via Top K Allocation:} For the \texttt{SD-MDP}, the optimal value can be obtained by solving the dual problem, which involves finding the value of $k$ in $\text{Top}_k(\mathbbm{E}[\tau_{\eta}])$ over $k \in \{ 1, \dots, T \}$ possibilities.

\begin{proof}

    The optimal value problem can be solved, by solving the dual problem, selecting the value of $k$ in $\text{Top}_k(\mathbbm{E}[\tau_{\eta}])$ over $k \in \{ 1, \dots, T \}$ possibilities. As the $\text{Top}_k(\tau_{\eta})$ operator truncates over an ordered sequence to only include the top $k$ elements, we generalize for any $t \in T$, the optimal value function. Let $||\mathbf{a}^+||_p$ and $|| \mathbf{a}^-||_p$ be shorthand for $||\langle \phi' g(\mathbf{x}_\eta^t), \, \mathbf{a}^+ \rangle||_p$ and $||\langle \phi' g(\mathbf{x}_\eta^t), \, \mathbf{a}^- \rangle||_p$ respectively, and these are norm equivalent for any value of $\mathbf{x}_\eta$ when considering $||\mathbf{a}^+||_p$ or $||\mathbf{a}^-||_p$ specifically. The optimal value function can be expressed as,
    
    
    \begin{align}
        V^*(\mathbf{x}^{t}) &= \max_{k \in \{1, \dots, T \} } \, V_k(\mathbf{x}^{t}) \label{eq:quad_prog_rep}\\
        \text{where,} \, \, \underline{A} &\leq \sum_{k} \norm{\mathbf{a}^+ [\mathbf{x}_\eta] }_p + (T-k) \norm{ \mathbf{a}^- [\mathbf{x}_\eta] }_p \leq \bar{A} 
    \end{align}
    
    Next, the application of the $\text{Top}_k(\cdot)$ constitutes the \textit{non-anticipative} solution for the planning problem under perfect information. Under unconstrained incremental dynamics, one could set $\norm{\mathbf{a}^+[\mathbf{x}_\eta]}_p = \bar{A}$, and $K = 1$, as the trivial solution, which represents the optimal stopping problem. But suppose incremental dynamics do exist, and $\underline{\Delta}_{a} \leq \norm{\mathbf{a} [\mathbf{x}_\eta] }_p \leq \bar{\Delta}_{a}$, then because $f(\cdot)$ is a strictly monotonic function the optimizing solution to Eq.~\eqref{eq:quad_prog_rep} occurs at $\{\mathbf{a}^+\} \cup \{\mathbf{a}^- \}$.

    \textbf{Solving for K:} We propose the dual problem from the primal problem in Eq.~\eqref{eq:quad_prog_rep}. K intuitively controls the maximum capacity allowable. Let us postulate  and we seek to maximize $K$, 

    \begin{align}
        \frac{\underline{A}}{T} \leq \frac{k}{T}\left(\norm{\mathbf{a}^+ [\mathbf{x}_\eta] }_p - \norm{\mathbf{a}^- [\mathbf{x}_\eta] }_p \right) + T \norm{\mathbf{a}^- [\mathbf{x}_\eta] }_p &\leq \frac{\bar{A}}{T} \label{eq:kT_constraint}
    \end{align}

    Given the fixed span $\norm{\mathbf{a}^+ [\mathbf{x}_\eta] }_p - \norm{\mathbf{a}^- [\mathbf{x}_\eta] }_p$, we therefore can compute $K$ from Eq.~\eqref{eq:kT_constraint}, where $K$ and $\norm{\mathbf{a}^- [\mathbf{x}_\eta] }_p$ are free variables subject to certain constraints, and $\norm{\mathbf{a}^+ [\mathbf{x}_\eta] }_p$ and $T$ are given. Next, the application of the $\text{Top}_k(\cdot)$ constitutes the \textit{non-anticipative} solution for the planning problem under perfect information. We propose the dual problem from the primal problem in Eq.~\eqref{eq:quad_prog_rep}, 

    \begin{align}
        k^* &= \underset{ k \in  \{1, ..., T \} }{\mathrm{argmax}} \,\, k\label{eq:lp_k*} \\
        \text{where,} \, \, &\quad k \left( \norm{\mathbf{a}^+ [\mathbf{x}_\eta] }_p - \norm{\mathbf{a}^-[\mathbf{x}_\eta]}_p \right) + T^2 \norm{\mathbf{a}^- [\mathbf{x}_\eta] }_p \leq \bar{A} \\
        &\quad k \left( \norm{\mathbf{a}^+ [\mathbf{x}_\eta] }_p - \norm{\mathbf{a}^-[\mathbf{x}_\eta]}_p \right) + T^2 \norm{\mathbf{a}^- [\mathbf{x}_\eta] }_p \geq \underline{A} 
    \end{align}

    To determines the maximizing $k$, we simply fix $k$, and search for a maximizing solution to the linear program represented in Eq.~\eqref{eq:lp_k*}. This LP formulation, assuming the constraints produce a feasible set, will generate one unique solution for $\norm{\mathbf{a}^+ [\mathbf{x}_\eta] }_p$, and $\norm{\mathbf{a}^- [\mathbf{x}_\eta] }_p$.

     

\end{proof}

\subsection{Technical Note on Concentration Bounds}

    Given any sub-Gaussian random variable $X_i$, we can express this inequality,
    

    \begin{align}
        \mathrm{P}\left( \sum_{i=1}^{n} X_{i} \geq t \right) \leq 2\exp \left( -\frac{ct^2}{\sum_{i=1}^{n} \Vert X_i \Vert_{\psi_2}^2} \right)
    \end{align}
    
    Where $\Vert X_i\Vert _{\psi _{2}}$ is the sub-Gaussian norm of $X_i$, defined as,
    $\Vert X\Vert _{\psi _{2}}:=\inf \left\{c\geq 0:\mathrm {E}\left(e^{X^{2}/c^{2}}\right)\leq 2\right\}$. The sub-Gaussian norm of a random variable X is defined as:
    $|X|_{\psi_2} = \inf \left\{ c > 0 : \mathbb{E}\left[e^{X^2 / c^2}\right] \leq 2 \right\}$.
    
    The sub-Gaussian norm is $|X|_{\psi_2} = \frac{b-a}{2}$. The variance proxy is $\sigma^2 = \frac{(b-a)^2}{4}$ and $\sum_{i=1}^{n}(b_i-a_i)^2 = 4 \sum_{i=1}^{n}\sigma_i^2$.

    \textbf{Relating Norm and Variance:} For a bounded random variable X with $a \leq X \leq b$, the sub-Gaussian norm is related to the variance proxy $\sigma^2 = \frac{(b-a)^2}{4}$ as 2: $|X|_{\psi_2} = \sqrt{\sigma^2} = \frac{b-a}{2}$.


\subsection{Proof of Thm.~\ref{thm:v_bound_sdmdp}} \label{prf:v_bound_sdmdp}

\textbf{Upper bound on the Monte Carlo Value Estimation for the \texttt{SD-MDP}:} For the \texttt{SD-MDP} abstraction partitionable MDP, the optimal policy, where the value function is upper bounded by $|\hat{V}_N - V^*(\mathbf{x}) | \leq \mathcal{O}((\delta \sqrt{N})^{-1})$, with probability $1-\delta$. Where $\hat{V}_N$ is the Monte Carlo simulation estimate of the value function under $N$ iterations. 

\begin{proof}
    For each stochastic trajectory $\tau_{\eta}$, we denote as as a sequence $(\cdot)$,

    \begin{align}
        \tau_{\eta} \equiv ( \mathbf{x}^{t=1}_{\eta}, \mathbf{x}^{t=2}_{\eta}, ..., \mathbf{x}^{T}_{\eta} )
    \end{align}

    We define a sequence of actions as $\tau_{a|\eta}$,

    \begin{align}
        \tau_{a|\eta} \equiv (\mathbf{a}^{t=1}, \mathbf{a}^{t=2}, ..., \mathbf{a}^{T} )
    \end{align}

    There must exists at least one optimal solution that for the optimal policy $\pi^*$, which we also denote as a sequence $\tau_{a|\eta}^*$, where $\tau_{a|\eta}^*$ is a sequence of deterministic actions, which yield the optimal solution for each trajectory $\tau_{\eta}$. Given a trajectory runs from $1 \to T$, and as consequence of Lem.~\ref{lem:sdmdp_binary_action}, there exists at most $(2\mathrm{D})^T$ possibly optimal permutations of $\tau_{a|\eta}^*$, where $\mathrm{D}$ is the dimension of the action space $\mathcal{A}$. Let $\tau^*_{a|\eta}$ denote an optimal solution such that, 

    \begin{align}
        \tau^*_{a|\eta} =  \underset{a \in \mathcal{A} } { \mathrm{argmax}} \sum_{x_{\eta}^t \in \tau_{\eta}} \mathbf{a}^{t} \cdot \phi f(\mathbf{x}_{\eta}^t) , \quad s.t. \quad  \underline{A} \leq \sum_{t=1}^T ||\mathbf{a}^t||_p \leq \bar{A}
    \end{align}

    Where we know $x_{\eta}^t$ is a stochastic variable belonging to a trajectory $\tau_{\eta}$. Based on the \textit{recency preference} assumption, illustrated by Eq.~\eqref{eq:action_capacity_sdmdp}.

    \begin{align}
        \mathbf{x}_\eta^t = \mathbf{x}_\eta^{t+\Delta} \implies \mathbf{x}_\eta^t \succ \mathbf{x}_\eta^{t+\Delta}, \quad \Delta \in \mathbbm{Z}
    \end{align}
    
    Under this assumption, for each $\tau_{\eta}$ there must be only one unique $\tau^*_{a|\eta}$, constituting a surjection from $\tau_{\eta} \mapsto \tau^*_{a|\eta}$.

    \begin{align}
        \tau_{\eta} \mapsto \tau^*_{a|\eta}
    \end{align}

    Since by Lem.~\ref{lem:sdmdp_binary_action} we have a finite ation space per discrete time period, in finite time the cardinality of $\tau^*_{a|\eta}$ is finite. Nevertheless, the cardinality of $\tau_{\eta}$ is infinite. Thus by the \lar{pigeonhole principle}, we can conclude that a $\tau^*_{a|\eta}$ can inversely map to potentially more than one $\tau_{\eta}$, we denote this set of corresponding sequences as $\tilde{\tau}_{\eta}$. 

    \begin{align}
        \tau_{\eta} \mapsto \tau^*_{a|\eta}, \quad  \tilde{\tau}_{\eta}  \leftrightarrow \tau^*_{a|\eta} \label{eq:bijec_tds}
    \end{align}
    
    Furthermore, let $\{ \tau^*_{a|\eta} \}$ denote all valid sequences of $\tau^*_{a|\eta}$ corresponding to $\tau_{\eta}$, by consequence of Lem.~\ref{lem:sdmdp_binary_action}, the cardinality of the injective map's image $\{ \tau^*_{a|\eta} \}$ bounded by, 

    \begin{align}
        1 \leq \{ \tau^*_{a|\eta} \} \leq (2D)^T
    \end{align}

    Most crucially, $\{ \tilde{\tau}_{\eta} \}$ denotes a set of sequences, of which are \textit{indifferent} from each other in terms of their corresponding optimal policy $\tau^*_{a|\eta}$, constituting the optimal solution to the trajectory, for $\mathbf{x}_{\eta} \subset \mathbb{R}^{\mathrm{D}}$. Although given parameters of stochastic generator $\theta$, the probability of observing $\tau_{\eta}$ is infinitesimal, the probability of  $\tau_{\eta} \in \{ \tilde{\tau}_{\eta} \}$ is quantifiable, given the functional form and parameters of the stochastic generator. That is,
    
    \begin{align}
        P_\theta(\tau_{\eta} \in \{ \tilde{\tau}_{\eta} \} | \mathcal{F}_t) > 0
    \end{align}

    Let the operator $\{ \tau_{\eta} \}$ define the set of all sequence of trajectories $\tau_{\eta}$, we define an operator $\mathcal{Q}(\cdot): \{ \tau_{\eta} \} \mapsto \{ \tilde{\tau}_{\eta} \}$ which partitions $\{ \tau_{\eta} \}$ into into \textit{indifference sets}, $\tilde{\tau}_{\eta}$. In fact, from the relation in Eq.~\eqref{eq:bijec_tds} and Lem.~\ref{lem:sdmdp_binary_action} the cardinality of this partition function can be deduced as,

    \begin{align}
        1 \leq |\mathcal{Q}(\{ \tau_{\eta} \})| \leq {T \choose k} D^T \leq (2D)^T
    \end{align}
    
    Where $k \leq T$ is the capacity maximizing number governed by solution to the dual problem posed in Eq.~\eqref{eq:quad_prog_rep}. Let $\{ \mathbf{x}_{\eta}^{t \to T} \}$ denote the set of all possible trajectories for $\mathbf{x}_{\eta}$ between $t$ to $T$. The expectation thereof for $\mathbb{E}[\{ \mathbf{x}_{\eta}^{t \to T} \}]$ can be calculated via stochastic calculus, or approximated by simulation. Given two sequences of vectors, $\mathbf{x}^{t \to T}$ and $\mathbf{y}^{t \to T}$, let us define a an operator $\doubleinner{\cdot}{\cdot}$ as,

    \begin{align}
        \doubleinner{\mathbf{x}^{t \to T}}{\mathbf{y}^{t \to T}} = \sum_{i=1}^{T} \langle \mathbf{x}_i, \mathbf{y}_i \rangle
    \end{align}
    
    We can now examine the value function, 
    

    \begin{align}
        V^*(\mathbf{x}^t) &= \mathbb{E} \Big[\doubleinner{\phi f(\mathbf{x}_{\eta}^{t \to T})}{\tau^*_{a|\eta}} \Big] \label{eq:expectation_v_sdmdp} \\
        &= \int_{\tau_{\eta}} P_\theta(\tau_{\eta} | \mathcal{F}_t) \,\, \doubleinner{\phi f(\tau_{\eta})}{\tau^*_{a|\eta}}   \, d\tau_{\eta}, \quad 1 < |\mathcal{Q}(\{ \tau_{\eta} \})| < (2D)^T \label{eq:expectation_v_sdmdp_integ} \\ 
        &\leq \sum_{\mathcal{Q}(\{ \tau_{\eta} \})} P_\theta(\tau_{\eta} \in \{ \tilde{\tau}_{\eta} \} | \mathcal{F}_t)  \,\,   \doubleinner{\,\, \bar{\tau}_s}{\phi f(\tau^*_{a|\eta}) \,}, \quad 1 < |\mathcal{Q}(\{ \tau_{\eta} \})| < (2D)^T 
    \end{align}

    Eq.~\eqref{eq:expectation_v_sdmdp} represents the optimal value function as the expectation over the reward/cost of the unravelling of the stochastic partition of the \texttt{SD-MDP}, $\phi f(\mathbf{x}_{\eta}^{t \to T})$, assuming an optimal deterministic policy $\tau^*_{a|\eta}$ in adhered to by the agent. Eq.~\eqref{eq:expectation_v_sdmdp_integ} expresses the value function as an integral over the probability of each possible outcome $\phi f(\tau_{\eta})$. In principle, $\{ \mathbf{x}_{\eta}^{t \to T} \}$ can be partitioned in to indifference sets $\{ \tilde{\tau}_{\eta} \}$, via the $\mathcal{Q}(\cdot)$ partition operator, where each indifference maps to a unique $\tau^*_{a|\eta}$. And thus we can make the expectation in Eq.~\eqref{eq:expectation_v_sdmdp} decomposable. This presents a key advantage as any upper bounding value function $\bar{V}^*(\mathbf{x}^t) \geq V^*(\mathbf{x}^t)$ can be expressed as the summation of the product of two terms.
    
    \begin{align}
       V^*(x^t) \leq \bar{V}^*(x^t) &= \sum_{\mathcal{Q}(\{ \tau_{\eta} \})} \bar{g}_{\tau_{\eta}}(\{ \tilde{\tau}_{\eta}\}) \,\,  \bar{g}_s( \tau^*_{a|\eta}) \\
       \text{where,} \quad \bar{g}_{\tau_{\eta}}(\cdot) & \geq P_\theta(\tau_{\eta} \in \{ \tilde{\tau}_{\eta} \} | \mathcal{F}_t), \quad \forall \tau_{\eta} \in \{ \tilde{\tau}_{\eta} \} \\
       \bar{g}_s(\cdot) &\geq \doubleinner{ \, \phi f(\tau_{\eta})}{\, \tau^*_{a|\eta} \,}, \quad \forall \tau^*_{a|\eta \,} \in \tau_a
    \end{align}

    Finding the upper-bound to $\bar{V}^*(x^t)$ is now decomposed into a problem involving finding an two tight upper bounding functions $\bar{g}_{\tau_{\eta}}(\cdot)$ and $\bar{g}_s(\cdot)$. For \textit{incremental action dynamics} as defined for the \texttt{SD-MDP} framework, there exists a limit on the capacity of actions as defined in Eq.~\eqref{eq:action_capacity_sdmdp}. 
    

    
    If we define an operator over $\{\tau_{\eta}\}$ such that,

    \begin{align}
       \text{Top}_k(\tau_{\eta}): \{\tau_{\eta}\} \cross K \mapsto \mathbbm{R}^{|\mathcal{X}| \cross T}, \quad \text{where,} \, \, K \in \mathbbm{Z}^+ 
    \end{align}
    
    Where the image, $\text{Im}(\text{Top}_k(\cdot)) \in \mathbbm{R}^{|\mathcal{X}| \cross T}$. We know that the $\text{Top}_k(\tau_{\eta})$ computation can occur with complexity $\mathcal{O}(T)$, as it involves sorting over elements.  Therefore, we can denote an expression for $\doubleinner{\phi f(\tau_{\eta})}{\tau^*_{a|\eta}}$ by writing, 
    
    \begin{align}
       \doubleinner{\phi f(\tau_{\eta})}{\tau^*_{a|\eta}} &= \underset{ \mathbf{a} \in \{ \mathbf{a}^+ \} }{\mathrm{max}} \,\, \doubleinner{\phi f(\text{Top}_{K}(\{\tau_{\eta}\}))}{\mathbf{a}} + \underset{ \mathbf{a} \in \{ \mathbf{a}^- \} }{\mathrm{max}} \,\, \doubleinner{\phi f(/\text{Top}_{K}(\{\tau_{\eta}\})) }{\mathbf{a}}, \quad \text{where,} \, \, K = \Bigl\lfloor \frac{\bar{A}}{T} \Bigr\rfloor \label{eq:max_double_dotprod_k}
    \end{align}

    Note that Eq.~\eqref{eq:max_double_dotprod_k} is equivalent to Eq.~\eqref{eq:vk_value_rep} from Lem.~\ref{lem:solve_k_for_vopt}, and $\tau_{\eta} \in \{ \tilde{\tau}_{\eta} \}$. So now fundamentally, we can express the computation of the value function as a decomposition,

    \begin{align}
        V^*(\mathbf{x}^t) &= \doubleinner{\mathbbm{E}[\phi f(\tau_{\eta})]}{\mathbbm{E}[\tau^*_{a|\eta}]} \\
        &= \mathbbm{E}[\doubleinner{\phi f(\tau_{\eta})}{\tau^*_{a|\eta}}] \\
        &= \underset{ \mathbf{a} \in \{ \mathbf{a}^+ \}} {\mathrm{max}} \,\,  \mathbbm{E}[\doubleinner{\phi f(\text{Top}_{K}(\{\tau_{\eta}\}))}{\mathbf{a}}] + \underset{ \mathbf{a} \in \{ \mathbf{a}^- \}} {\mathrm{max}} \,\,  \mathbbm{E}[ \doubleinner{\phi f(/\text{Top}_{K}(\{\tau_{\eta}\})) }{\mathbf{a}}]  \label{eq:decomp_exp_tau_s}
    \end{align}

    From Eq.~\eqref{eq:decomp_exp_tau_s} we can see that an approximation of $V^*(\mathbf{x}^t)$ can be computed by simulating over the stochastic trajectory $\tau_{\eta}$, then applying a deterministic function, $\text{Top}_{K}(\cdot)$, over it.
    
    \textbf{Bounding the Approximation Error:} To bound the expression on Eq.~\eqref{eq:decomp_exp_tau_s} we can take advantage of the $\mathcal{Q}(\cdot)$ operator. 
    
    
    \begin{align}
         V^*(\mathbf{x}^t) &= \sum_{\mathcal{Q}( \tau_{\eta} )} P_\theta(\tau_{\eta} \in \tilde{\tau}_{\eta} | \mathcal{F}_t) \,\, \Big( \, \underset{ \mathbf{a} \in \{ \mathbf{a}^+ \}} {\mathrm{max}} \,\, \doubleinner{\phi f(\mathbbm{E}[\text{Top}_{K}(\tilde{\tau}_{\eta})])}{\mathbf{a}} + \underset{ \mathbf{a} \in \{ \mathbf{a}^- \}} {\mathrm{max}} \,\, \doubleinner{\phi f(\mathbbm{E}[/\text{Top}_{K}(\tilde{\tau}_{\eta})])}{\mathbf{a}} \Big) \label{eq:value_estimate_sum_sdmp}
    \end{align}


    The next major challenge likes in the computation of $P_\theta(\tau_{\eta} \in \{ \tilde{\tau}_{\eta} \} | \mathcal{F}_t)$ which is the probability that any trajectory $\tau_{\eta}$ belongs to the indifference set $\{ \tilde{\tau}_{\eta} \}$, given parameters of the stochastic generator $\theta$ and filtration $\mathcal{F}_t$. Nevertheless, this formulation allow us to separate the reward outcome, which is deterministically computable from the probability of it occurring. To compute the conditional expectation of $\mathbbm{E}[\text{Top}_{K}(\tilde{\tau}_{\eta})]$, we can apply Bayes rule to conditional expectations and obtain a closed from expression for $\mathbbm{E}[\text{Top}_{K}(\tilde{\tau}_{\eta})]$ in Eq.~\eqref{eq:exp_topk_express}.

    \begin{align}
        \mathbbm{E}[\text{Top}_{K}(\tilde{\tau}_{\eta})] = \frac{1}{P(\tau_{\eta} \in \tilde{\tau}_{\eta})} \int_{\tau_{\eta} \in \tilde{\tau}_{\eta}} \text{Top}_{K}(\tilde{\tau}_{\eta}) \, \, \mathbbm{1}[\tau_{\eta} \in \tilde{\tau}_{\eta}] \, d\tau_{\eta} \label{eq:exp_topk_express}
    \end{align}
    
    Therefore, we can then apply a Monte Carlo approach to solve the problem of value estimation. But key advantages are imposed, first is \textit{hindsight} optimality. From Eq.~\eqref{eq:value_estimate_sum_sdmp} we can solve an optimal policy using the $\text{Top}_{K}(\cdot)$ operator over $\tau_{\eta}$ because there exists no causal relationship between action and state transition, as defined in the \texttt{SD-MDP} dynamic. We also know, that every trajectory has a unique deterministic optimal value outcome by the relation in Eq.~\eqref{eq:bijec_tds}. The same argument applies to $\mathbbm{E}[/\text{Top}_{K}(\tilde{\tau}_{\eta})]$. Moving forward, for convienience, let us define,

    \begin{align}
        \mathcal{H}(\tilde{\tau}_{\eta}) \equiv \underset{ \mathbf{a} \in \{ \mathbf{a}^+ \}} {\mathrm{max}} \,\, \doubleinner{\phi f(\mathbbm{E}[\text{Top}_{K}(\tilde{\tau}_{\eta})])}{\mathbf{a}} + \underset{ \mathbf{a} \in \{ \mathbf{a}^- \}} {\mathrm{max}} \,\, \doubleinner{\phi f(\mathbbm{E}[/\text{Top}_{K}(\tilde{\tau}_{\eta})])}{\mathbf{a}}
    \end{align}

    We can see that $\mathcal{H}(\tilde{\tau}_{\eta})$ serves as a deterministic function for each input $\tilde{\tau}_{\eta}$. As we know that $\mathcal{Q}(\cdot)$ will produce at most $(2D)^T$ partitions of the trajectory space $\{ \tau_{\eta} \}$, we can treat this as the approximation of a multinomial distribution via MC. Let our estimator simply be,

    \begin{align}
         \hat{P}_\theta^N(\tau_{\eta} \in \tilde{\tau}_{\eta}  | \mathcal{F}_t) = \frac{1}{N} \sum_i^{N} \mathbbm{1}[\tau_{\eta} \sim \theta \in \tilde{\tau}_{\eta}] 
    \end{align}
    
    Secondly, our value approximator would be defined as,

    \begin{align}
         \hat{V}^*_N(\mathbf{x}^t) &=  \sum_{\mathcal{Q}(\{ \tau_{\eta} \})} \hat{P}_\theta^N(\tau_{\eta} \in \tilde{\tau}_{\eta}  | \mathcal{F}_t) \,\,  \mathcal{H}(\tilde{\tau}_{\eta}) , \quad \text{where,} \, \, K = \Bigl\lfloor \frac{\bar{A}}{T} \Bigr\rfloor \label{eq:value_estimator_sdmp}
    \end{align}

    \textbf{Convergence:} By law of large numbers $|| \hat{P}_\theta^N(\cdot) - P|| \to 0$, and we argue consequently $||\hat{V}^*_N(\cdot) - V^*(\cdot)||$ when $N \to \infty$. To demonstrate convergence, we can apply concentration bounds to quantify the approximation error, via Hoeffding's inequality. Thus we seek the probability that $\tau_{\eta} \in  \tilde{\tau}_{\eta} $ by taking $N$ Monte Carlo samples,

    \begin{align}
        N(\tilde{\tau}_{\eta}) &= \sum^N \mathbbm{1}[\tau_{\eta} \in \{ \tilde{\tau}_{\eta} \}]  \\
        \mathbbm{E}[N(\tilde{\tau}_{\eta})] &= N \cdot p_\theta(\tilde{\tau}_{\eta})
    \end{align}

    Where $p_\theta(\tilde{\tau}_{\eta})$ is the finite multinomial probability that the event $\tau_{\eta} \in  \tilde{\tau}_{\eta} $ occurs. Thus the total variance of each counts on each $\tau_{\eta} \in \tilde{\tau}_{\eta} $ is,

    \begin{align}
        \text{Var}[N(\tilde{\tau}_{\eta})] &= N \cdot p_\theta(\tau_{\eta} \in \tilde{\tau}_{\eta}) \cdot (1-p_\theta(\tau_{\eta} \in \tilde{\tau}_{\eta})) \\
        \text{Var}[P(\tilde{\tau}_{\eta})] &= p_\theta(\tau_{\eta} \in \tilde{\tau}_{\eta}) \cdot (1-p_\theta(\tau_{\eta} \in \tilde{\tau}_{\eta}))
    \end{align}
    

    
    We can now apply Hoeffding's inequality, let shorthand $p_\theta(\tau_{\eta}) \equiv p_\theta(\tau_{\eta} \in \tilde{\tau}_{\eta} | \mathcal{F}_t)$, 
    
    \begin{align}
        P\left(  \left| \mathbbm{E}[\hat{P}_\theta^N(\tilde{\tau}_{\eta}) - P_\theta(\tilde{\tau}_{\eta}) ] \right| \geq \epsilon \right) &\leq 2\exp\left(-\frac{2N \epsilon^2}{p_\theta(\tau_{\eta}) \cdot (1-p_\theta(\tau_{\eta})) }\right) \\
        P\left(\left| \mathbbm{E}[\hat{P}_\theta^N(\tilde{\tau}_{\eta}) - P_\theta(\tilde{\tau}_{\eta}) ] \right| \geq \epsilon \right) &\leq 2\exp\left(-\frac{2N \epsilon^2}{ \sigma^2}\right)
    \end{align}

    Therefore, with probability $1-\delta$, 
    
    \begin{align}
        P\left(\left| \mathbbm{E}[\hat{P}_\theta^N(\tilde{\tau}_{\eta}) - P_\theta(\tilde{\tau}_{\eta}) ] \right| \geq \epsilon \right) \leq \delta \\
        2\exp\left(-\frac{2N \epsilon^2}{\sigma^2}\right) \leq \delta \\
        \log\left(2\exp\left(-\frac{2N \epsilon^2}{\sigma^2}\right)\right) \leq \log \delta \\
        \epsilon \geq \sigma \sqrt{\frac{1}{2N} \log \frac{2}{\delta}}
    \end{align}
    
    Thus we have, with probability $1 - \delta$, 
    
    \begin{align}
        \left| \mathbbm{E}[\hat{P}_\theta^N(\tilde{\tau}_{\eta}) - P_\theta(\tilde{\tau}_{\eta}) ] \right|  \leq \sigma \sqrt{\frac{1}{2N} \log \frac{2}{\delta}} = \epsilon(\tilde{\tau}_{\eta})
    \end{align}
    
    
    Therefore, we can express the value function as, 
    

    \begin{align}
        V^*(x^t) &= \sum_{ \mathcal{Q}(\{ \tilde{\tau}_{\eta} \}) }  P_\theta(\tilde{\tau}_{\eta}) \,\, \mathcal{H}(\tilde{\tau}_{\eta}) 
    \end{align}

    Where with probability $1 - \delta$, 
    
    \begin{align}
        V^*(x^t) &\leq \sum_{ \mathcal{Q}(\{ \tilde{\tau}_{\eta} \}) }  \Big( P_\theta(\tilde{\tau}_{\eta}) + \epsilon(\tilde{\tau}_{\eta}) \Big) \,\, \mathcal{H}(\tilde{\tau}_{\eta}) \\
        &= \sum_{ \mathcal{Q}(\{ \tilde{\tau}_{\eta} \}) } P_\theta(\tilde{\tau}_{\eta}) \mathcal{H}(\tilde{\tau}_{\eta}) + \sum_{ \mathcal{Q}(\{ \tilde{\tau}_{\eta} \}) } \epsilon(\tilde{\tau}_{\eta}) \mathcal{H}(\tilde{\tau}_{\eta}) \\
        &= V^*(x^t) + \sum_{ \mathcal{Q}(\{ \tilde{\tau}_{\eta} \}) } \epsilon(\tilde{\tau}_{\eta}) \mathcal{H}(\tilde{\tau}_{\eta}) 
    \end{align}

    Which has the implication that, 

    \begin{align}
        V^*(x^t) \leq \sum_{ \mathcal{Q}(\{ \tilde{\tau}_{\eta} \}) }  \hat{P}^N_\theta(\tilde{\tau}_{\eta}) \,\, \mathcal{H}(\tilde{\tau}_{\eta}) \implies \bar{V}^*(x^t) - V^*(x^t) \leq \sum_{ \mathcal{Q}(\{ \tilde{\tau}_{\eta} \}) } \epsilon_{\tau_{\eta}} \mathcal{H}(\tilde{\tau}_{\eta})
    \end{align}
    
    We can write this also in vector notation as, 
    
    \begin{align}
        \Delta_V^+ = \mathbf{C}_{\mathcal{Q}(\cdot)} \cdot \mathcal{H}(\tilde{\tau}_{\eta})
    \end{align}

    Where, for $q \in \{1, \dots, |Q|\}$ partitions, 
    
    \begin{align}
        \mathbf{C}_{\mathcal{Q}(\cdot)} &= [c_1(\tau_{\eta}, \delta) \dots c_{Q}(\tau_{\eta}, \delta)]^T \\
        c_q(\tau_{\eta}, \delta) &=  \sqrt{\frac{p_q(\tilde{\tau}_{\eta})(1-p_q(\tilde{\tau}_{\eta}))}{2} \log(\frac{2}{\delta})}, \quad \forall q \in \mathcal{Q}(\{ \tilde{\tau}_{\eta} \})
    \end{align}

    $\Delta_V^+$ constitutes the maximum possible over estimation due to misspecification. This is fixed and determined by the properties of the stochastic generator $\theta$, and due to misspecification, the upperbound on the value function decreases with rate $\frac{1}{\sqrt{N}}$. Thus we have an upper bound on the value,

    \begin{align}
        \bar{V}^*(x^t) - V^*(x^t) &\leq \sum_{ \mathcal{Q}(\{ \tilde{\tau}_{\eta} \}) } \epsilon(\tilde{\tau}_{\eta}) \, \mathcal{H}(\tilde{\tau}_{\eta}) = \frac{\Delta_V^+}{\sqrt{N}}.
    \end{align}

    \lar{Without loss of generality, the opposite can be argued for a lower bound, as $\epsilon(\tilde{\tau}_{\eta})$ adheres to a symmetric relation,}
    
    \begin{align}
        V^*(x^t)  - \underline{V}^*(x^t) \leq \frac{\Delta_V^+}{\sqrt{N}}.
    \end{align}
    
\end{proof}

\iftoggle{uai}{\clearpage}

\subsection{Lem.~\ref{lem:binom_sum} and Proof} \label{prf:binom_sum}

\begin{lemma} \label{lem:binom_sum}
    \textbf{Binomial Sum Simplification:} From \cite{feller:1991-prob_theory} (Pg. 151), the following inequality holds,
    \begin{align}
        P(\Sigma_i \geq k) = P(\Sigma_i = k ) \frac{k(1-p)}{k - Tp}
    \end{align}
\end{lemma}

\begin{proof}
Let $X_1,\cdots,X_T$ be a sequence of variables with Bernoulli distribution $X_T \sim B(T,p)$, assuming $k>Tp$,
    \begin{align}
        P(\sum_{i=1}^{T} X_i \geq k) \leq P(\sum_{i=1}^{T} X_i = k ) \frac{k(1-p)}{k - Tp}
    \end{align}

We can express,
\begin{align}
    \frac{P(\sum_{i=1}^{T} X_i =k+1)}{P(\sum_{i=1}^{T} X_i =k)}=\frac{\frac{T!}{(k+1)!(T-k-1)!} p^{k+1} (1-p)^{T-k-1} }{\frac{T!}{k!(T-k)!} p^k (1-p)^{T-k} }
    =\frac{(T-k)p}{(k+1)(1-p)}
\end{align}

For $j\geq k$,
\begin{align}
    P(\sum_{i=1}^{T} X_i =j) &= \frac{P(\sum_{i=1}^{T} X_i =k+1)}{P(\sum_{i=1}^{T} X_i =k)} \cdots \frac{P(\sum_{i=1}^{T} X_i =j)}{P(\sum_{i=1}^{T} X_i =j-1)} P(\sum_{i=1}^{T} X_i =k)\\
    & \leq P(\sum_{i=1}^{T} X_i =k) \left (\frac{(T-k)p}{(k+1)(1-p)}\right ) ^ {j-k}
\end{align}

We therefore obtain,

\begin{align}
    P(\sum_{i=1}^{T} X_i \geq k) &= \sum_{j=k}^{T}P(\sum_{i=1}^{T} X_i =j)\leq P(\sum_{i=1}^{T} X_i=k) \sum_{j=k}^{T}\left (\frac{(T-k)p}{(k+1)(1-p)}\right )^ {j-k}\\
    &= P(\sum_{i=1}^{T} X_i=k) \frac{1- \left ( \frac{(T-k)p}{(k+1)(1-p)}\right )^{T-k+1} }{1-\frac{(T-k)p}{(k+1)(1-p)}} \\
    &\leq P(\sum_{i=1}^{T} X_i=k) \frac{1}{1-\frac{(T-k)p}{(k+1)(1-p)}} = P(\sum_{i=1}^{T} X_i=k) \frac{(k+1)(1-p)}{k-Tp+1-p} \\
    &< P(\sum_{i=1}^{T} X_i=k) \frac{k(1-p)}{k-Tp}
\end{align}

\end{proof}

\subsection{Proof of Thm.~\ref{thm:sdmdp_mc_simple_regret}} \label{prf:sdmdp_mc_simple_regret}

\textbf{Simple Regret:} Given a Monte Carlo planning algorithm, $\mathcal{M}$, where $\tilde{p}(T) = P(\mathbf{a}^* \neq \mathbf{a})$, where $\lim_{T \to \infty} \tilde{p}(T) = 0$ and $\tilde{p}(T)$ is asymptotically bounded above by $\mathcal{O}(\frac{1}{T})$ for $T$ samples, when running $\mathcal{M}$ over the \texttt{SD-MDP} framework, the  the simple regret $\mathrm{reg}(T)$, as defined in Eq.~\ref{eq:simple_regret}, is bounded by $C_k T \tilde{p}(T)$, for some value $C_k \in \mathbbm{R}^+$. 

\begin{proof}
    From Lem.~\ref{lem:sdmdp_binary_action} we ascertain that as $t$ grows $t \to T$ at each time increment the optimal action $\mathbf{a}^*$ lines in a discrete set $\{ \mathbf{a}^+ \} \cup \{ \mathbf{a}^- \}$. Let $\tilde{p}(t) = P(\mathbf{a}^* \neq \mathbf{a})$, given a trajectory, where the agent has acted perfectly, we first quantify the regret accumulated from making $k$ \textit{swaps}, that is instead of performing $\mathbf{a}^t = \mathbf{a}^* \in \{ \mathbf{a}^-\}$, the agent instead performs $\mathbf{a}^t \in \{ \mathbf{a}^+ \}$ and (vice-versa). Logically it follows, for $k$ swaps of any given capacity, the agent would play the $\mathbf{a}^t \in \{ \mathbf{a}^+ \}$ at another point in time. When each of these swaps occur, the worst case instantaneous regret (or gap) is defined as $\tilde{\Delta}_a$,
    
    \begin{align}
        \tilde{\Delta}_a = \max_{\mathbf{a} \in \{ \mathbf{a}^+ \} } \langle \phi f(\mathbf{x}_s) , \mathbf{a}  \rangle - \max_{\mathbf{a} \in \{ \mathbf{a}^- \} } \langle \phi f(\mathbf{x}_s) , \mathbf{a}  \rangle
    \end{align}
    
    Let us next determine the consequences of each outcome, should $i$ swaps occur.
    
    \begin{align}
        \mathrm{reg}(T|\tau_{\eta}) &= 
            \begin{cases} 
                k \tilde{\Delta}_a, \quad & k \leq i \leq T \\ 
                i \tilde{\Delta}_a, \quad & i < k \leq T \\
            \end{cases} 
    \end{align}

    Assuming $k>T\tilde{p}$, which is possible as we state that $\tilde{p}$ is  asymptotically bounded above by $\mathcal{O}(1/T)$, we can therefore bound the expectation over simple regret as, 
    
    \begin{align}
        \mathbbm{E}[\mathrm{reg}(T)] &\leq \tilde{\Delta}_a \sum^{k-1}_{i=1} i \binom{T}{i} \tilde{p}^i(1 - \tilde{p})^{T-i} + k \tilde{\Delta}_a \sum^{T}_{i=k} \binom{T}{i} \tilde{p}^i(1 - \tilde{p})^{T-i} \\
        &= \tilde{\Delta}_a \sum^{k-1}_{i=1} i \binom{T}{i} \tilde{p}^i(1 - \tilde{p})^{T-i} + k \tilde{\Delta}_a P(\Sigma_i \geq k )
    \end{align}
    
    Which we can simplify further as (from Lem.~\ref{lem:binom_sum}),
    
    \begin{align} 
        \mathbbm{E}[\mathrm{reg}(T)] &\leq \tilde{\Delta}_a \sum^{k-1}_{i=1} i \binom{T}{i} \tilde{p}^i(1 - \tilde{p})^{T-i} + k \tilde{\Delta}_a P(\Sigma_i = k ) \Big( \frac{k(1- \tilde{p})}{k - T\tilde{p}} \Big) \label{eq:reg1}\\
        &= \tilde{\Delta}_a \sum^{k-1}_{i=1} i \underbrace{\binom{T}{i} \tilde{p}^i(1 - \tilde{p})^{T-i}}_{F^T_{\tilde{p}}(i)} + k \tilde{\Delta}_a \underbrace{\binom{T}{k} \tilde{p}^k(1 - \tilde{p})^{T-k}}_{F^T_{\tilde{p}}(k)} \Big( \frac{k(1- \tilde{p})}{k - T\tilde{p}} \Big) \label{eq:reg_bi_term_binom}
    \end{align}
    


    Taking a close up of the limit on the fractional term, $\frac{k(1- \tilde{p})}{k - T\tilde{p}}$,

    \begin{align}
        \lim_{\tilde{p} \to 0} \frac{k(1- \tilde{p})}{k - T\tilde{p}} = 1 \label{eq:limit_Tk}
    \end{align}


    
    Let $F^T_{\tilde{p}}(x)$ represent the binomial probability, with respect to $x$ successes, given $T$ attempts, and a success probability $\tilde{p}$.
    
    \begin{align}
        F^T_{\tilde{p}}(x) = \binom{T}{x} \tilde{p}^x(1 - \tilde{p})^{T-x}
    \end{align}

    We can then represent Eq.~\eqref{eq:reg1} as,
    
    \begin{align}
        \mathbbm{E}[\mathrm{reg}(T)] &\leq \tilde{\Delta}_a \sum^{k-1}_{i=1} i F^T_{\tilde{p}}(i) + k \tilde{\Delta}_a F^T_{\tilde{p}}(k) \label{eq:sdmdp_regret_decomp}
      \end{align}




     
    
    From Eq.~\eqref{eq:sdmdp_regret_decomp} the right hand side is vanishing as $T \to \infty$, as  $F^T_{\tilde{p}}(k) \to 0$. What is more important is we find an upper-bound to the right hand side term, $\tilde{\Delta}_a \sum^{k-1}_{i=1} i F^T_{\tilde{p}}(i)$. To accomplish this, let us impose an upper-bounding function, $\mathcal{U}^T(x, \tilde{p})$ which we define as a polynomial,
    
    \begin{align}
        \mathcal{U}^T(x, \tilde{p}) = \alpha x (x - \beta) + \gamma &\geq F^T_{\tilde{p}}(x), \qquad \forall x \in \mathbbm{Z}, \, 1 \leq x \leq k, \quad \forall \tilde{p} \in [0, 1] \label{eq:upper_binom_bound}
    \end{align}
    
    By adjusting the terms $\alpha, \beta, \gamma \in \mathbbm{R}$, we can always construct some upperbounding function which upper-bounds the function $F^T_{\tilde{p}}(x)$ on the interval $\forall x \in \mathbbm{Z}, \, 1 \leq x \leq k, \forall \tilde{p} \in [0, 1]$, this is due to the concave nature of $F^T_{\tilde{p}}(x)$. That is for any given $T \in [1, \infty)$ we can select some combination of $\alpha, \beta, \gamma$ to satisfy Eq.~\eqref{eq:upper_binom_bound}. Once again we remind the reader that $\tilde{p}(t)$ is actually a dynamic feature of time $t$, as $t$ increments from $1 \to T$. We can see that $\alpha, \beta$ are invariant of $\tilde{p}(t)$ for $t \in [1, T]$, should $\alpha, \beta$ be properly selected. Although we are free to select $\gamma$ as we desire to form $\mathcal{U}^T(x)$, it will become evident later that we must select a $\gamma(T)$ which is a function of $T$, that approaches 0 as $T \to \infty$. To accomplish this, we first acknowledge that for the binomial distribution, $F^T_{\tilde{p}}(x)$ for small values of $\tilde{p}$, which is the case when $T \to \infty$,
    
    \begin{align}
        F^T_{\tilde{p}}(x=0) \geq F^T_{\tilde{p}}(x=1) \geq F^T_{\tilde{p}}(x=2) \geq \dots \geq F^T_{\tilde{p}}(x=T) \label{eq:binom_small_p_ineq}
    \end{align}
    
    We solve the inequality for the polynomial $\mathcal{U}^T(x=0, \tilde{p}) \geq F^T_{\tilde{p}}(x=1)$, where from Eq.~\eqref{eq:binom_small_p_ineq}, $\gamma(T)$ serves as an upper-bound for all $F^T_{\tilde{p}}(x), \, \forall x \in 1 \dots T$ by induction.
    
    \begin{align}
        \mathcal{U}^T(x = 0, \tilde{p}) = \alpha x (x - \beta) + \gamma(T) = \gamma(T) \geq F^T_{\tilde{p}}(x=1)
    \end{align}
    
    Given any fixed value of $k$ which is determined by the capacity constraint of the \texttt{SD-MDP} framework, for any arbitrarily large value of $T$, there exists some $\alpha$, and $\beta$ independent of $T$ which, in combination with a properly selected $\gamma(T)$ will serve to upper-bound $F^T_{\tilde{p}}(x=1)$. We enforce the assumption that that $\tilde{p}(T) \to 0$ as $T \to \infty$, we therefore utilize the inequality $\gamma(T) \geq F^T_{\tilde{p}}(x=1)$,
    
    \begin{align}
        \lim_{T \to \infty} \gamma(T) &\geq \lim_{T \to \infty} F^T_{\tilde{p}}(x=1) \\
        &= \lim_{T \to \infty} \frac{T!}{1!(T-1)!} \tilde{p}(T)^1(1 - \tilde{p}(T))^{T-1} \\
        &= \lim_{T \to \infty} T \tilde{p}(T)(1 - \tilde{p}(T))^{T-1} \\
        &= \lim_{T \to \infty} T \tilde{p}(T) \lim_{T \to \infty} (1 - \tilde{p}(T))^{T-1} \\
        &= \lim_{T \to \infty} T \tilde{p}(T) \label{eq:lim_TpT}
    \end{align}
      
    This allows us to select the upper bound on $F^T_{\tilde{p}}(\cdot)$, as,

    \begin{align}
        T \tilde{p}(T) \geq F^T_{\tilde{p}}(x), \, \forall x \in 0 ... T
    \end{align}
    
    An identical result can also alternatively be obtained by applying Lem.~\ref{lem:binom_large_T_small_p}. To study simple regret behaviour as $T \to \infty$, we have a return look at Eq.~\eqref{eq:sdmdp_regret_decomp}.
    
    \begin{align}
        \lim_{T \to \infty} \mathbbm{E}[\mathrm{reg}(T)] &\leq \lim_{T \to \infty} \tilde{\Delta}_a \sum^{k-1}_{i=1} i F^T_{\tilde{p}}(i) + k \tilde{\Delta}_a F^T_{\tilde{p}}(k) \frac{k}{T} \\
        &= \lim_{T \to \infty} \tilde{\Delta}_a \sum^{k-1}_{i=1} i F^T_{\tilde{p}}(i) \\
        &\leq \lim_{T \to \infty} \tilde{\Delta}_a \sum^{k-1}_{i=1} i (\alpha_k i(i - \beta_k) + \gamma(T)) \label{eq:reg_inequality_sdmdp_v1}
    \end{align}
    
    Where for any particular value of $k$, we select $\alpha_k < 0$ and $\beta_k > 0$ such that for some decreasing value $\gamma(T)$, the term $\alpha_k i(i - \beta_k) + \gamma(T)$ upper bounds $ F^T_{\tilde{p}}(i)$. We wish to investigate the scenario for large values of $T$, as $T \to \infty$, therefore, we can assume that the binomial distribution, for any value $F^T_{\tilde{p}}(x > 1) < F^T_{\tilde{p}}(x = 1)$ for sufficiently large values of $T$, as $F^T_{\tilde{p}}(x)$ enters a regime of monotonically decreasing tail behaviour for increasing values of $x > 1$, and small values of $\tilde{p}$. Therefore, in that regime, we can consider $\alpha_k = 0$, and we can upper bound $F^T_{\tilde{p}}(x)$ simply with $\gamma(T)$. Therefore, in this regime, for large values of $T$, we can say,
    
    \begin{align}
        \lim_{T \to \infty} \mathbbm{E}[\mathrm{reg}(T)] &\leq \lim_{T \to \infty} \tilde{\Delta}_a \sum^{k-1}_{i=1} i\gamma(T)
    \end{align}
    
    Effectively we have $k$ constants independently of $T$, which in summation can serve to bound the simple regret in the asymptotic regime. And we can combine the additive terms into some crude upper-bounding additive term $C_k$.
    
    \begin{align}
        \lim_{T \to \infty} \mathbbm{E}[\mathrm{reg}(T)] &\leq \lim_{T \to \infty} \sum^{k-1}_{i=1} C_i \gamma(T) \\
        &\leq \lim_{T \to \infty} C_k \gamma(T) \label{eq:lim_Ck_gammaT}
    \end{align}
    
    In summary, by coming results from Eq.~\eqref{eq:lim_Ck_gammaT} with Eq.~\eqref{eq:lim_TpT} for large values of $T \to \infty$, we have the following upper-bound on simple regret,
    
    \begin{align}
        \mathbbm{E}[\mathrm{reg}(T)] &\leq C_k T \tilde{p}(T)
    \end{align}
\end{proof}

\subsection{Bound on the Binomial Distribution for Large $T$ and Small $p$}

\begin{lemma} \label{lem:binom_large_T_small_p}
    For a binomial distribution with large values of $T$ and small values of $p$, it holds that,
    $$\lim_{T \to \infty} \sum^{T}_{i=1} i \binom{T}{i} p^i(1 - p)^{T-i} \leq Tp$$ 
\end{lemma}


\begin{proof}
    
We wish to show the following by observing the equivalence between the binomial and Poisson distributions, for small values of $\tilde{p}$ and large values of $T$. We can see that the first part of  Eq.~\eqref{eq:reg_bi_term_binom}, is understood as the expectation over binomial distribution, for any integer $i$. A binomial distribution can be converted to a Poisson distribution as follows,

\begin{align}
\sum^{k-1}_{i=1} e^{(-Tp)}\frac{(Tp)^i}{i!}
\end{align}

We then further express as,

\begin{align}
    Tpe^{-Tp} \sum^{k-2}_{i=0} \frac{(Tp)^i}{i!}= Tp\frac{\Gamma(k-1,Tp)}{(k-2)!}
\end{align}

By the property of an incomplete Gamma function,

\begin{align}
    Tp\frac{\Gamma(k-1,Tp)}{(k-2)!}<Tp
\end{align}

As we know $\Gamma(k-1)=(k-2)!$ for integer $k$, and $\Gamma(k,Tp)$ is the upper incomplete gamma function, let us express,

\begin{align}
    \gamma(s,x)=x^s \sum_{i=0}^{\infty} \frac{(-x)^i}{i!(s+i)}
\end{align}

For a better approximation for small $Tp$, we can use the series,

\begin{align}
    \gamma(s,x)=\left( \frac{x^s}{s} - \frac{x^{s+1}}{s+1} + \frac{x^{s+2}}{2!(s+2)} - \frac{x^{s+3}}{3!(s+3)} + \cdots \right)
\end{align}

We thus obtain, 
$$\Gamma(s, x)\approx \Gamma(s) - \left( \frac{x^s}{s} - \frac{x^{s+1}}{s+1} + \frac{x^{s+2}}{2!(s+2)} 
- \frac{x^{s+3}}{3!(s+3)} + \cdots \right) $$
As $Tp$ approaches $0$, then we ignore the second terms onward, we can approximate with decreasing approximation error,

\begin{align}
\sum^{k-1}_{i=1} i \binom{T}{i} \tilde{p}^i(1-\tilde{p})^{T-i} \approx Tp-\frac{(Tp)^{k}}{(k-1)!}
\end{align}

Therefore, we can infer the asymptotic performance, as $T \to \infty$ and $p \to 0$, as,

$$\lim_{T \to \infty} \sum^{k-1}_{i=1} i \binom{T}{i} \tilde{p}^i(1-\tilde{p})^{T-i} \leq Tp$$
\end{proof}

\clearpage

\iftoggle{uai}{}{

\subsection{Duality} \label{sec:duality_appendix}

\lar{\textbf{Resource Utility Exchange Problem:} We are presented optimization problem with constraints on $\langle \phi' g(\mathbf{x}_\eta^t), \, \mathbf{a}^t \rangle$, and thus the abstract problem becomes,}

\begin{align}
    \max_{\pol} \sum^T_{t=1} \Big\langle \expeC \Big[ \phi f(\mathbf{x}_\eta^t) \Big], \, \aB^t \Big\rangle \qquad \\ 
    \text{s.t.} \quad \langle \phi' g(\mathbf{x}_\eta^t), \, \mathbf{a}^t \rangle) \in \setC(t), \forall \mathbf{a}^t \in \mathcal{A} \\ \sum^T_{t=1} \langle \phi' g(\mathbf{x}_\eta^t), \, \mathbf{a}^t \rangle) \in \setC, \quad \forall \mathbf{a}^t \in \mathcal{A}.\label{eq:abstract_g_to_f}
\end{align}

Where $\langle \phi' g(\mathbf{x}_\eta^t), \, \mathbf{a}^t \rangle \in \setC(t)$ and $\sum^T \langle \phi' g(\mathbf{x}_\eta^t), \, \mathbf{a}^t \rangle \in \setC$ represent constraints on resource consumption. We specify both $\setC \in \mathbbm{R}$ and $\setC(t) \in \mathbbm{R}$. $f_\eta(\mathbf{a}^t)$ is an abstract function which is smooth, monotone, subject to random contexts $\eta$, where some generating function produces contexts, $\mathcal{F} \equiv \{f_\eta^{t=1}, f_\eta^{t=2}, \dots f_\eta^{t=T} \}$, subject to some random process. 

\textbf{Dual Formulation:} Similarly, a series of contexts for the resource consumption functions is defined as $\mathcal{G} \equiv \{g_\eta^{t=1}, g_\eta^{t=2}, \dots g_\eta^{t=T} \}$. The key difference is that $\mathcal{G}$ is a set of deterministic functions. We can propose a \textit{dual} formulation of Eq.~\eqref{eq:abstract_g_to_f} as,

\begin{align}
    \min_{\pol} \sum^T \langle \phi' g(\mathbf{x}_\eta^t), \, \mathbf{a}^t \rangle) \qquad \\
    \text{s.t.} \Big\langle  \expeC \Big[ \phi f(\mathbf{x}_\eta^t) \Big], \, \aB^t  \Big\rangle  \in \tilde{\setC}(t), \quad \forall \mathbf{a}^t \in \mathcal{A} \\ \sum^T_{t=1} \Big\langle \expeC \Big[ \phi f(\mathbf{x}_\eta^t) \Big], \, \aB^t \Big\rangle  \in \tilde{\setC}, \quad \forall \mathbf{a}^t \in \mathcal{A}. \label{eq:abstract_f_to_g}
\end{align}

Where $\tilde{\setC}(t)$ and $\tilde{\setC}$ refer to constraints on the expected utility. Thus given perfect information regarding  $\mathcal{F}$ and $\mathcal{G}$ we can express the problem either as Eq.~\ref{eq:abstract_g_to_f} or Eq.~\ref{eq:abstract_f_to_g}.

\begin{lemma} \label{lem:duality_fg}
    \textbf{Duality:} There exists an morphism that maps $\setC \mapsto \tilde{\setC} \in \mathbbm{R}$ and $\setC(t) \mapsto \tilde{\setC}(t) \in \mathbbm{R}, \, \forall t \in \{1, \dots, T \}$ which provides an equivalent solution method to the primal problem stated in Eq.~\eqref{eq:abstract_g_to_f}, when expressed as Eq.~\eqref{eq:abstract_f_to_g}. (Proof is provided in Appendix~\ref{prf:duality_fg}.)
\end{lemma}

Specifically, Lem.~\ref{lem:duality_fg} states that when the resource utility exchance framework can be formulated as is in Eq.~\eqref{eq:abstract_g_to_f}, and equivalent representation can also exist in Eq.~\eqref{eq:abstract_f_to_g}, and the set $\tilde{\setC}$ is guaranteed to exist. 

\textbf{Intuition:} In the primal problem, the question being asked is what allocation of limited resources will yield the most utility on expectation, driven by a separate utility variable. The dual problem asks, given a goal for the expected utility output, what is the minimum amount of resources one can consume to achieve the result. The inverse indicates that some goal must be determined, expressed as the cumulation of $ \langle  \expeC [ \phi f(\mathbf{x}_\eta^t) ], \, \aB^t  \rangle$, before one can find the inverse consumption of resources.


\lar{We can see rather trivially, that when the sets $\setC$ and $\setC(t)$ are non-disjoint, the duality Lemma of \ref{lem:duality_fg} reduces to the strong duality theorem of linear programming, which states that if a linear programming problem has an optimal solution, then its dual also has an optimal solution, and the objective values of the primal and dual problems are equal at optimality. The key advantage of our dual formulation in Eq.~\eqref{eq:abstract_g_to_f} and Eq.~\eqref{eq:abstract_f_to_g} is the ability to specify constraints and solutions to the optimization problem in a more expressive format, particularly for disjoint $\setC$ and $\setC(t)$. Reliance on the strong duality theorem is often times restrictive and unnatural within the framework of mathematical programming, therefore we allow for a more natural representation of the constraints denoted as membership of a set $\mathcal{C}$ per se. This allows us to create optimization problems beyond what is expressible within the context of mixed-integer programming, and will have advantages in the integration step with Monte Carlo simulation methods. We can relax the formulation in Eq.~\eqref{eq:abstract_f_to_g} by replacing $\tilde{\mathcal{C}}$ with $\hat{\mathcal{C}}$ such that $\tilde{\mathcal{C}} \subseteq \hat{\mathcal{C}}$. Note that we can set multiple targets for $\hat{\mathcal{C}}$, so long as $\tilde{\mathcal{C}} \subseteq \hat{\mathcal{C}}$. Consequently, this relaxation could result in a lower value function than the optimal value, as stipulated in Lem.~\ref{lem:suboptimality_v_dual}.}

\begin{lemma} \label{lem:suboptimality_v_dual}
    \textbf{Suboptimality of $\hat{\mathcal{C}}$:} Given a constraint set $\hat{\mathcal{C}}$ such that $\tilde{\mathcal{C}} \subseteq \hat{\mathcal{C}}$, some suboptimality therefore exists where, $V(\pi^*) - V(\pi^*_c) \leq \epsilon_c$, for some, $\epsilon_c \in \mathbbm{R}^+$. Where $\pi^*_c$ represents the solution to Eq.~\eqref{eq:abstract_f_to_g}, with $\hat{\mathcal{C}}$ in place of $\tilde{\mathcal{C}}$. (Proof is provided in Appendix~\ref{prf:suboptimality_v_dual}.)
\end{lemma}

\subsection{Proof of Lem.~\ref{lem:duality_fg}} \label{prf:duality_fg}

\textbf{Duality:} There exists an morphism that maps $\setC \mapsto \tilde{\setC} \in \mathbbm{R}$ and $\setC(t) \mapsto \tilde{\setC}(t) \in \mathbbm{R}, \, \forall t \in \{1, \dots, T \}$ which provides an equivalent solution method to the primal problem stated in Eq.~\eqref{eq:abstract_g_to_f}, when expressed as Eq.~\eqref{eq:abstract_f_to_g}.

\begin{proof}
    
Let $\tau_{\eta} \equiv \{\mathbf{x}_\eta^{t}, \mathbf{x}_\eta^{t+1}, \mathbf{x}_\eta^{t+2}, \dots, \mathbf{x}_\eta^{T} \}$ denote a sequence of stochastic outcomes, the expectation of which is denoted as $\mathbb{E}[\tau_{\eta}] \equiv \{ \mathbb{E}[\mathbf{x}_\eta^{t}], \mathbb{E}[\mathbf{x}_\eta^{t+1}], \mathbb{E}[\mathbf{x}_\eta^{t+2}], \dots, \mathbb{E}[\mathbf{x}_\eta^{T}] \}$. We can denote this as a sequence of contexts over $\mathcal{F}$, and we build an expectation over $\mathcal{F}$ given knowledge of the stochastic model. By linear seperability,

\begin{align}
    \expeC[\mathcal{F}] = \phi f \odot \mathbbm{E}[\tau_{\eta}]
\end{align}

The structure of the reward function is written as so,

\begin{align}
    \mu(\mathbf{x}^t, \mathbf{a}) = \langle  \phi f(\mathbf{x}_\eta^t), \,  \mathbf{a} \rangle
\end{align}

Therefore, we can express the non-discounted value function as,

\begin{align}
    V = \{ \langle  \phi f(\mathbf{x}_\eta^1), \,  \mathbf{a}^1 \rangle,  \langle  \phi f(\mathbf{x}_\eta^2), \,  \mathbf{a}^2 \rangle, \dots \langle  \phi f(\mathbf{x}_\eta^T), \, \mathbf{a}^T \rangle \} \label{eq:seq_context_reward}
\end{align}

We posit that at each time interval $\aB^t \in \mathcal{C}(t)$, forming our first constraint. Let $\{ \mathcal{C}(t) \} \equiv \{ \mathcal{C}(t=1), \mathcal{C}(t=2), \dots \mathcal{C}(t=T)\}$ denote the sequence of constraint sets. The solution to the value maximizing function Eq.~\eqref{eq:seq_context_reward} can be solved independently, by greedy allocation at time $t$. This produces a set of action sequences where,

\begin{align}
    \{ \mathbf{a}^* \}_\text{I} \in \{ \mathcal{C}(t) \} \label{eq:c_t_greedy_constraint}
\end{align}

We can see, that given any sequence of constraints $\{ \mathcal{C}(t) \}$, a corresponding sequence (or multiple sequences) of optimal actions will be valid, denoted as $\{ \mathbf{a}^* \}_\text{I}$.  Now we examine the summation constraint, $\mathcal{C}$, we must also select a sequence that adheres to the summation constraint,

\begin{align}
    \sum^T \langle \phi' g(\mathbf{x}_\eta^t), \, \mathbf{a}^t \rangle) \equiv \{ \mathbf{a}^* \}_\text{II} \in \setC \label{eq:summation_constraint_abstract}
\end{align}

We employ the combinatorial argument, and argue that there will also be a unique set of actions which satisfies satisfies Eq.~\ref{eq:summation_constraint_abstract}, denoted as $\{ \mathbf{a}^* \}_\text{II}$. For a solution, unique or otherwise in Eq.~\eqref{eq:abstract_f_to_g} to exist, we must stipulate that,

\begin{align}
    \mathcal{A}^* \equiv \{ \mathbf{a}^* \}_\text{I} \cap \{ \mathbf{a}^* \}_\text{II}, \qquad  | \mathcal{A} | > 0
\end{align}

Where $\mathcal{A}^*$ denotes a set of admissible sequences $\{ \aB \}$ In other words, there must be at least one solution that satisfies both Eq.~\eqref{eq:c_t_greedy_constraint} as well as Eq.~\eqref{eq:summation_constraint_abstract}, in order for an admissible solution to exist. To find the solution to the primal problem expressed in the in Eq.~\eqref{eq:abstract_g_to_f}, we define $\{ \mathbf{a}^* \}$ as,

\begin{align}
    \{ \mathbf{a}^* \} \equiv \argmax_{\{ \aB \} \in \mathcal{A}^* } \, \expeC \Big[  \sum^T_{t=1} \langle  \phi f(\mathbf{x}_\eta^t), \, \aB^t \in \{ \aB \} \rangle \Big]
\end{align}

Thus $\{ \mathbf{a}^* \}$ is the sequence that consequently maximizes expectation over reward under set constraints $\setC$ and $\{ \setC(t) \}$. Thus, we can see that there must also exist a sequence of $\tilde{\mathcal{C}}(t)$, we can set $\{ \tilde{\mathcal{C}}(t) \}$ as,

\begin{align}
    \{ \tilde{\mathcal{C}}(t) \} = \{ \mathcal{C}(1), \mathcal{C}(2), \dots, \mathcal{C}(T) \} \label{eq:c(t)_seq}
\end{align}

Where,

\begin{align}
    \tilde{C}(t) = \Big\langle  \expeC \Big[ \phi f(\mathbf{x}_\eta^t) \Big], \, \aB^t \in \{ \aB^* \} \Big\rangle 
\end{align}

Where $\aB^t \in \{ \aB^* \}$ represents the optimal greedy assignment from constraint $\mathcal{C}(t)$ within the sequence $\{ \mathbf{a}^* \}$. Given this sequence $\{ \mathbf{a}^* \}$ satisfying $\tilde{\mathcal{C}}(t)$, we can then infer that there exists a $\tilde{\mathcal{C}}$ such that,

\begin{align}
    \tilde{\setC} = \sum^T_{t=1} \Big\langle \expeC \Big[ \phi f(\mathbf{x}_\eta^t) \Big], \, \aB^t \in \{ \aB^* \} \Big\rangle = \sum^T_{t=1} \tilde{\setC}(t)
\end{align}

As there could be multiple solutions to $\tilde{\mathcal{C}}(t)$ in Eq.~\eqref{eq:c(t)_seq}, neither $\tilde{\setC}$ nor $\{ \tilde{\mathcal{C}}(t) \}$ is guaranteed to be unique.

\end{proof}

\subsection{Proof of Lem.~\ref{lem:suboptimality_v_dual}} \label{prf:suboptimality_v_dual}

\textbf{Suboptimality of $\hat{\mathcal{C}}$:} Given a constraint set $\hat{\mathcal{C}}$ such that $\tilde{\mathcal{C}} \subseteq \hat{\mathcal{C}}$, some suboptimality therefore exists where, $V(\pi^*) - V(\pi^*_c) \leq \epsilon_c$, for some, $\epsilon_c \in \mathbbm{R}^+$. Where $\pi^*_c$ represents the solution to Eq.~\eqref{eq:abstract_f_to_g}, with $\hat{\mathcal{C}}$ in place of $\tilde{\mathcal{C}}$. 

\begin{proof}
    We can see by definition in Lem.~\ref{lem:duality_fg}, that, 

    \begin{align}
        V(\pi^*) = \sum^T_{t=1} \tilde{\setC}(t) = \sum^T_{t=1} \Big\langle \expeC \Big[ \phi f(\mathbf{x}_\eta^t) \Big], \, \aB^t \in \{ \aB^* \} \Big\rangle
    \end{align}

    Suppose we were to construct new $\{ \tilde{\setC}_v(t) \}$, where, we swap at least one element $\tilde{\setC}(t) \in \{ \tilde{\setC}(t) \} $ with $\tilde{\setC}_v(t)$ such that, $\tilde{\setC}_v(t) \leq \tilde{\setC}(t)$. Provided that $f(\cdot)$ is smooth and monotone, we can conclude that the function $\mu(\mathbf{x}_\eta, \aB) = \langle \phi f(\mathbf{x}_\eta), \aB \rangle $ is invertible. We can express $\aB_v$ as,

    \begin{align}
        \Big\langle \expeC \Big[ \phi f(\mathbf{x}_\eta^t) \Big], \, \aB_v^t \Big\rangle = \tilde{\setC}_v(t)
    \end{align}

    We can consider $\expeC \Big[ \phi f(\mathbf{x}_\eta^t) \Big]$ to be a constant, due to the well-defined monotonic property of $f(\cdot)$, and a computable expectation over its expectation on $\expeC[f(\mathbf{x}_\eta^t)]$. Thus we can express,

    \begin{align}
        \aB_v^t = \frac{\tilde{\setC}_v(t)}{\norm{\expeC \Big[ \phi f(\mathbf{x}_\eta^t) \Big]}^2} \, \expeC \Big[ \phi f(\mathbf{x}_\eta^t) \Big]
    \end{align}

    Considering that a creation of $\tilde{\setC}_v(t) \implies \exists \aB_v^t$, and given $\tilde{\setC}_v(t) \leq \tilde{\setC}(t)$, it follows that,

    \begin{align}
        \tilde{\setC}_v(t) = \Big\langle \expeC \Big[ \phi f(\mathbf{x}_\eta^t) \Big], \, \aB_v^t \Big\rangle \leq \tilde{\setC}(t) \label{eq:cvt_ineq}
    \end{align}

    Should there exist at least one $\tilde{\setC}_v(t) \in \{ \tilde{\setC}_v(t) \}$ such that Eq.~\eqref{eq:cvt_ineq} holds, this would imply that,

    \begin{align}
        \mathcal{C}_v \equiv \sum^T_{t=1} \tilde{\setC}_v(t) \in \{ \tilde{\setC}_v(t) \}
    \end{align}
    
\end{proof}

}

\iftoggle{uai}{
\section{Technical Notes}

\section{Geometric Brownian Motion (GBM)} \label{sec:gbm_desc}

The Geometric Brownian Motion (GBM) is a stochastic process used in various fields including finance and physics, to model the random movement of a variable over time. The stochastic differential equation (SDE) is written as ,

\begin{equation}
dS^t = \mu S^t \, dt + \sigma S^t \, dW_t, 
\end{equation} 

Where $S^t$ as the value of the process at time $t$, $\mu$ as the drift coefficient, $\sigma$ is the volatility coefficient, and $W_t$ is a standard Brownian motion. The variable evolves with a mean growth rate $\mu$, and volatility $\sigma$ determining the magnitude of fluctuations. The solution for the above SDE is written as,

\begin{equation}
    S^t = S^1 \exp\left(\left(\mu - \frac{1}{2}\sigma^2\right)t + \sigma W_t\right),
\end{equation}

where $S^1$ is the initial value of the process at time $t = 0$. The term $\left(\mu - \frac{1}{2}\sigma^2\right)$ represents the adjusted drift. 






\subsection{Derivation of Recursive Value Function Formulation} \label{sec:value_func_deriv_append}

The derivation of the value function $V(\mathbf{x}^{T-1})$ begins by expressing the optimization problem over the action space $\mathcal{A}(T-1)$ at time $T-1$. The goal is to maximize the sum of an immediate reward term and an expected future reward term. Below is an outline of the derivation steps:

\begin{align}
    V(\mathbf{x}^{T-1}) &= \underset{\mathbf{a} \in \mathcal{A}(T-1) } \max \ \Big\{ \langle \phi f(\mathbf{x}_\eta^{T-1}),\mathbf{a} \rangle  + \int_{\xEta} P_\theta(\mathbf{x}_\eta^{T} | \mathbf{x}_\eta^{T-1}) \,  \langle (\mathbf{x}_d^{T-1} + \langle \phi' g(\mathbf{x}_\eta^t), \, \mathbf{a}^t \rangle), \phi f(\mathbf{x}_\eta^{T}) \rangle \, d\mathbf{x}_\eta^T  \Big\} \\
    &= \underset{\mathbf{a} \in \mathcal{A}(T-1) } \max \ \Big\{ \langle \phi f(\mathbf{x}_\eta^{T-1}),\mathbf{a} \rangle + \langle (\mathbf{x}_d^{T-1} + \langle \phi' g(\mathbf{x}_\eta^t), \, \mathbf{a}^t \rangle), \, \int_{\xEta}  P_\theta(\mathbf{x}_\eta^{T} | \mathbf{x}_\eta^{T-1}) \, \phi f(\mathbf{x}_\eta^{T}) \rangle \,  d\mathbf{x}_\eta^T  \Big\} \\
    &= \underset{\mathbf{a} \in \mathcal{A}(T-1) } \max \ \Big\{ \langle \phi f(\mathbf{x}_\eta^{T-1}),\mathbf{a} \rangle+ \langle (\mathbf{x}_d^{T-1} + \langle \phi' g(\mathbf{x}_\eta^t), \, \mathbf{a}^t \rangle) , \, \mathbb{E}[\phi f(\mathbf{x}_\eta^{T}) | \mathbf{x}_\eta^{T-1}] \rangle \Big\}. 
\end{align}

\lar{The area of integration over $\xEta$ denotes the set of all possible stochastic states reachable from  $\xEta^{T-1}$ to $\mathbf{x}_\eta^{T}$ via a stochastic transition, expressed with probability $P_\theta(\mathbf{x}_\eta^{T} | \xEta^{T-1})$ for the stochastic component. Should the deterministic component of the $\xB_d^T$ not be reachable via the operation $\mathbf{x}_d^{T-1} + \langle \phi' g(\mathbf{x}_\eta^t), \, \mathbf{a}^t \rangle), \phi f(\mathbf{x}_\eta^{T}$, it is not included in $\xEta$. }
}

\clearpage

\section{Algorithms} \label{sec:algorithm_details}
\subsection{MCTS UCT Bellman VC}

\begin{algorithm}
    \caption{MCTS Bellman UCT Value Clipping Algorithm}
    \label{alg:mcts-uct-vc}
    \begin{algorithmic}[1]
        \STATE \textbf{Input:} Initialize state (chance node) $\mathbf{x}_0$.
        \STATE \textbf{Output:} Best action $\mathbf{a}^*$
        \STATE \textbf{Define:} \text{clip}$(v, \underline{v}, \bar{v})$ = \text{min}(\text{max}$(v, \underline{v}), \bar{v}), \forall v \in \mathbbm{R}$
        \WHILE{Max iterations not exceeded.}
            \STATE $\mathbf{x}_{\text{s}} \xrightarrow[]{} \text{Selection} (\mathbf{x}, \pi_{UCT}(\cdot))$
            \STATE $v' \leftarrow 0$
            \IF{$\alpha_c > \text{Uniform}[0, 1]$}
                \FOR{each possible state $\mathbf{x}'$ extending from inducing action $\mathbf{a}_s$ and state $\mathbf{x}_s$}
                    \STATE $v \leftarrow \text{Simulation}(\mathbf{x}', \pi_s(\cdot))$
                    \STATE $\underline{v} \leftarrow  \underline{V}_k(\mathbf{x}')$ From Eq.~\eqref{eq:vk_value_rep_main}
                    \STATE $\bar{v} \leftarrow \text{Hindsight perfect solution.}$ 
                    \STATE $V(\mathbf{x}') \leftarrow \text{clip}(v, \underline{v}, \bar{v})$
                    \STATE $v' \leftarrow v' + P(\mathbf{x}'|\mathbf{a}, \mathbf{x})V(\mathbf{x}')$
                \ENDFOR
            \ELSE
                \FOR{each possible state $\mathbf{x}'$ extending from inducing action $\mathbf{a}_s$ and state $\mathbf{x}_s$}
                    \STATE $V(\mathbf{x}') \leftarrow \text{Simulation}(\mathbf{x}', \pi_s(\cdot))$
                    \STATE $v' \leftarrow v' + P(\mathbf{x}'|\mathbf{a}, \mathbf{x})V(\mathbf{x}')$
                \ENDFOR
            \ENDIF
            \STATE $Q(\mathbf{x}, \mathbf{a}) \leftarrow \mu(\mathbf{x}, \mathbf{a}) + v'$
            \STATE Perform Q-Update according to Eq.~\eqref{eq:update-uct}.
            \STATE \text{Backpropagation}($\mathbf{x}_{\text{s}}$, $Q(\mathbf{x}_s, \mathbf{a})$)
        \ENDWHILE
        \STATE $\mathbf{a}^* \leftarrow \underset{\mathbf{a} \in \mathcal{A} }{ \mathrm{argmax}} \, Q(\mathbf{x}_0, \mathbf{a})$
        \RETURN $\mathbf{a}^*$
    \end{algorithmic}
\end{algorithm}
\clearpage

\subsection{MCTS MENTS Bellman Value Clipping}

\begin{algorithm}
    \caption{MCTS Bellman MENTS Value Clipping Algorithm}
    \label{alg:mcts-ments-vc}
    \begin{algorithmic}[1]
        \STATE \textbf{Input:} Initialize state (chance node) $\mathbf{x}_0$.
        \STATE \textbf{Output:} Best action $\mathbf{a}^*$
        \STATE \textbf{Define:} \text{clip}$(v, \underline{v}, \bar{v})$ = \text{min}(\text{max}$(v, \underline{v}), \bar{v}), \forall v \in \mathbbm{R}$
        \WHILE{Max iterations not exceeded.}
            \STATE $\mathbf{x}_{\text{s}} \xrightarrow[]{} \text{Selection} (\mathbf{x}, \pi_M(\cdot))$
            \STATE $v' \leftarrow 0$
            \IF{$\alpha_c > \text{Uniform}[0, 1]$}
                \FOR{each possible state $\mathbf{x}'$ extending from inducing action $\mathbf{a}_s$ and state $\mathbf{x}_s$}
                    \STATE $v \leftarrow \text{Simulation}(\mathbf{x}', \pi_s(\cdot))$
                    \STATE $\underline{v} \leftarrow  \underline{V}_k(\mathbf{x}')$ From Eq.~\eqref{eq:vk_value_rep_main}
                    \STATE $\bar{v} \leftarrow \text{Hindsight perfect solution.}$ 
                    \STATE $V_s(\mathbf{x}') \leftarrow \text{clip}(v, \underline{v}, \bar{v})$
                    \STATE $v' \leftarrow v' + P(\mathbf{x}'|\mathbf{a}, \mathbf{x})V_s(\mathbf{x}')$
                \ENDFOR
            \ELSE
                \FOR{each possible state $\mathbf{x}'$ extending from inducing action $\mathbf{a}_s$ and state $\mathbf{x}_s$}
                    \STATE $V_s(\mathbf{x}') \leftarrow \text{Simulation}(\mathbf{x}', \pi_s(\cdot))$
                    \STATE $v' \leftarrow v' + P(\mathbf{x}'|\mathbf{a}, \mathbf{x})V_s(\mathbf{x}')$
                \ENDFOR
            \ENDIF
            \STATE $Q(\mathbf{x}, \mathbf{a}) \leftarrow \mu(\mathbf{x}, \mathbf{a}) + v'$
            \STATE Perform Value and Q-Update according to Eq.~\eqref{eq:ments-q-update} and Eq.~\eqref{eq:ments-value-update}.
            \STATE \text{Backpropagation}($\mathbf{x}_{\text{s}}$, $Q_{\mathrm{sft}}(\mathbf{x}_s, \mathbf{a})$)
        \ENDWHILE
        \STATE $\mathbf{a}^* \leftarrow \underset{\mathbf{a} \in \mathcal{A} }{ \mathrm{argmax}} \, Q_{\mathrm{sft}}(\mathbf{x}_0, \mathbf{a})$
        \RETURN $\mathbf{a}^*$
    \end{algorithmic}
\end{algorithm}

\clearpage

\section{Maritime Bunkering} \label{sec:maritime-bunkering-appendix}

\begin{figure}[ht!] 
    \centering
    \includegraphics[width=300pt]{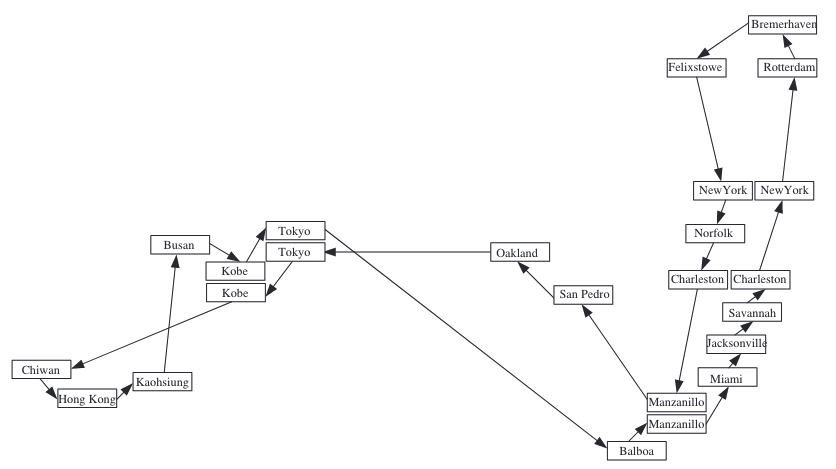}
    \caption{Atlantic Pacific Express (APX) liner route. \cite{yao2012study}}
    \label{fig:apx_route}
\end{figure}


Maritime refuelling, also known as \textit{bunkering}, is a problem in the field of transportation logistics that involves finding the optimal policy to refuel a fleet of vehicles, such as trucks or ships, while they are in operation. The goal is to minimize the total cost of refuelling, which includes the cost of the fuel itself.  It is a common practice in the shipping industry, as ships require large amounts of fuel to power their engines and systems during long voyages. Bunkering is typically done at ports, where the ship can be moored and connected to a fuel supply by hoses or pipelines. Bunkering can also be done at sea, using smaller vessels known as bunker barges to transfer the fuel to the ship. Solving the bunkering problem can allow companies in the transportation industry reduce costs and improve the efficiency of their operations by minimizing fuel costs. 

A real-world examples is the Atlantic Pacific Express (APX) liner routes (see Fig.~\ref{fig:apx_route}). In the liner scenario, a schedule of port visits is prearranged. That is the liner must only determine how much to refuel at each port of call, and at which speed to travel to the next port. But, for simplicity, we omit the speed component. The liner knows which location they are at, how much fuel they possess, how much minimum fuel they need, and the prices of the fuel at all ports in the schedule. We assume that prices follow a known probability distribution. In the normal macroeconomic conditions that is a reasonable assumption since the prices can be estimated from the historical data, while during the turbulent periods designing stress price scenarios might be the best we can do.

One way to approach the bunkering problem is to use mathematical optimization techniques, such as mixed integer programming, to determine the optimal refuelling amounts, given a fixed schedule or flexible schedule\footnote{A ship with a flexible schedule is referred to as a tramper.}. This may involve considering factors such as the capacity of the vessel, the distance they need to travel, the availability of fuel at different locations, and any constraints on when and where the vehicles can be refuelled. A liner is a type of shipping vessel that operates on a regular schedule between specified ports, carrying cargo and sometimes passengers. Liner shipping refers to the use of these vessels to transport cargo on a regular basis between predetermined ports of call. Liners are typically owned and operated by shipping companies, and they follow a set route, stopping at a predetermined list of ports to load and unload cargo.  Liners are an important part of the global shipping industry, as they provide a reliable and efficient way to transport a wide variety of goods, including consumer goods, raw materials, and manufactured products. They play a vital role in supporting international trade and the movement of goods around the world.

\textbf{Ship Route (Schedule):} In this experiment, we assume that the ship route (i.e. order of the ports) is fixed, and that the fuel is available at all ports on the route. The objective is finding the optimal refuelling policy. Upon arrival at each port, the price is revealed and we need to determine the refuelling amount. In practice, bunkering purchasing problems typically involve big data (i.e., 500 vessels, 40,000 ports of call, governed by 750 contracts \cite{brouer:2016big}). 

\textbf{Fuel prices:} Fuel prices can be governed by various stochastic processes -- in this work, we assume that the fuel prices obey a discrete time geometric Brownian motion, as described in Sec.~\ref{sec:gbm_desc}. We discretize the GBM over many different price outcomes, stemming from realizations of the stochastic process. Therefore, the problem in principle can be solved approximately via stochastic programming, stochastic programming \cite{wang2013:bunker,yao2012study}. Nevertheless, as the scale of the problem increases, i.e. more price scenarios arise, the problem becomes intractable via an increase in stochastic scenarios. Each liner is constrained by predetermined schedules and must ensure that there is enough fuel to travel between each intermediate port of call. The liner can choose the refuel amount and the speed of the ship, which affects fuel consumption.


The maritime bunkering model is subject to the following simplifying model assumptions, 

\begin{enumerate}[i., itemsep=2pt, parsep=2pt]
    \item Fuel prices are subject to global stochastic variation.
    \item Fuel consumption is deterministic.
    \item Distance, to and from each port, is fixed and deterministic.
    \item No possibility of service disruptions. 
\end{enumerate} \looseness=-1

\iftoggle{uai}{\clearpage}

\subsection{Stochastic Programming} \label{sec:sp_formulation}


\begin{xltabular}{\linewidth}{ l  X }
  \caption{\textbf{Notation and variable descriptions.}}
 \label{tbl:vardescription}\\
\toprule
 \textbf{Notation} & \textbf{Description} \\
\midrule
\endfirsthead
\toprule
  \textbf{Notation} & \textbf{Description} \\
\midrule
\endhead
\bottomrule
\endfoot

$N$ & Number of ports of call \\ 
$S$ & Number of scenarios \\ 
$K$ & Number of stochastic events at each trip step $t$\\
$\delta_{n,t}$ & Fuel price change change in the port $n$ at the trip step $t$ \\ 
$\beta^s$ & Probability of the scenario $s \in S$ occurring \\ 
$X_{n, 1}^s$ & Fuel level when arriving at port $n$ in scenario $s$ \\ 
$X_{n, 2}^s$ & Fuel level when departing from port $n$ in scenario $s$ \\ 
$X_{n, 1}$ & Fuel level when arriving at port $n$ \\ 
$X_{n, 2}$ & Fuel level when departing from port $n$ \\ 
$d(n, n+1)$ & Distance function from the port $n$ until the next port $n+1$ \\ 
$f_c(n, n+1)$ & Fuel consumption function from the port $n$ until the next port $n+1$ given the distance $d(n, n+1)$ \\ 
$\mathcal{N}$ & Continuous fuel price percentage change probability distribution \\
$P_n^s$ & Price of fuel at port $n$ in scenario $s$ \\  
$P_n$ & Expected price of fuel at port $n$ \\  
$Y_n^s \in \{0, 1\}$ &  Indicator for bunkering decision at port $n$ in scenario $s$ \\
$Y_n \in \{0, 1\}$ &  Indicator for bunkering decision at port $n$ \\
$B_n^s$ & Fixed bunkering cost at port $n$ in scenario $s$ \\
$B_n$ & Fixed bunkering cost at port $n$ \\
$M$ & Liner fuel tank capacity 
\end{xltabular}

We formulate the liner bunkering problem as a stochastic program. The stochastic factors affect the price of fuel through the price percentage changes expressed by $S$ scenarios. We optimize the actions to take at each port of call. Concretely, at each port $n$ the liner must decide on a refueling amount, denoted as the amount of fuel leaving port $n$ in scenario $s$, $X_{n,2}^s$, subtracted by the amount of fuel arriving at port $n$ in scenario $s$, $X_{n,1}^s$. The fuel consumption, i.e., the difference $X_{n,2}^s - X_{n+1,1}^s$, is deterministic given the distance until the next port $d(n, n+1)$ which affects a deterministic fuel consumption $f_c(n, n+1$). In addition, there is also a fixed bunkering cost $B_n^s$ in case we decide to refuel non-negative amount at port $n$ in scenario $s$. We indicate the bunkering action by a binary variable $Y_n^s$. We assume that the liner has the empty fuel tank at the initial port and we constrain its fuel tank capacity by the upper bound $M$.

 Given fuel price percentage change scenarios $s \in S$, which determine the fuel prices $P_n^s$ we have:


\begin{subequations}
\begin{alignat}{2}
&\!\min_{X}        &\qquad \qquad C^{SP} &= \sum_{s \in S} \beta^s \sum_{n \in N} P_n^s (X_{n, 2}^s - X_{n, 1}^s) + B_n^sY_n^s + \tau(T_{n+1}^k) \label{eq:mp_objective} \\
&\text{subject to} &      X_{n+1, 1}^s &= X_{n, 2}^s - f_c(n, n+1, V_n^k), n \in N, s \in S \label{eq:mp_consumption_balance} \\
&                  &      X_{n, 2}^s &\geq X_{n, 1}^s, n \in N, s \in S   \label{eq:mp_no_unfueling} \\
&                  &      f_c(n, n+1, V_n^k) &\geq 0, n \in N   \label{eq:mp_non_negative_consumption} \\
&                  &      Y_n^s &\in \{0, 1 \}, n \in N, s \in S   \label{eq:mp_bunkering_cost} \\
&                  &      X_{n, i}^s &\geq 0, n \in N, s \in S, i \in \{1,2\}  \label{eq:mp_lower_capacity} \\
&                  &      X_{n, i}^s &\leq M, n \in N, s \in S, i \in \{1,2\} \label{eq:mp_upper_capacity} \\
&                  &      X_{1, 1}^s &= 0, s \in S \label{eq:mp_empty_tank}
\end{alignat}
\end{subequations}

We have the stochastic value objective represented by \eqref{eq:mp_objective}, which is the sum of the refuelling costs per port denoted as $P_n^s (X_{n, 2}^s - X_{n, 1}^s)$, the fixed bunkering costs denoted as $B_n^sY_n^s$, and the time window penalties  for arriving late or early to the destination denoted as $\tau(T_{n+1}^k)$. \eqref{eq:mp_consumption_balance} is the consumption balance constraint, \eqref{eq:mp_no_unfueling} prevents negative refuelling amounts, \eqref{eq:mp_non_negative_consumption} states that the consumption is non-negative, \eqref{eq:mp_bunkering_cost} is an indicator variable for the fixed bunkering cost, \eqref{eq:mp_lower_capacity} and \eqref{eq:mp_upper_capacity} are lower and upper fuel tank capacities, while \eqref{eq:mp_empty_tank} imposes the empty fuel tank at the initial port. Each of these constraints hold for every scenario $s \in S$.


\subsubsection{Modelling Fuel Price} \label{sec:scenario_tree}

We simulated $N_{\mathrm{GBM}} = 200,000$ price trajectories using the Geometric Brownian Motion (GBM) model as illustrated in Sec.~\ref{sec:gbm_desc}, starting with an initial stock price \( S_0 \). Each trajectory was generated by iterating over \( T \) time steps, applying the GBM formula \( S^{t} = S^{t-1} \cdot \exp\left((\mu - 0.5 \sigma^2) \Delta t + \sigma \Delta W_t \right) \), where \( \Delta W_t \) represents the increments of a Wiener process. This process involved parameters for drift (\( \mu \)), volatility (\( \sigma \)), under a fixed discrete time increment (\( \Delta t \)), ensuring the randomness and variability in the simulated price paths. To estimate the probability density function of the simulated prices, we flattened the simulation results and created a histogram with a specified number of bins, and thereafter calculate the probability density for any given price value based on the histogram data. We can this assign probabilities to each simulated price outcome, ensuring a comprehensive probability distribution across the entire range of simulated trajectories.

\subsection{Table of Parameters} \label{sec:table_of_params}

\begin{xltabular}{0.9\linewidth}{ l X X X X X }
  \caption{\textbf{GBM Stochastic price parameters.}}
 \label{tbl:vardescription}\\
\toprule
 \textbf{Config.} & \textbf{Initial Price ($S^1$)} & \textbf{Price Volatility ($\sigma$)} & \textbf{Price Drift ($\mu$)} \\
\midrule
\endfirsthead
\endhead
\bottomrule
\endfoot

A & 1000 & 0.9 & 1.0\\ 
B & 1000 & 0.5 & 1.0\\ 
C & 100 & 0.9 & 1.0\\ 
D & 1000 & 0.5 & 0.5\\ 
E & 1000 & 0.9 & 0.5\\
F & 100 & 0.9 & 0.5
\end{xltabular}

\begin{xltabular}{0.6\linewidth}{ l X }
  \caption{\textbf{Shared parameters across all experimental configurations.}}
 \label{tbl:vardescription}\\
\toprule
 \textbf{Description} & \textbf{Value} \\
\midrule
\endfirsthead
\endhead
\bottomrule
\endfoot

$N_{\mathrm{GBM}}$ GBM simulated trajectories. & 200,000 \\
$N_{\mathrm{H}}$ Number of histogram bins. & 20,000 \\
$N_{\mathrm{sim}}$ MCTS Number of Iterations & $1 \times 10^5$ \\
$N_{\mathrm{depth}}$ MCTS Depth Limit & 500 \\
$\lambda_s$ MENTS Decay Rate  & $2 \times 10^9$ \\
Number of ports-of-call. & 8 \\ 
Fuel capacity. & 50 Units \\
\end{xltabular}

\begin{table}[H]
    \centering
    \caption{Distance between each port-of-call represented by a distance matrix.}
    \begin{tabular}{ccccccccc}
        \toprule
        & 1 & 2 & 3 & 4 & 5 & 6 & 7 & 8\\
        \midrule
        1 & 0 & 12 & 7 & 15 & 12 & 18 & 3 & 4\\
        2 & 12 & 0 & 25 & 8 & 10 & 15 & 6 & 14\\
        3 & 7 & 25 & 0 & 30 & 20 & 16 & 12 & 10\\
        4 & 15 & 8 & 30 & 0 & 19 & 25 & 30 & 8\\
        5 & 12 & 10 & 20 & 19 & 0 & 9 & 18 & 13\\
        6 & 18 & 15 & 16 & 25 & 9 & 0 & 21 &10\\
        7 & 3 & 6 & 12 & 30 & 18 & 21 & 0 & 17\\
        8 & 4 & 14 & 10 & 8 & 13 & 10 & 17 & 0\\
        \bottomrule
    \end{tabular}
\end{table}

Fuel consumption rate is simplified to 1 unit of fuel consumed to 1 unit of distance travelled.



\begin{xltabular}{0.9\linewidth}{ l  X X }
  \caption{\textbf{MCTS Dynamic Parameters.}}
 \label{tbl:mcts_dynamic_parameters}\\
\toprule
 \textbf{Config.} & $N_{\mathrm{sim}}$ & MCTS Exploration Constant ($\alpha$) \\
\midrule
\endfirsthead
\endhead
\bottomrule
\endfoot

A & 1000 & 0.9 \\ 
B & 1000 & 0.5 \\ 
C & 100 & 0.9 \\ 
D & 1000 & 0.5 \\ 
E & 1000 & 0.9 \\
F & 100 & 0.9 
\end{xltabular}

\raggedbottom

\subsection{Empirical Results - Cost Comparison} \label{sec:empirical_results_appendix}

\begin{figure}[H]
\minipage{0.41\textwidth}
  \includegraphics[width=\linewidth]{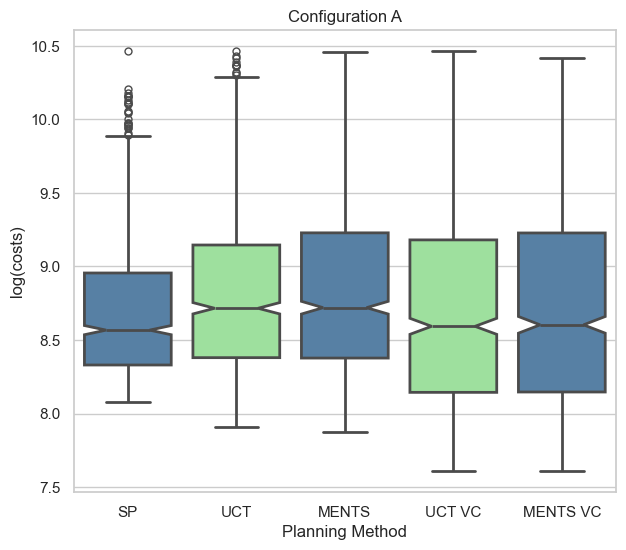}
\endminipage\hfill
\minipage{0.41\textwidth}
  \includegraphics[width=\linewidth]{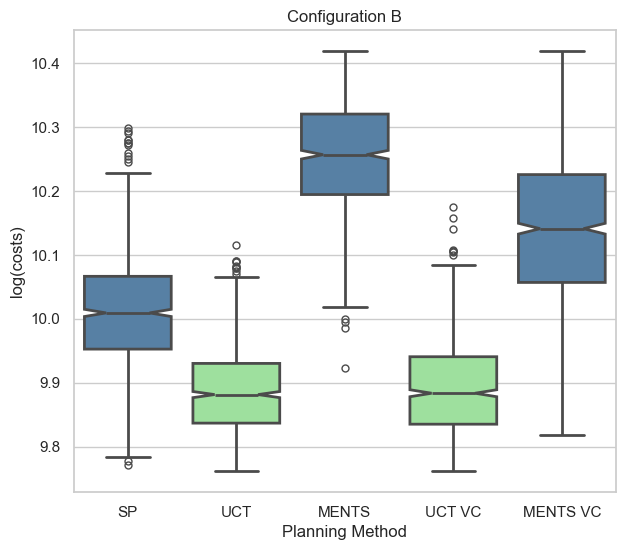}
\endminipage\hfill
\minipage{0.41\textwidth}
  \includegraphics[width=\linewidth]{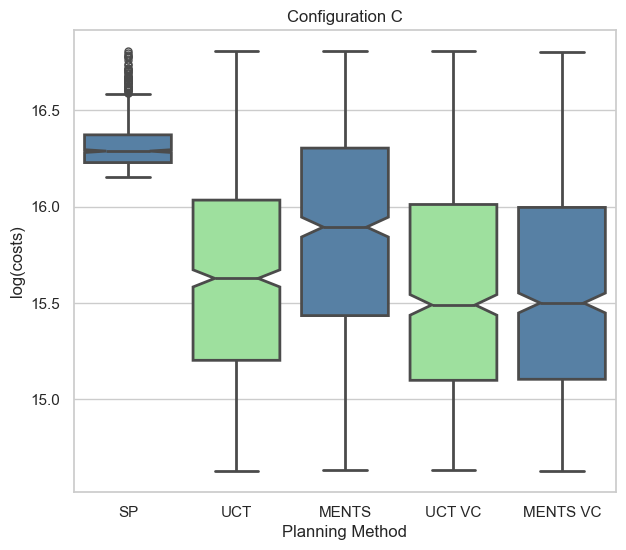}
\endminipage\hfill
\minipage{0.41\textwidth}
  \includegraphics[width=\linewidth]{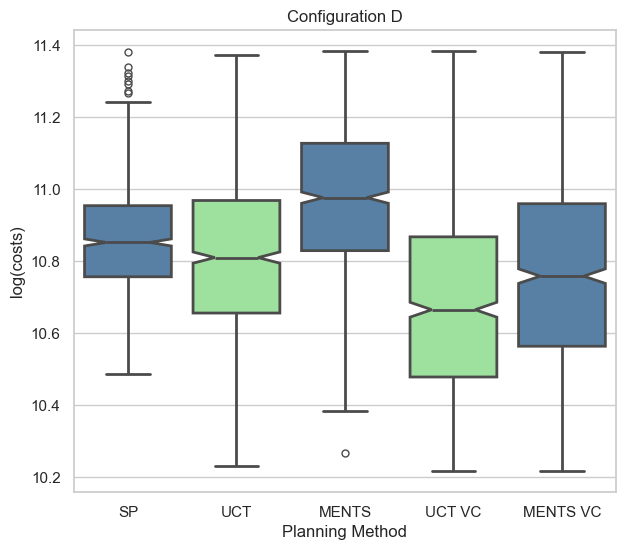}
\endminipage\hfill
\minipage{0.41\textwidth}
  \includegraphics[width=\linewidth]{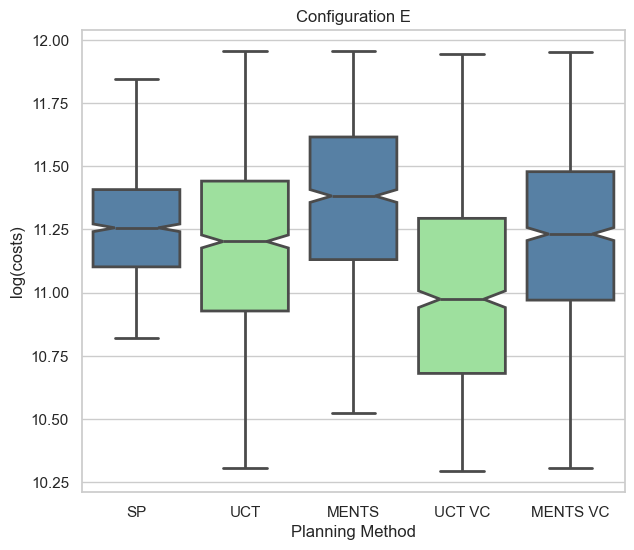}
\endminipage\hfill
\minipage{0.41\textwidth}
  \includegraphics[width=\linewidth]{figures/config_F.png}
\endminipage\hfill
\caption{Cost performance of the maritime logistics simulation.} \label{fig:cost_maritime_plots}
\end{figure}

\subsection{Empirical Results - Value Convergence} \label{sec:empirical_results_value_conv_appendix}

\begin{figure}[H]
\minipage{0.41\textwidth}
  \includegraphics[width=\linewidth]{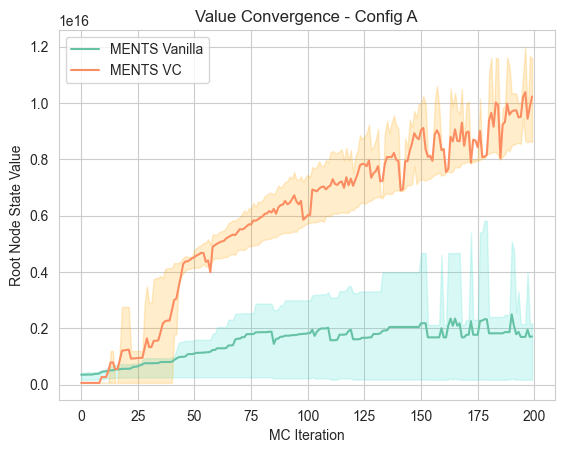}
\endminipage\hfill
\minipage{0.41\textwidth}
  \includegraphics[width=\linewidth]{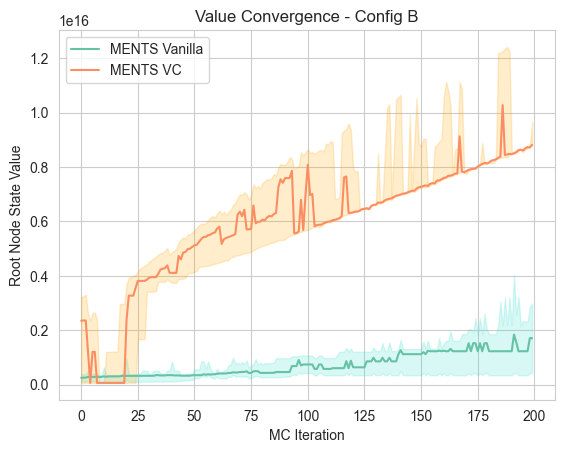}
\endminipage\hfill
\minipage{0.41\textwidth}
  \includegraphics[width=\linewidth]{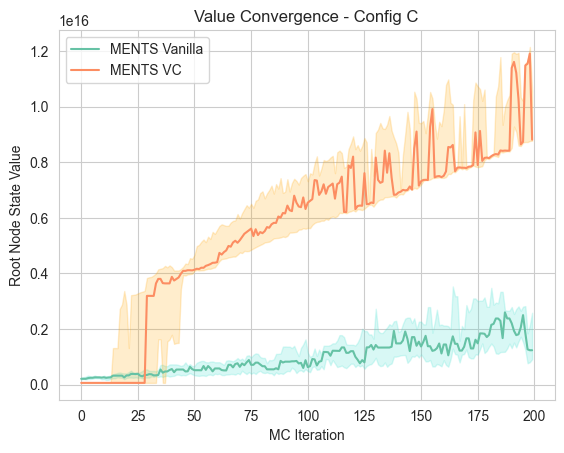}
\endminipage\hfill
\minipage{0.41\textwidth}
  \includegraphics[width=\linewidth]{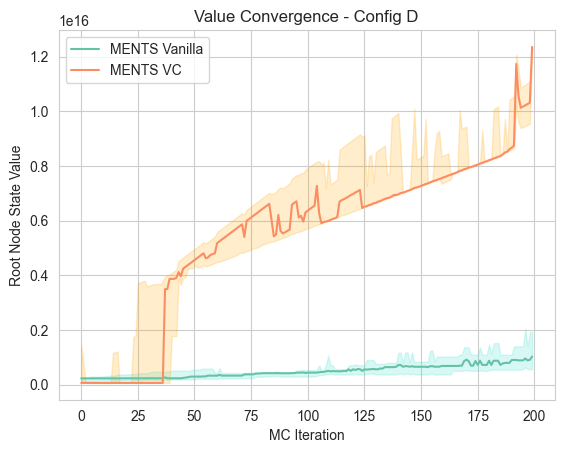}
\endminipage\hfill
\minipage{0.41\textwidth}
  \includegraphics[width=\linewidth]{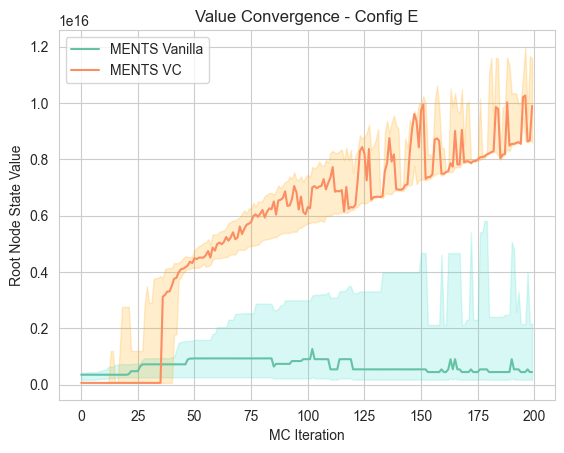}
\endminipage\hfill
\minipage{0.41\textwidth}
  \includegraphics[width=\linewidth]{figures/value_conv_F.png}
\endminipage\hfill
\caption{Value convergence performance of the maritime logistics simulation.} \label{fig:value_conv_maritime_plots}
\end{figure}

\clearpage

\section{Hybrid Fuel Example} \label{sec:hyb_fuel_details}


Hybrid fuel systems are an increasingly common feature in modern transportation, offering a flexible approach to energy utilization by combining different types of fuel or power sources to optimize efficiency, cost, and environmental impact. In the context of vehicles, this concept is well-illustrated by hybrid electric cars that use a combination of internal combustion engines and electric motors to achieve superior fuel economy and lower emissions. Similarly, hybrid systems have found applications in marine vessels, where multi-fuel engines can switch between liquefied natural gas (LNG), diesel, and even renewable energy sources such as wind power, depending on operational needs and fuel availability. 


\textbf{Problem statement:} At each increment of time, an amount of  $\underline{\Delta}_a(t) \leq ||\mathbf{x}_d^{t+1} - \mathbf{x}_d^{t} ||_p \leq \bar{\Delta}_a(t)$ of resources are to be consumed. This is a function of a linear combination of the consumption rates $\langle \phi' g(\mathbf{x}_\eta^t), \, \mathbf{a}^t \rangle$. We say that some quanta of interchangeable quantity is expended at each time increment $||\mathbf{x}_d^{t+1} - \mathbf{x}_d^{t} ||_p = \Delta_a(t)$. The controller will decide at each point in time $t$ the convex combination of resources to expend, given the current stochastic scneario $\mathbf{x}_\eta^t$ and a model depicting future scenarios $\mathbbm{\mathbf{x}_\eta^t}$. To make this problem non-trivial to solve, we impose limitations on each resource.

\subsection{Dual Power Hybrid Vehicle}

Electric and gasoline use dual power for increased efficiency as some power sources are more beneficial than others in certain scenarios. For example, lower speeds prefer electric power, and higher speeds prefer gasoline, in terms of resource efficiency. Also, one can blend power from both sources. So here the concept is that two (or more) capacities are available (fuel tank, battery), and either of this resource can be converted into some utility.

\textbf{Problem setting (Hybrid Vehicle):} Over the course of a journey a hybrid vehicle has two possible power sources, \texttt{electric} and \texttt{gasoline}. At each decision period, a well-defined quanta of resource is consumed denoted as $\Delta_a(t) \in \{ \underline{\Delta}_a(t),\bar{\Delta}_a(t) \}$. We set $p=1$, such that $\Delta_a(t) = \langle \phi' g(\mathbf{x}_\eta^t), \, \mathbf{a}^t \rangle$. This implies some convex combination of fuels must be consumed. To facilitate regenerative braking, $\underline{\Delta}_a(t)$ represents the fuel consumption when the vehicle is braking and regenerating, and $\bar{\Delta}_a(t)$ represents the fuel consumption when the vehicle is not braking. The path constraints denote the capacity, $\bar{A}$, and we set trivially, $\underline{A} = \sum \underline{\Delta}_a(t) $.

\textbf{Utility Model:} We define a utility model, typically we could say this is \textit{distance travelled} (and alternative definitions could be environment impact in terms of emissions etc.). Thus we have a stochastic scenario, with mileage $f(\mathbf{x}_\eta^t)$ evolving naturally, and resource space of dimension $D$, if there are $D$ types of fuel which can be consumed by the agent. Given the rate of conversion to utility, $f(\mathbf{x}_\eta^t)$, can vary, the rate of consumption $\langle \phi' g(\mathbf{x}_\eta^t), \, \mathbf{a}^t \rangle$ is constrained.

Suppose a $D=2$ resource constraint, the agent decides which resource to expend at each turn. In the fuel consumption example the decision could be made at each unit of time in the journey, which conversely can $\phi f(\mathbf{x}_\eta^t)$ can represent the pro rata mileage per fuel. We allow for potentially a combination of resources to be used, and we assume that within $\Delta_a(t)$ the rate of resource consumption, based on the environmental parameters are constant. The rates of consumption could be different per resource, but a specified amount of resources must be consumed per quanta of time. 

\textbf{Regenerative Braking:} We also add to the mix regenerative braking. Thus there is an $\mathbf{x}_\eta$ which tells us that the vehicle is braking. In this case regenerative braking can be activated to replenish the battery capacity, consequently limiting and fixing $\langle \phi' g(\mathbf{x}_\eta^t), \, \mathbf{a}^t \rangle = i$ to some negative constant, as we regain resources while automatically replenishing.

\textbf{Experiment:} Let D = 2 types of fuel, \texttt{gas} and \texttt{electric}. In our example a vehicle is traversing a predetermined trajectory, and in in this trajectory the vehicle can experience distinct stochastic phases which affect the mileage of the vehicle. Let $\langle \phi f(\mathbf{x}_\eta^t), \, \mathbf{a}^t \rangle$ denote mileage, and $\langle \phi' g(\mathbf{x}_\eta^t), \, \mathbf{a}^t \rangle$ represent fuel consumption, which is subject to a fixed schedule independent our the agent's decision process. As each discrete time period, the agent (onboard computer) must decide how much of, and which type of, fuel to expend for a fixed increment of fuel consumed governed by Dynamic \ref{enu:inc_action_dynamics}. The transition of $\mathbf{x}_\eta^t$ is expressed as a Markovian process. 


\textbf{Transition between modes:} Let the modes be $M_1, M_2, M_3$. The transition between modes is governed by a transition matrix $P$, where:

\[
P = 
\begin{bmatrix}
P(M_1 \to M_1) & P(M_1 \to M_2) & P(M_1 \to M_3) \\
P(M_2 \to M_1) & P(M_2 \to M_2) & P(M_2 \to M_3) \\
P(M_3 \to M_1) & P(M_3 \to M_2) & P(M_3 \to M_3)
\end{bmatrix},
\]

and $\sum_{j=1}^3 P(M_i \to M_j) = 1$ for all $i \in \{1, 2, 3\}$.

\textbf{Sampling from modes:} When the system is in mode $M_i$, a state $\mathbf{x}_\eta^t$ is sampled from the corresponding distribution $p_i(\mathbf{x}_\eta)$:

\[
\mathbf{x}_\eta^t \sim p_i(\mathbf{x}_\eta), \quad \text{if the mode at time } t \text{ is } M_i.
\]

In a Hidden Markov Model (HMM), the probability of transitioning from one latent state to the next is determined by the transition matrix $P$, and the observation at each step depends on the current state. The probability of transitioning from state $M^t$ at time $t$ to state $m^{t+1}$ at time $t+1$ is:

\[
P(M^{t+1} \mid M^t) = P(M^t \to M^{t+1}),
\]

where $P(M^t \to M^{t+1})$ is the $(M^t, M^{t+1})$ entry of the transition matrix $P$. The joint probability of the sequence of hidden states $\{M^1, M^2, \ldots, M^T\}$ is given by:

\[
P(M^1, M^2, \ldots, M^T) = P(M^1) \prod_{t=1}^{T-1} P(M^{t+1} \mid M^t),
\]

where $P(M^1)$ is the initial state probability. The observation sequence $\{x^1, x^2, \ldots, x^T\}$ is conditionally independent given the hidden states, and the emission probability is:

\[
P(X^t \mid M^t ) = p_{M^t}(x^t),
\]

where $p_{M^t}(x^t)$ is the distribution of $x^t$ given the latent state $M^t$, also known as the \textit{emission probability}. Combining transitions and emissions, the full joint probability is:

\[
P(M^1, x^1, \ldots, M^T, x^T) = P(M^1) f_{M^1}(x^1) \prod_{t=1}^{T-1} P( M^{t+1} \mid M^t) p_{M^{t+1}}(x^{t+1}).
\]

For a single step transition, the probability simplifies to:

\[
P(M^{t+1} \mid M^t) \cdot p_{M^{t+1}}(x^{t+1}).
\]

\textbf{Joint process:} The process alternates between mode transitions and state sampling, resulting in a sequence $(M^t, \mathbf{x}_\eta)$ where:

\begin{itemize}
    \item $M^t \in \{M^1, M^2, M^3\}$ is the mode at time $t$, determined by the transition matrix $P$.
    \item $\mathbf{x}_\eta$ is the observed state, sampled from the distribution associated with $M^t$.
\end{itemize}

The joint probability of a sequence $\{M^1, \mathbf{x}_\eta^1, M^2, \mathbf{x}_\eta^2, \ldots, M^T, \mathbf{x}_\eta^T\}$ is given by:

\[
P(M^1, \mathbf{x}_\eta^1, \ldots, M^T, \mathbf{x}_\eta^T) = P(M^1) P(\mathbf{x}_\eta^1| M^1) \prod_{t=2}^T P(M^t| M^{t-1}) P(\mathbf{x}_\eta^t| M^t),
\]

where $P(M^1)$ is the initial probability distribution over modes, $P(\mathbf{x}_\eta^t| M^t)$ is the probability density (or mass) of $\mathbf{x}_\eta^t$ under the mode $M^t$, and $P(M^t| M^{t-1})$ is the transition probability between modes. In a structure resembling a Hidden Markov Model (HMM), the observed sequence is $\{\mathbf{x}_\eta^1, \mathbf{x}_\eta^2, \ldots, \mathbf{x}_\eta^T \}$, while the modes $\{M^1, M^2, \ldots, M^T\}$ are latent variables.

\textbf{Objective:} We would therefore like to solve the equation that maximizes the mileage (or distance travelled) by the hybrid vehicle given allocation of the fuel type and mount of fuel over an evolving trajectory, with modes $M_1, M_2, M_3$, where the agent observes $\mathbf{x}_\eta^t$ at time $t$. Each of the 3 modes represents different regimes, where it is either gas efficient, electric efficient, or regenerative braking. An example could be,

\begin{align}
    \expeC[f(\xEta)|M_1] = 
    \begin{bmatrix}
    12 \\
    3
    \end{bmatrix}, \quad
    \expeC[f(\xEta)|M_2] = 
    \begin{bmatrix}
    2 \\
    9
    \end{bmatrix}, \quad
    \expeC[f(\xEta)|M_3] = 
    \begin{bmatrix} 
    0 \\
    0
    \end{bmatrix} \label{eq:multi_modal_example}
\end{align}

We see from Eq.~\eqref{eq:multi_modal_example} that there could exist multiple modes for $\expeC[f(\xEta)]$, depending on $M$, where the first row of $f(\xEta)$ represents the mileage for gasoline, and second row mileage for electricity. Here, the 3 distinct modes, represents a mode where gasoline is the more efficient power source, and second where electricity is most efficient, and third where regenerative braking occurs and no fuel should be consumed as a result.

\textbf{Non-Greedy Optimal Resource Consumption:} A stochastic model governs the transition of $\mathbf{x}_\eta^t$. We maintain a fixed incremental fuel consumption, $\bar{\Delta}_a = \underline{\Delta}_a$, to formulate the assignment problem. When $\mathbf{x}_\eta$ is revealed to us, we know which fuel to use to allow for the best mileage, but we cannot always act myopically, as this exploit is limited. It should be the case that neither fuel alone, or each individual maximum expenditure of a single fuel, can allow the vehicle to reach its destination, \lar{but a combination of both fuels must be used to complete the journey.} Suppose $\bar{\aB}_i$ represents the use of a single fuel throughout the journey. We impose that,


\begin{align}
    \sum^T_{t=1} \langle \phi' g(\mathbf{x}_\eta^t), \, \bar{\aB}_i \rangle \leq \underbar{A}, \quad \exists i \in D
\end{align}

This indicates that there exists at least one resource type, which cannot be exclusively used through the course of the journey, and the design of the problem should reflect that. This implies that applying the resource that is the most efficient at the current time, may not be the most optimal solution overall (as the agent may deplete such resources for future use) - in other words for at least one resource cannot be repeatedly used indefinitely, there exists a limit.

\subsection{Baseline Solution - Value Iteration with Belief MDP}

As a baseline solution method we apply a standard value iteration approach given observed states $\xEta$, without considering the hidden state $M$. The action space $\aB$ is discretized into discrete levels, indicating the amount of fuel to consume. Since the underlying state is not directly observable, the problem can be reformulated as a \textit{belief-state Markov Decision Process}. The belief state represents a probability distribution over states and evolves according to Bayesian filtering. The value function is then defined over the belief space rather than individual states.

The belief-state value function is given by the equation:
\begin{equation}
    V^*(b) = \max_{a \in A} \sum_{M \in \mathcal{M}} b(s) \sum_{M^{t+1} \in \mathcal{M}} P(M^{t+1} \mid M^t, a) \sum_{\xEta \in \mathcal{X}} P(\xEta^{t+1} \mid M^{t+1}) \left[ \innerP{\phi f(\xEta^{t+1} | M^{t+1}), \aB } + \gamma V^*(b') \right],
\end{equation}
where the transition probability function, observation probability function, and reward function govern the dynamics of the system. The updated belief state after taking an action and receiving an observation is computed using Bayes rule:
\begin{equation}
    b'(M^{t+1}) = \frac{P(\xEta^{t+1} \mid M^{t+1}) \sum_{M \in \mathcal{M}} P(M^{t+1} \mid M^t, \aB) b(M^t)}{P(\xEta^{t+1} \mid b, \aB)},
\end{equation}
where the probability of observing a particular outcome given the belief and action is:
\begin{equation}
    P(\xEta^{t+1} \mid b, \aB) = \sum_{M^{t+1} \in \mathcal{M}} P(\xEta^{t+1} \mid M^{t+1}) \sum_{M \in \mathcal{M}} P(M^{t+1} \mid M^t, \aB) b(M^t).
\end{equation}

Value iteration proceeds iteratively by updating the belief-state value function as follows:
\begin{equation}
    V_{k+1}(b) = \max_{\aB \in \mathcal{A}} \sum_{M \in \mathcal{M}} b(M^t) \left[ R(M^t, \aB) + \gamma \sum_{\xEta^{t+1} \in \mathcal{X}} P(\xEta^{t+1} \mid b, \aB) V_k(b^{t+1}) \right].
\end{equation}

\raggedbottom

\subsection{Experimental Configurations}

The following are two configuration matrices, for the transition of modes, Configurations \texttt{A}, \texttt{B}, and \texttt{C} adheres to $\mathbf{T}_1$ and Configurations \texttt{D}, \texttt{E}, and \texttt{F} adhere to $\mathbf{T}_2$. 

\[
\mathbf{T}_1 = 
\begin{bmatrix}
0.3 & 0.5 & 0.2 \\
0.1 & 0.7 & 0.2 \\
0.3 & 0.3 & 0.4 \\
\end{bmatrix}
\quad
\mathbf{T}_2 = 
\begin{bmatrix}
0.4 & 0.4 & 0.2 \\
0.4 & 0.4 & 0.2 \\
0.4 & 0.4 & 0.2 \\
\end{bmatrix}
\]

\begin{table}[tb!]
\centering
\begin{tabular}{@{}>{\raggedright\arraybackslash}p{1cm} 
                >{\raggedright\arraybackslash}p{4cm} 
                >{\raggedright\arraybackslash}p{1cm} 
                >{\raggedright\arraybackslash}p{1cm}
                >{\raggedright\arraybackslash}p{1cm}@{}}
\toprule
\textbf{Config.}         & \textbf{Mileage Matrices}   & $T$ & $\Delta_a$ & $\Delta_d$ \\ 
\midrule
\texttt{A}         & $[[10, 8]^\top, [8, 9]^\top, [8, 8]^\top]$ & 10 & 4 & -2 \\ 
\midrule
\texttt{B}         & $[[5, 2]^\top, [2, 5]^\top, [2, 2]^\top]$ & 10 & 4 & -2 \\ 
\midrule
\texttt{C}         & $[[6, 3]^\top, [3, 7]^\top, [3, 3]^\top]$ & 15 & 5 & -3 \\ 
\midrule
\texttt{D}         & $[[4, 2]^\top, [2, 6]^\top, [2, 2]^\top]$ & 20 & 3 & -1 \\ 
\midrule
\texttt{E}         & $[[8, 3]^\top, [3, 5]^\top, [3, 3]^\top]$ & 12 & 6 & -4 \\ 
\midrule
\texttt{F}         & $[[30, 10]^\top, [10, 40]^\top, [10, 10]^\top]$ & 16 & 5 & -5 \\ 
\midrule
\end{tabular}
\caption{Hybrid vehicle mileage and regenerative braking configurations.}
\end{table}

\begin{xltabular}[h]{0.6\linewidth}{ l X }
  \caption{\textbf{Shared MCTS parameters across all experimental configurations.}}
 \label{tbl:vardescription}\\
\toprule
 \textbf{Description} & \textbf{Value} \\
\midrule
\endfirsthead
\endhead
\bottomrule
\endfoot
No. of Simulations $N_{\mathrm{sim}}$ & 1000 \\
Exploration Constant ($C$) & 1.0 \\ 
Simulation Depth Limit ($N_{\mathrm{depth}}$) & 10 \\
Discount Factor ($\gamma$) & 0.9 \\ 
MENTS Temperature ($T$) & 0.7 \\ 
MENTS Epsilon ($\epsilon$) & 0.2 \\

\end{xltabular}






\clearpage

\subsection{Empirical Results - Reward Comparison}

\begin{figure}[h!]
\minipage{0.41\textwidth}
  \includegraphics[width=\linewidth]{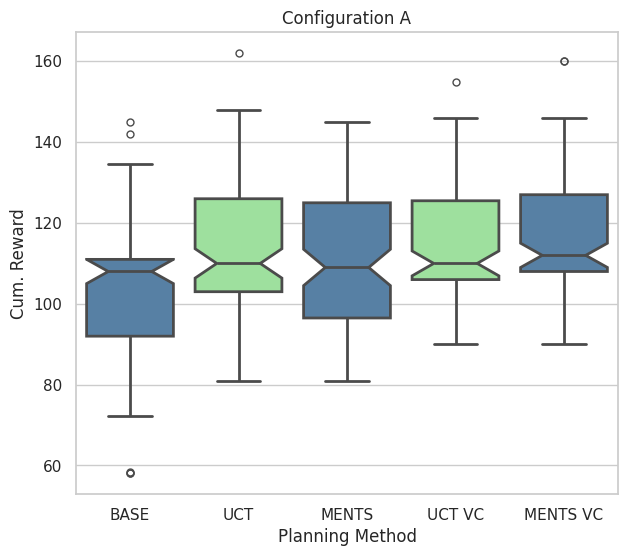}
\endminipage\hfill
\minipage{0.41\textwidth}
  \includegraphics[width=\linewidth]{figures/hybrid_fuel/config_b_boxplot.png}
\endminipage\hfill
\minipage{0.41\textwidth}
  \includegraphics[width=\linewidth]{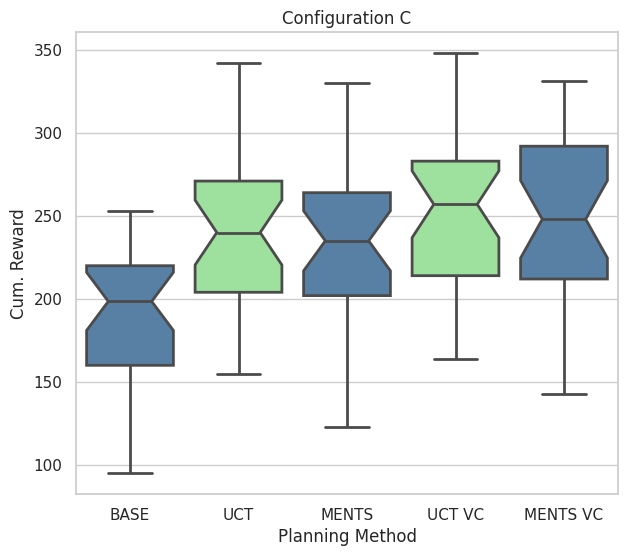}
\endminipage\hfill
\minipage{0.41\textwidth}
  \includegraphics[width=\linewidth]{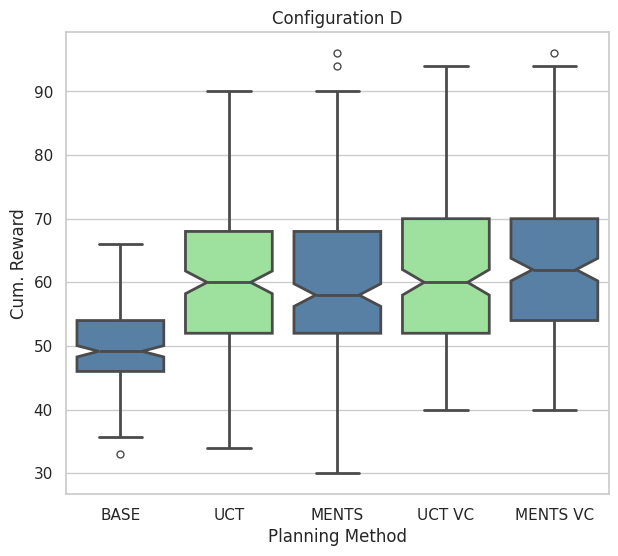}
\endminipage\hfill
\minipage{0.41\textwidth}
  \includegraphics[width=\linewidth]{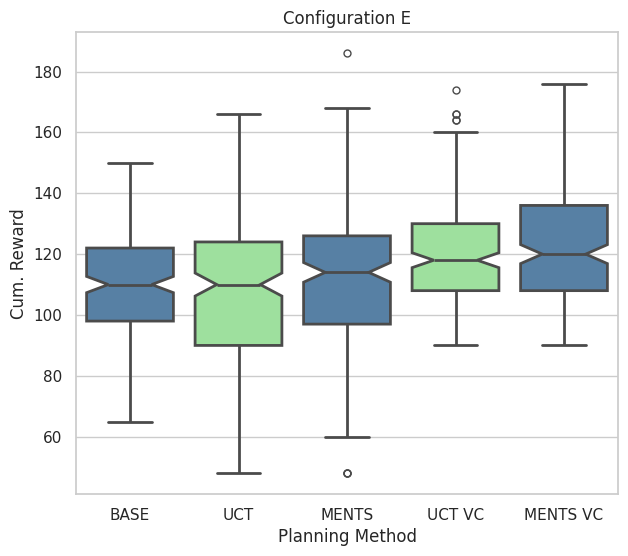}
\endminipage\hfill
\minipage{0.41\textwidth}
  \includegraphics[width=\linewidth]{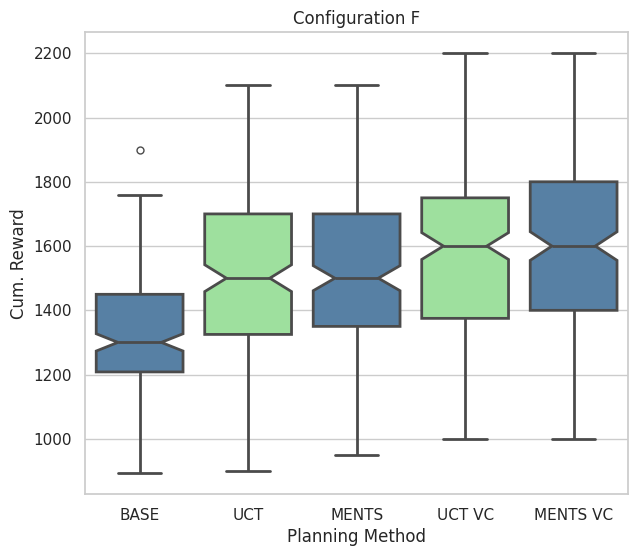}
\endminipage\hfill
\caption{Distance travelled on fuel constraints (reward) per policy. Baseline solution constitutes simple value iteration solution over the MDP.} \label{fig:reward_hybrid_vehicle}
\end{figure}

\clearpage

\subsection{Empirical Results - Value Convergence} 

\begin{figure}[H]
\minipage{0.41\textwidth}
  \includegraphics[width=\linewidth]{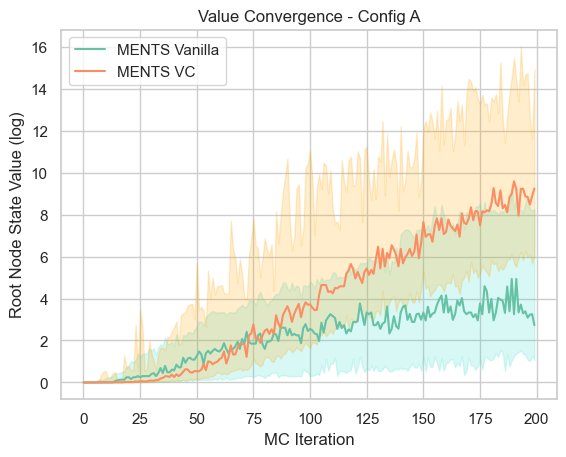}
\endminipage\hfill
\minipage{0.41\textwidth}
  \includegraphics[width=\linewidth]{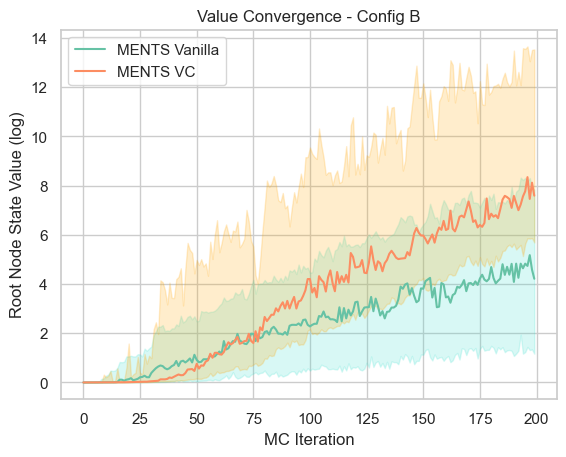}
\endminipage\hfill
\minipage{0.41\textwidth}
  \includegraphics[width=\linewidth]{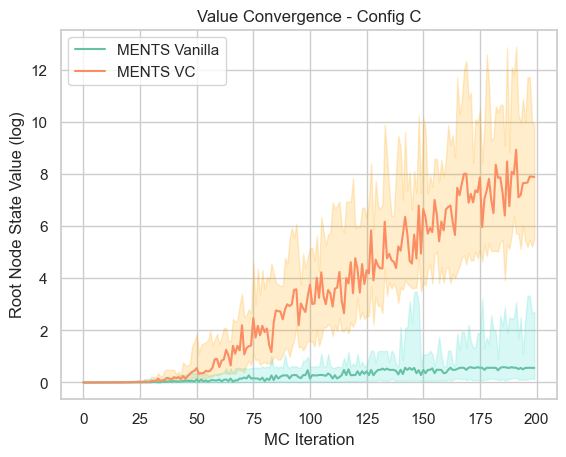}
\endminipage\hfill
\minipage{0.41\textwidth}
  \includegraphics[width=\linewidth]{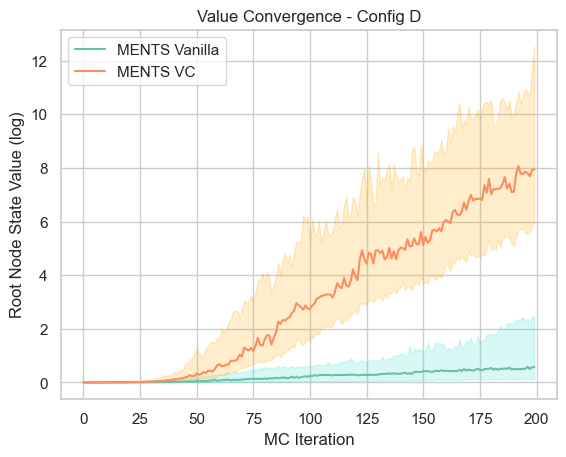}
\endminipage\hfill
\minipage{0.41\textwidth}
  \includegraphics[width=\linewidth]{figures/hybrid_fuel/config_e_value.png}
\endminipage\hfill
\minipage{0.41\textwidth}
  \includegraphics[width=\linewidth]{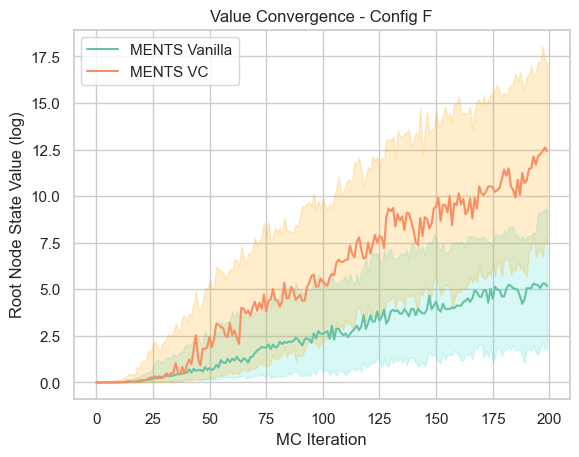}
\endminipage\hfill
\caption{Value convergence performance of the hybrid fuel simulation.} \label{fig:value_conv_maritime_plots}
\end{figure}

\clearpage

\subsection{Hybrid Fuel Experimental Configurations - Higher Dimension}

As an extension of the 2-energy source (gas, electric) hybrid fuel example, we model a hybrid energy system as an expanded finite-state Markov process with seven distinct operational modes: three gasoline types which operate efficiently in the gasoline-efficient states ($\texttt{ge}$, $\texttt{ge2}$, $\texttt{ge3}$), three battery types which operate efficiently in electricity efficient states ($\texttt{ee}$, $\texttt{ee2}$, $\texttt{ee3}$), and one regenerative braking state ($\texttt{rb}$). Each state represents an optimal operating regime in which the system achieves maximum efficiency for its respective energy source. These parameters are specified in Table \ref{tab:hybrid_vehicle_params_expanded}.

\begin{table}[h!]
\centering
\begin{tabular}{@{}>{\raggedright\arraybackslash}p{1cm} 
                >{\raggedright\arraybackslash}p{10cm} 
                >{\raggedright\arraybackslash}p{1cm} 
                >{\raggedright\arraybackslash}p{1cm}
                >{\raggedright\arraybackslash}p{1cm}@{}}
\toprule
\textbf{Config.}         & \textbf{Mileage Matrices}   & $T$ & $\Delta_a$ & $\Delta_d$ \\ 
\midrule
\texttt{A}         & $[[10, 8]^\top, [11, 8]^\top, [12, 8]^\top, [8, 9]^\top, [8, 10]^\top, [8, 11]^\top, [8, 8]^\top]$ & 10 & 4 & -2 \\ 
\midrule
\texttt{B}         & $[[5, 2]^\top, [6, 2]^\top, [7, 2]^\top, [2, 3]^\top, [2, 4]^\top, [2, 5]^\top, [2, 2]^\top]$ & 10 & 4 & -2 \\ 
\midrule
\texttt{C}         & $[[10, 4]^\top, [15, 4]^\top, [20, 4]^\top, [4, 15]^\top, [4, 10]^\top, [4, 5]^\top, [4, 4]^\top]$ & 15 & 5 & -3 \\ 
\midrule
\texttt{D}         & $[[22, 4]^\top, [26, 4]^\top, [32, 4]^\top, [4, 15]^\top, [4, 16]^\top, [4, 17]^\top, [4, 4]^\top]$ & 20 & 3 & -1 \\ 
\midrule
\texttt{E}         & $[[1, 1]^\top, [3, 1]^\top, [6, 1]^\top, [1, 6]^\top, [1, 9]^\top, [1, 12]^\top, [1, 1]^\top]$ & 12 & 6 & -4 \\ 
\midrule
\texttt{F}         & $[[30, 10]^\top, [40, 10]^\top, [50, 10]^\top, [10, 15]^\top, [10, 20]^\top, [10, 30]^\top, [10, 10]^\top]$ & 16 & 5 & -5 \\ 
\midrule
\end{tabular}
\caption{Hybrid vehicle mileage and regenerative braking configurations: The first three vectors represent mileage values for gasoline-efficient operating modes ($\texttt{ge}$, $\texttt{ge2}$, $\texttt{ge3}$), the next three vectors correspond to mileage values for electricity-efficient operating modes ($\texttt{ee}$, $\texttt{ee2}$, $\texttt{ee3}$), and the final vector represents the regenerative braking mode ($\texttt{rb}$).}
\label{tab:hybrid_vehicle_params_expanded}
\end{table}

State transitions follow a discrete-time Markov chain characterized by the transition matrix $\mathbf{T} \in \mathbb{R}^{7\times7}$, where $T_{ij}$ denotes the transition probability from state $i$ to state $j$. The system uses two distinct configuration matrices: Configurations \texttt{A}, \texttt{B}, and \texttt{C} follow $\mathbf{T}_1$, while Configurations \texttt{D}, \texttt{E}, and \texttt{F} follow $\mathbf{T}_2$.

\[
\mathbf{T}_1 = \bordermatrix{
    & \texttt{ge} & \texttt{ge2} & \texttt{ge3} & \texttt{ee} & \texttt{ee2} & \texttt{ee3} & \texttt{rb} \cr
    \texttt{ge}  & 0.45 & 0.20 & 0.10 & 0.10 & 0.05 & 0.05 & 0.05 \cr
    \texttt{ge2} & 0.20 & 0.45 & 0.10 & 0.05 & 0.10 & 0.05 & 0.05 \cr
    \texttt{ge3} & 0.10 & 0.20 & 0.45 & 0.05 & 0.05 & 0.10 & 0.05 \cr
    \texttt{ee}  & 0.10 & 0.05 & 0.05 & 0.45 & 0.20 & 0.10 & 0.05 \cr
    \texttt{ee2} & 0.05 & 0.10 & 0.05 & 0.20 & 0.45 & 0.10 & 0.05 \cr
    \texttt{ee3} & 0.05 & 0.05 & 0.10 & 0.10 & 0.20 & 0.45 & 0.05 \cr
    \texttt{rb}  & 0.20 & 0.20 & 0.10 & 0.20 & 0.20 & 0.10 & 0.00 \cr
}
\]

\[
\mathbf{T}_2 = \bordermatrix{
    & \texttt{ge} & \texttt{ge2} & \texttt{ge3} & \texttt{ee} & \texttt{ee2} & \texttt{ee3} & \texttt{rb} \cr
    \texttt{ge}  & 0.45 & 0.20 & 0.10 & 0.10 & 0.05 & 0.05 & 0.05 \cr
    \texttt{ge2} & 0.20 & 0.45 & 0.10 & 0.05 & 0.10 & 0.05 & 0.05 \cr
    \texttt{ge3} & 0.10 & 0.20 & 0.45 & 0.05 & 0.05 & 0.10 & 0.05 \cr
    \texttt{ee}  & 0.10 & 0.05 & 0.05 & 0.45 & 0.20 & 0.10 & 0.05 \cr
    \texttt{ee2} & 0.05 & 0.10 & 0.05 & 0.20 & 0.45 & 0.10 & 0.05 \cr
    \texttt{ee3} & 0.05 & 0.05 & 0.10 & 0.10 & 0.20 & 0.45 & 0.05 \cr
    \texttt{rb}  & 0.20 & 0.20 & 0.10 & 0.20 & 0.20 & 0.10 & 0.00 \cr
}
\]


\clearpage

\subsection{Empirical Results - Reward Comparison (Expanded Fuel Selection)}

\begin{figure}[h!]
\minipage{0.41\textwidth}
  \includegraphics[width=\linewidth]{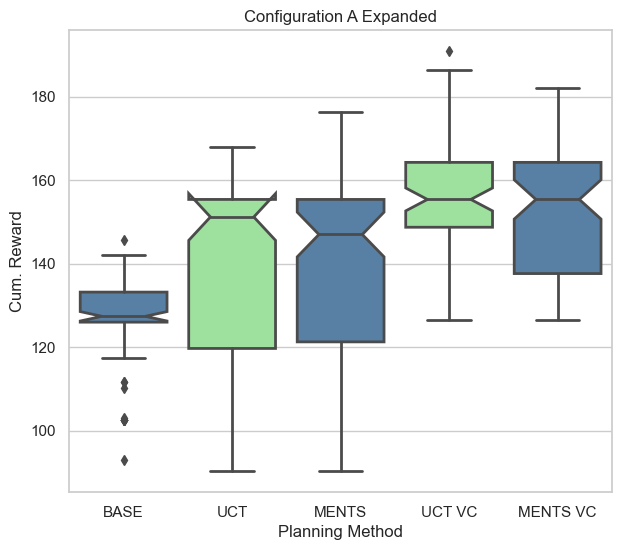}
\endminipage\hfill
\minipage{0.41\textwidth}
  \includegraphics[width=\linewidth]{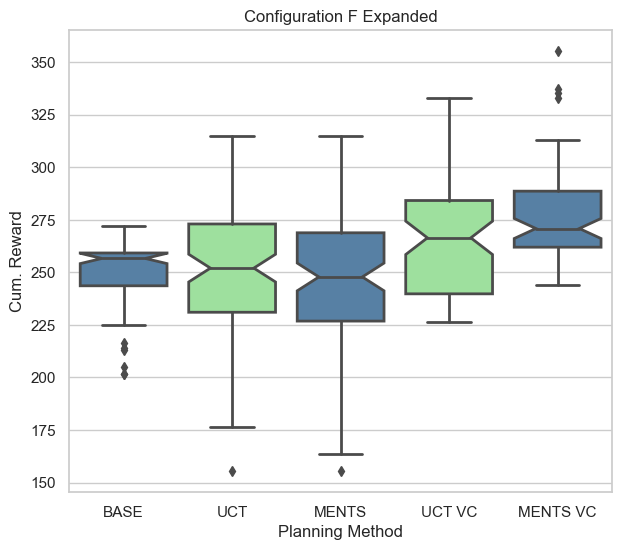}
\endminipage\hfill
\minipage{0.41\textwidth}
  \includegraphics[width=\linewidth]{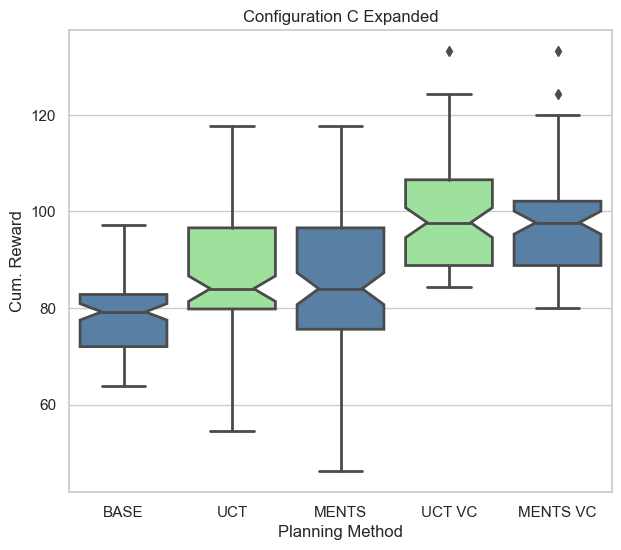}
\endminipage\hfill
\minipage{0.41\textwidth}
  \includegraphics[width=\linewidth]{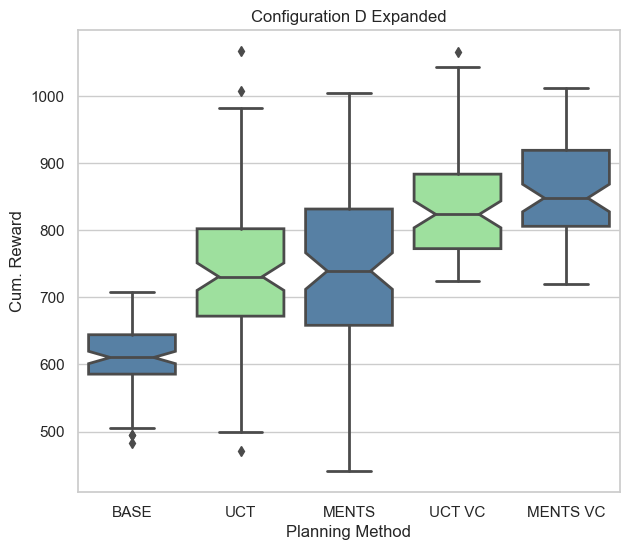}
\endminipage\hfill
\minipage{0.41\textwidth}
  \includegraphics[width=\linewidth]{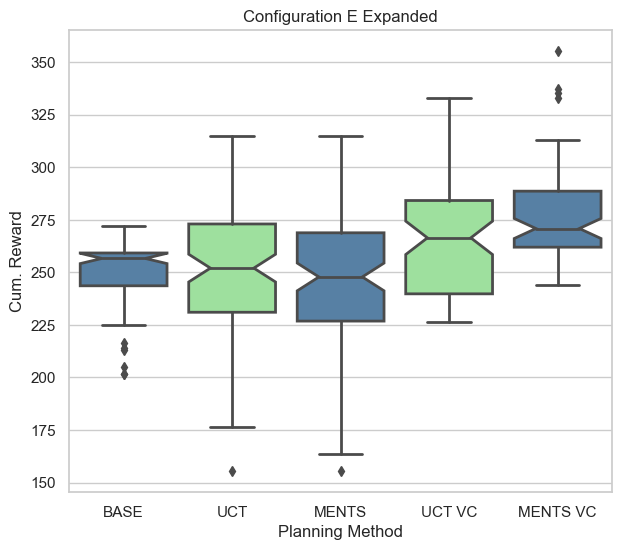}
\endminipage\hfill
\minipage{0.41\textwidth}
  \includegraphics[width=\linewidth]{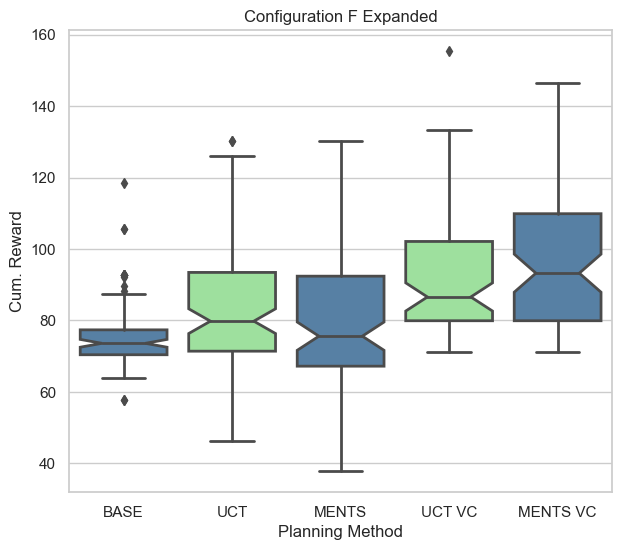}
\endminipage\hfill
\caption{Distance travelled on fuel constraints (reward) per policy. Baseline solution constitutes simple value iteration solution over the MDP.} \label{fig:reward_hybrid_vehicle}
\end{figure}

\clearpage

\subsection{Empirical Results - Value Convergence (Expanded Fuel Selection)} 

\begin{figure}[H]
\minipage{0.41\textwidth}
  \includegraphics[width=\linewidth]{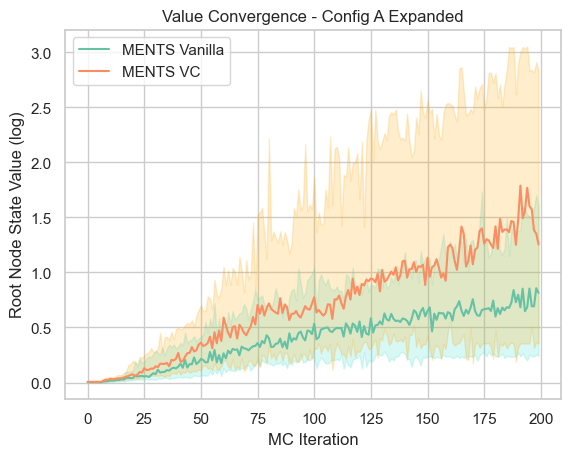}
\endminipage\hfill
\minipage{0.41\textwidth}
  \includegraphics[width=\linewidth]{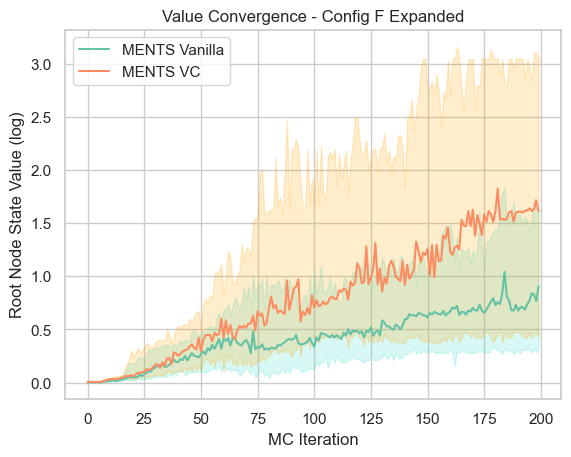}
\endminipage\hfill
\minipage{0.41\textwidth}
  \includegraphics[width=\linewidth]{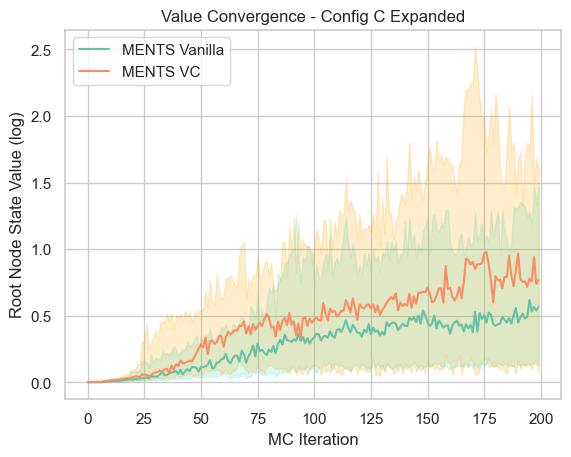}
\endminipage\hfill
\minipage{0.41\textwidth}
  \includegraphics[width=\linewidth]{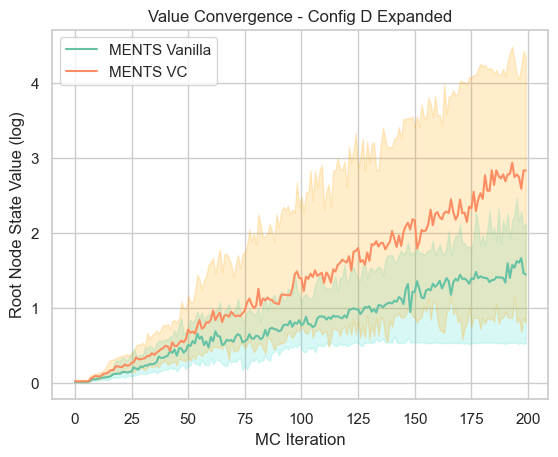}
\endminipage\hfill
\minipage{0.41\textwidth}
  \includegraphics[width=\linewidth]{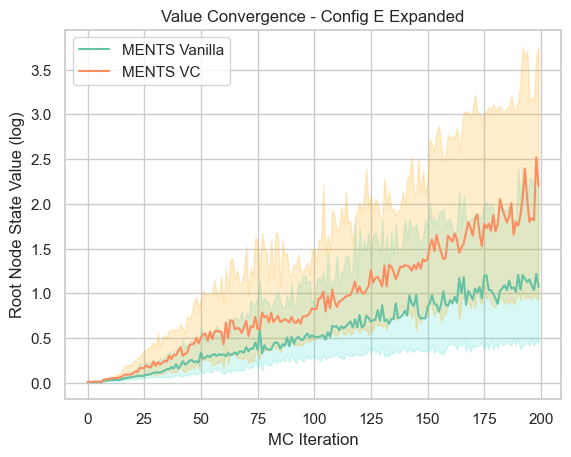}
\endminipage\hfill
\minipage{0.41\textwidth}
  \includegraphics[width=\linewidth]{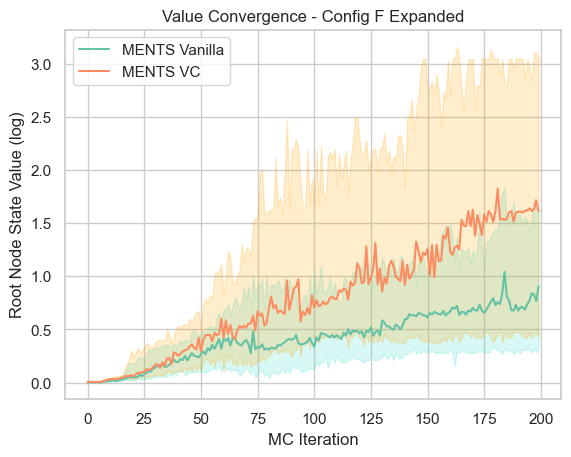}
\endminipage\hfill
\caption{Value convergence performance of the hybrid fuel simulation.} \label{fig:value_conv_maritime_plots}
\end{figure}

\clearpage

\section{Financial Options Trading} \label{sec:option_trading_appendix}

\subsection{Introduction}

Options are derivative financial instruments that grant the holder the right, but not the obligation, to buy or sell an underlying asset at a predetermined price within a specified time frame \cite{james2003option}. American options differ from their European counterparts in that they can be exercised at any point prior to expiration, whereas European options may only be exercised at maturity. The pricing of American options presents a significant computational challenge due to the need to determine the optimal exercise strategy. This flexibility makes American options a canonical example of an optimal stopping problem. A systematic decision criterion is required to assess, at each timestep, whether immediate exercise maximizes the expected payoff \cite{clement2001analysis}. To address this, we propose a novel approach leveraging Monte Carlo planning to optimize the exercise policy.

\textbf{Problem Statement:} The objective is to optimize the decision-making process for American option holders by learning an optimal policy under realistic stock price dynamics. Specifically, the goal is to determine, at each timestep, whether to continue holding or to exercise the option in order to maximize the expected return. As a baseline, we employ the Longstaff-Schwartz algorithm, comparing its performance against our proposed Monte Carlo planning approach.

\subsection{American Option}

\begin{xltabular}{\linewidth}{ l  X }
  \caption{\textbf{Notation and variable descriptions for American option pricing.}}
 \label{tbl:optionparameters}\\
\toprule
 \textbf{Notation} & \textbf{Description} \\
\midrule
\endfirsthead
\toprule
  \textbf{Notation} & \textbf{Description} \\
\midrule
\endhead
\bottomrule
\endfoot

$S_0$ & Initial stock price at the beginning of the option period. \\ 
$K$ & Strike price, the fixed price at which the option holder can buy or sell the underlying asset. \\ 
$T$ & Time to maturity, the total duration of the option in years. \\ 
$r$ & Risk-free interest rate, representing the theoretical return on a risk-free investment. \\ 
$\sigma$ & Volatility of the underlying asset, indicating the asset's price fluctuations. \\ 
$\Delta t$ & Time step, representing the interval used in the simulation (e.g., $1/10$ means 10 intervals within the option duration). \\ 
$q$ & Dividend yield, the annual dividend expressed as a percentage of the stock price. \\ 
$\texttt{option\_type}$ & Type of option, either "call" for a call option or "put" for a put option. \\

\end{xltabular}

\subsubsection{Modelling option price}
Two commonly used models to simulate the stock price dynamics are the Binomial Model and the Geometric Brownian Motion (GBM).

\textbf{Binomial Model:}The binomial model approximates stock price movements over time using a discrete-time framework:
\[
S_{t+\Delta t} =
\begin{cases}
    S_t \cdot u, & \text{with probability } p, \\
    S_t \cdot d, & \text{with probability } 1 - p,
\end{cases}
\]
where \(u = e^{\sigma \sqrt{\Delta t}}\) represents the upward movement factor, \(d = \frac{1}{u}\) denotes the downward movement factor, and \(p = \frac{e^{r \Delta t} - d}{u - d}\) is the risk-neutral probability.

\textbf{Geometric Brownian Motion: }The GBM model for american option is written as:
\[
S_{t+\Delta t} = S_t \exp\left[\left(r - 0.5\sigma^2\right)\Delta t + \sigma \sqrt{\Delta t} Z\right],
\]
where \(S_t\) represents the stock price at time \(t\), \(r\) is the risk-free interest rate, \(\sigma\) denotes the stock's volatility, and \(Z\) is a standard normal random variable distributed as \(\mathcal{N}(0, 1)\).

In the setup, we consider a finite time horizon, where a stochastic price is calculated at each predetermined time step using one of the aforementioned methods. Both methods have been tested in practice and demonstrate great performance in accurately estimating stock prices.

\subsubsection{Markov Decision Process}
The action space in an Markov Decision Process (MDP) consists of two actions: hold and execute. The state of the MDP is defined by the time step \(t\), which is a discrete representation of the time to maturity incremented by \(\Delta t\) at each step, the asset price \(S_t\), which represents the price of the underlying asset at time \(t\), and the terminal status, a boolean variable indicating whether the option has reached maturity or has been exercised.
The state is represented as:
\[
s_t = \{t, S_t, \texttt{is\_terminal}\},
\]
where $t \in [0, T]$ and $S_t \geq 0$.

\subsection{Table of parameters}
\begin{xltabular}{0.9\linewidth}{ l X X X X X X X X }
  \caption{\textbf{American Option Parameters.}}
 \label{tbl:optionparameters}\\
\toprule
 \textbf{Config.} & \textbf{$S_0$} & \textbf{$K$} & \textbf{$T$} & \textbf{$r$} & \textbf{$\sigma$} & \textbf{$\Delta t$} & \textbf{$q$} & \textbf{Type} \\
\midrule
\endfirsthead
\endhead
\bottomrule
\endfoot

A & 40 & 36 & 1 & 0.1 & 0.2 & 0.1 & 0 & Call \\ 
B & 12 & 10 & 1.5 & 0.08 & 0.25 & 0.1 & 0.03 & Call \\ 
C & 36 & 40 & 0.5 & 0.05 & 0.3 & 0.05 & 0.05 & Put \\ 
D & 10 & 14 & 1 & 0.12 & 0.35 & 0.05 & 0.05 & Put \\ 
E & 8 & 5 & 1.5 & 0.07 & 0.2 & 0.1 & 0 & Call \\ 
F & 5 & 8 & 1 & 0.1 & 0.4 & 0.1 & 0.05 & Put \\ 

\end{xltabular}

\subsection{Longstaff-Schwartz algorithm}
The Longstaff-Schwartz algorithm \cite{longstaff2001valuing} is a widely recognized and industry-standard for pricing American options. 

The mathematical problem of American option is an optimal stopping problem. The Longstaff-Schwartz algorithm addresses this by working backward from the option's expiration date, determining at each step whether to exercise the option or continue holding it. Using regression, it estimates the continuation value, which is the expected future payoff of holding the option and compares it to the immediate exercise value to determine the optimal strategy. The continuation value is written as,
\begin{align}
C(t_i)
&= \mathbb{E}_Q \left[
    \frac{V(t_{i+1})}{B(t_{i+1})} \;\middle|\; \mathcal{F}_{t_i}
\right] B(t_i).
\end{align}
It can also be expressed as, 
\begin{align}
F(\omega; t_k)
&= \mathbb{E}^Q \Biggl[
  \sum_{j=k+1}^{L}
    \exp\!\Bigl(
      - \int_{t_k}^{t_j} r(\omega, s)\,\mathrm{d}s
    \Bigr)
    \,C\bigl(\omega, t_j;\,t_k, T\bigr)
  \;\Bigg|\;
  \mathcal{F}_{t_k}
\Biggr].
\end{align}

To estimate the continuation value, $N$ possible paths of the underlying asset are simulated using Monte Carlo methods based on geometric Brownian motion (GBM). At each time step, only in-the-money paths are considered, as these represent scenarios where early exercise might be optimal. Second-degree polynomial regression is then applied to model the continuation value as a function of the asset’s price, using the discounted future cash flows as the dependent variable. The polynomial regression can be expressed as, 
\begin{align}
y(i, j)
&= \beta(0) 
+ \beta(1) \cdot S(i, j) 
+ \beta(2) \cdot S(i, j)^2.
\end{align}

By comparing this estimated continuation value with the immediate exercise value, the algorithm determines the optimal strategy for each path, and the option price is obtained by averaging the discounted cash flows across all simulated paths.

\subsubsection{Empirical Results - Reward Comparison}
\label{sec:empirical_results_appendix}
\begin{figure}[h]
\minipage{0.41\textwidth}
  \includegraphics[width=\linewidth]{figures/financial_options/Figure_A.png}
\endminipage\hfill
\minipage{0.41\textwidth}
  \includegraphics[width=\linewidth]{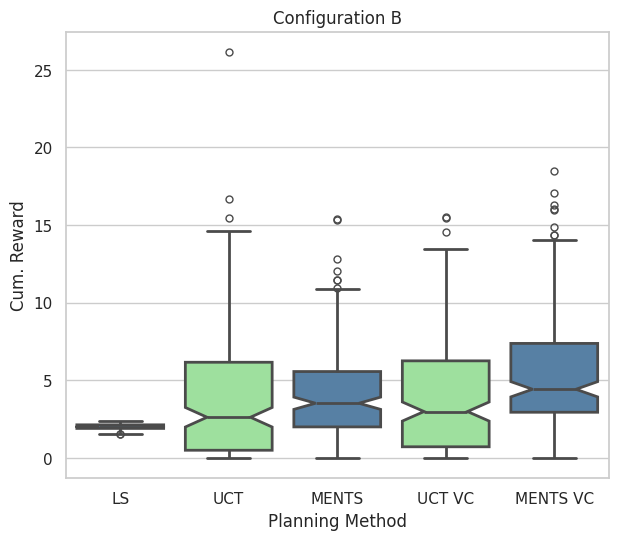}
\endminipage\hfill
\minipage{0.41\textwidth}
  \includegraphics[width=\linewidth]{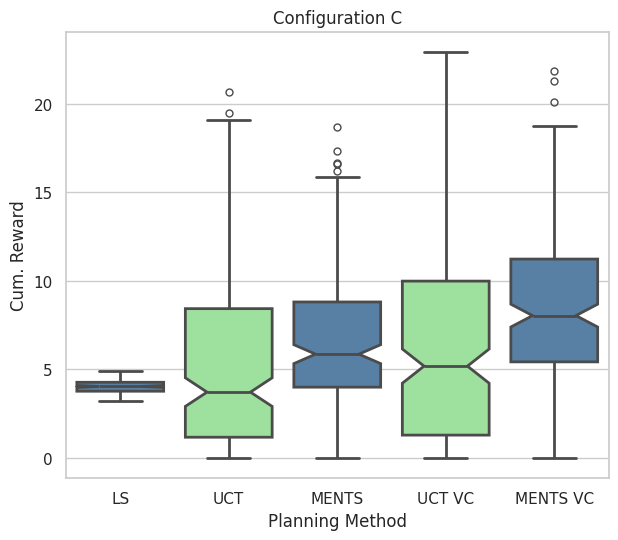}
\endminipage\hfill
\minipage{0.41\textwidth}
  \includegraphics[width=\linewidth]{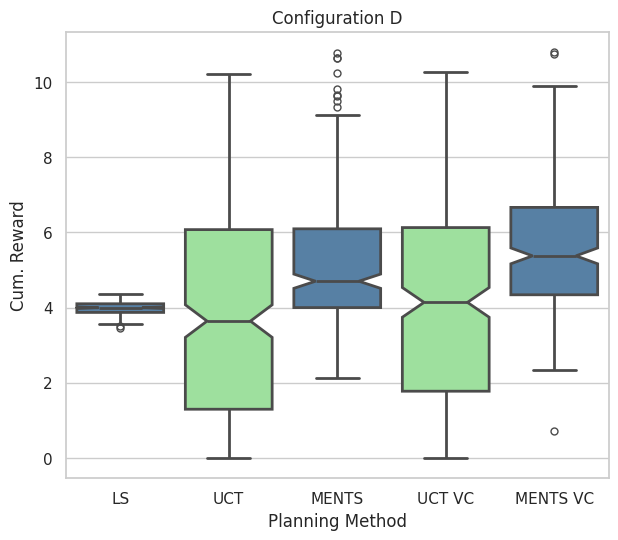}
\endminipage\hfill
\minipage{0.41\textwidth}
  \includegraphics[width=\linewidth]{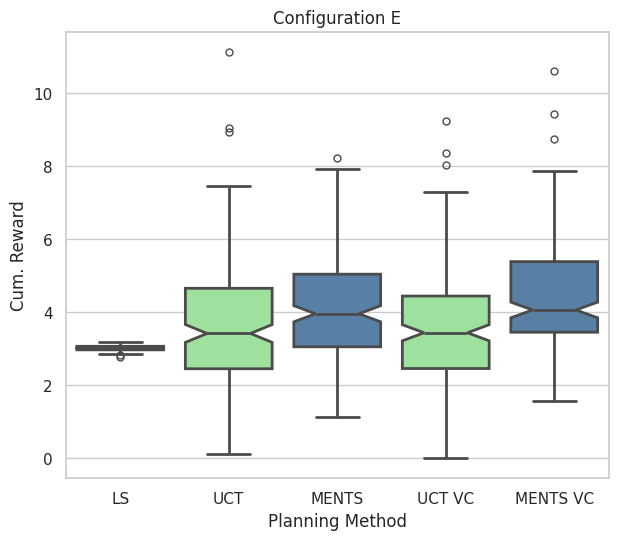}
\endminipage\hfill
\minipage{0.41\textwidth}
  \includegraphics[width=\linewidth]{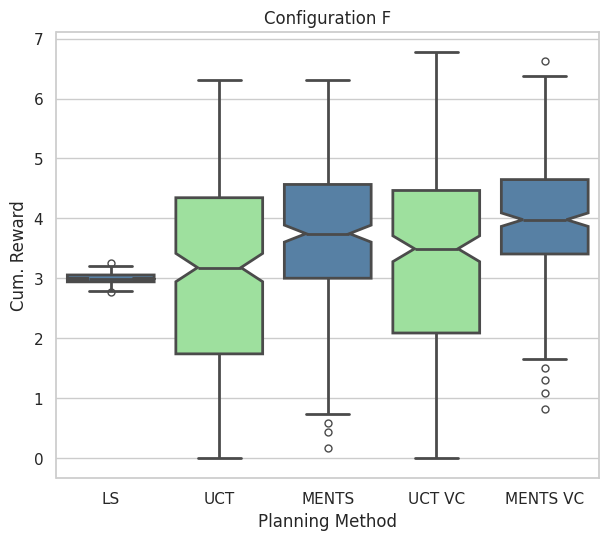}
\endminipage\hfill
\caption{Reward comparison of different American option configurations.} \label{fig:cost_maritime_plots}
\end{figure}

Longstaff–Schwartz (LS) has a lower variance because it applies a regression-based method, fitting a model across many simulated paths at each time step to approximate the option’s continuation value. This aggregated regression smooths out randomness and yields relatively stable estimates. In contrast, Monte Carlo planning explores one path at a time, making the outcome heavily dependent on whether sampled paths are especially favorable or unfavorable, which leads to higher variance across multiple runs.
\subsubsection{Empirical Results - Value Convergence}

\begin{figure}[h]
\minipage{0.41\textwidth}
  \includegraphics[width=\linewidth]{figures/financial_options/value_A.png}
\endminipage\hfill
\minipage{0.41\textwidth}
  \includegraphics[width=\linewidth]{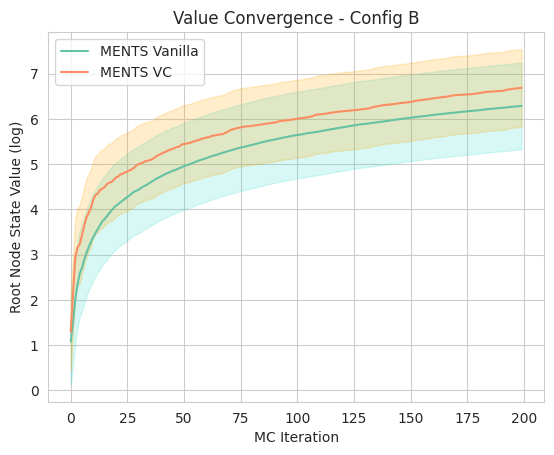}
\endminipage\hfill
\minipage{0.41\textwidth}
  \includegraphics[width=\linewidth]{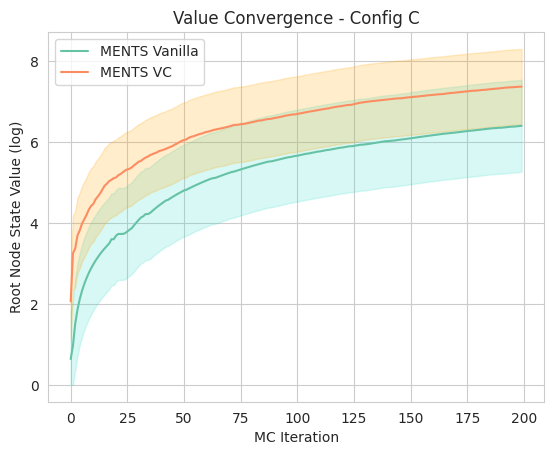}
\endminipage\hfill
\minipage{0.41\textwidth}
  \includegraphics[width=\linewidth]{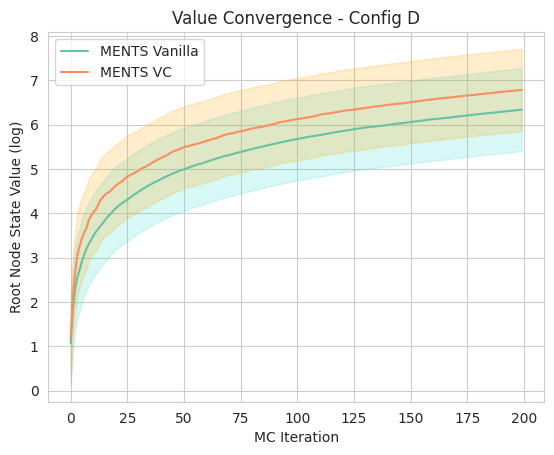}
\endminipage\hfill
\minipage{0.41\textwidth}
  \includegraphics[width=\linewidth]{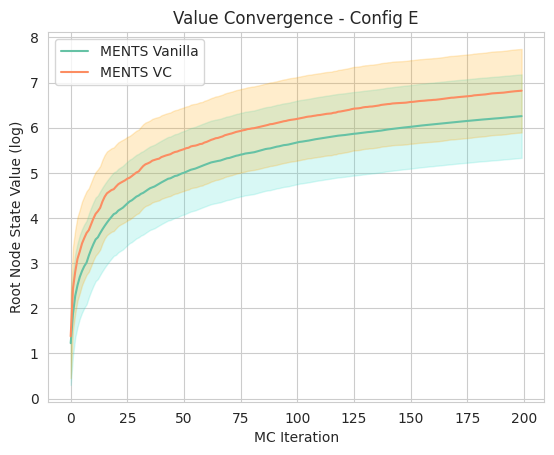}
\endminipage\hfill
\minipage{0.41\textwidth}
  \includegraphics[width=\linewidth]{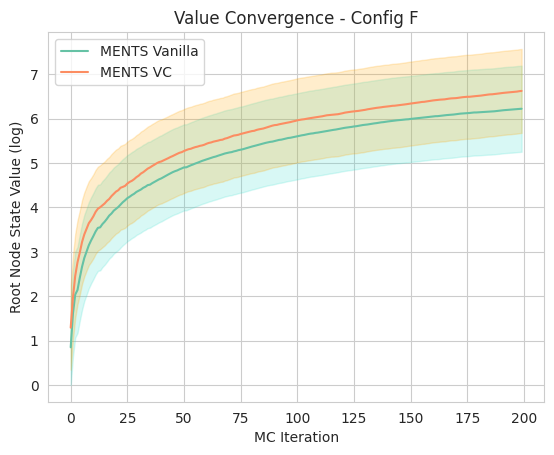}
\endminipage\hfill
\caption{Value convergence performance of different American option configurations.} \label{fig:cost_maritime_plots}
\end{figure}

\begin{xltabular}[h]{0.6\linewidth}{ l X }
  \caption{\textbf{Shared parameters across all experimental configurations.}}
 \label{tbl:vardescription}\\
\toprule
 \textbf{Description} & \textbf{Value} \\
\midrule
\endfirsthead
\endhead
\bottomrule
\endfoot
No. of Simulations $N_{\mathrm{sim}}$ & 1000 \\
Exploration Constant ($C$) & 1.0 \\ 
Simulation Depth Limit ($N_{\mathrm{depth}}$) & 100 \\
Discount Factor ($\gamma$) & 0.9 \\ 
MENTS Temperature ($T$) & 0.7 \\ 
MENTS Epsilon ($\epsilon$) & 0.2 \\

\end{xltabular}

\clearpage

\subsection{Financial Options Trading - Expanded}

We extend American option pricing to a multivariate setting in which multiple correlated options must be managed simultaneously. The combinatorial complexity arises from asset correlations and constraints on simultaneous exercises (e.g., at most \(\bar{\Delta}_a\) options exercisable per period). This framework addresses practical needs in portfolio optimization and algorithmic trading. Asset prices follow a Multivariate Brownian Motion (MVBM) with covariance matrix:

\[
\Sigma = 
\begin{bmatrix}
\sigma_1^2 & \sigma_1\sigma_2\rho_{1,2} & \cdots & \sigma_1\sigma_c\rho_{1,c} \\
\sigma_2\sigma_1\rho_{1,2} & \sigma_2^2 & \cdots & \sigma_1\sigma_c\rho_{2,c} \\
\vdots & \vdots & \ddots & \vdots\\
\sigma_c\sigma_1\rho_{1,c} & \vdots & \ddots & \sigma_c^2\\
\end{bmatrix}
\]

where $\sigma_i$ is volatility asset $i$ and $\rho_{i,j}$ is the correlation between assets $i$ and 
$j$.

\paragraph{Basket of Options:} The agent manages a portfolio of financial options consisting of \textit{call options} (the right to buy an asset at a fixed strike price) and \textit{put options} (the right to sell at a strike price). For call options, the payoff upon exercise is $\max(S_t - K, 0)$, where $S_t$ is the asset price and $K$ is the strike price; for put options, it is $\max(K - S_t, 0)$. The agent must determine an exercise policy that maximizes the cumulative discounted payoff over the time horizon. Exercises are performed individually per option, subject to constraints (e.g., expiration dates and American-style exercise rules). The total reward is the sum of payoffs from all exercised options.

\paragraph{Dimensionality Increase:} For a basket of $B$ options, the action space expands significantly compared to the single-option case. Specifically, a binary vector $\mathbf{x} \in \{0,1\}^{B}$ represents hold/exercise decisions, with $0 \leq \|\mathbf{x}_d^{t+1} - \mathbf{x}_d^{t}\|_p \leq \bar{\Delta}_a$. From dynamic constraint \ref{enu:inc_action_dynamics}, $\bar{\Delta}_a$ denotes the maximum number of options exercisable at each time step. This effectively increases the action space dimensionality from $2$ to $2^B$.





\subsection{FINANCIAL OPTIONS EXPERIMENTAL CONFIGURATIONS - HIGHER DIMENSION}

\begin{table}[ht]
\centering
\small
\renewcommand{\arraystretch}{1.2}
\begin{tabular}{@{} l p{10.0cm} c c c c @{}}
\toprule
\textbf{Config} & \textbf{$(S_0,\; K,\; \sigma,\; q,\; \text{Type})$} & $T$ & $r$ & $dt$ & $\bar{\Delta}_a$ \\
\midrule
A & (40, 36, 0.20, 0.00, Call), (12, 10, 0.25, 0.03, Call), (8, 5, 0.20, 0.00, Call) & 1.0 & 0.08 & 0.02 & 3 \\
B & (25, 20, 0.30, 0.02, Call), (30, 28, 0.35, 0.01, Put), (15, 16, 0.25, 0.04, Call), (20, 18, 0.40, 0.03, Put) & 1.5 & 0.06 & 0.05 & 3 \\
C & (26, 24, 0.27, 0.02, Call), (15, 16, 0.30, 0.03, Put), (38, 35, 0.25, 0.015, Call) & 1.3 & 0.06 & 0.05 & 2 \\
D & (40, 42, 0.25, 0.01, Call), (35, 36, 0.30, 0.015, Put), (50, 48, 0.28, 0.02, Call), (45, 47, 0.26, 0.01, Put), (60, 58, 0.27, 0.015, Call) & 1.5 & 0.05 & 0.025 & 3 \\
E & (18, 20, 0.26, 0.015, Put), (27, 25, 0.32, 0.02, Call), (22, 24, 0.29, 0.025, Put), (31, 30, 0.33, 0.01, Call) & 1.4 & 0.065 & 0.04 & 2 \\
F & (40, 35, 0.20, 0.01, Call), (25, 28, 0.25, 0.02, Put), (30, 25, 0.30, 0.01, Call), (20, 22, 0.22, 0.015, Put) & 1.0 & 0.05 & 0.05 & 2 \\
\bottomrule
\end{tabular}
\caption{Multi-option configurations.}
\label{tab:multi_option_configs_extended}
\end{table}

\clearpage

\subsection{EMPIRICAL RESULTS - REWARD COMPARISON} 

\begin{figure}[H]
\minipage{0.41\textwidth}
  \includegraphics[width=\linewidth]{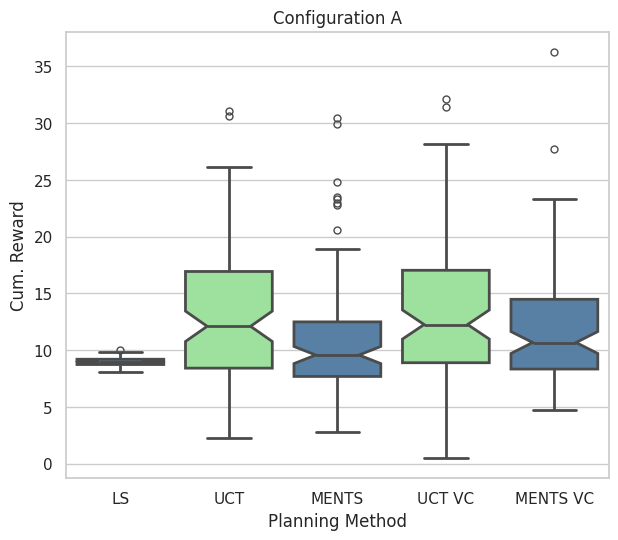}
\endminipage\hfill
\minipage{0.41\textwidth}
  \includegraphics[width=\linewidth]{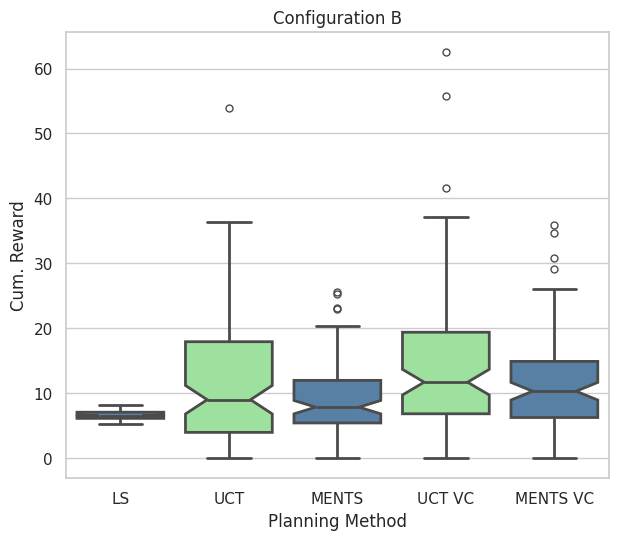}
\endminipage\hfill
\minipage{0.41\textwidth}
  \includegraphics[width=\linewidth]{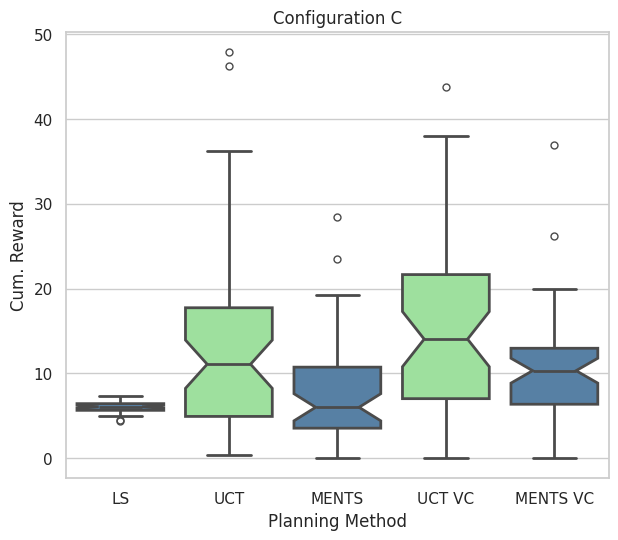}
\endminipage\hfill
\minipage{0.41\textwidth}
  \includegraphics[width=\linewidth]{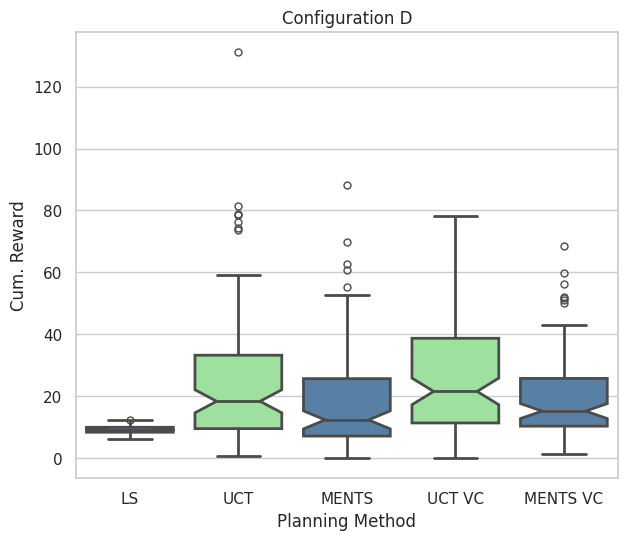}
\endminipage\hfill
\minipage{0.41\textwidth}
  \includegraphics[width=\linewidth]{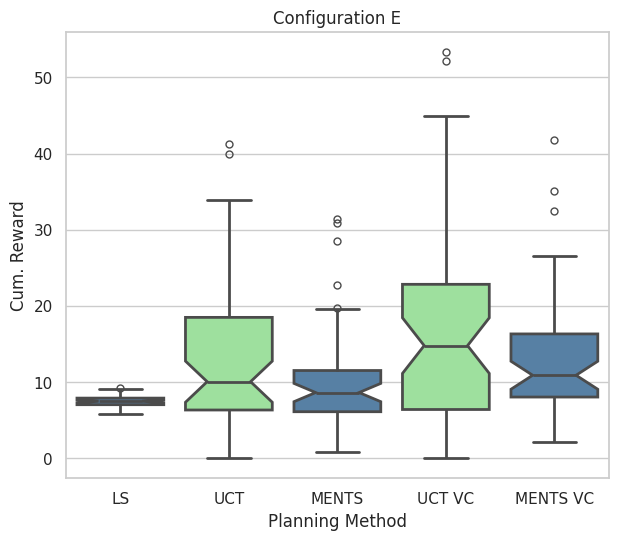}
\endminipage\hfill
\minipage{0.41\textwidth}
  \includegraphics[width=\linewidth]{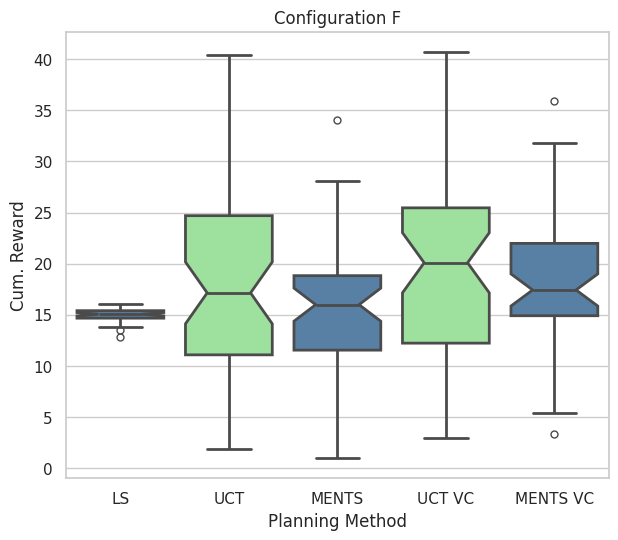}
\endminipage\hfill
\caption{Reward comparison of expanded financial option simulations.} \label{fig:value_conv_maritime_plots}
\end{figure}

\subsection{EMPIRICAL RESULTS - VALUE CONVERGENCE} 

\begin{figure}[H]
\minipage{0.41\textwidth}
  \includegraphics[width=\linewidth]{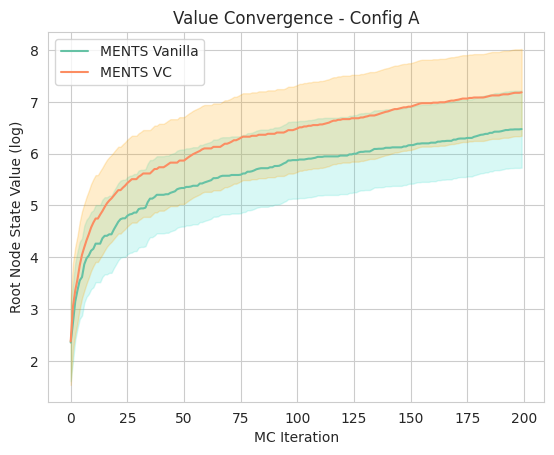}
\endminipage\hfill
\minipage{0.41\textwidth}
  \includegraphics[width=\linewidth]{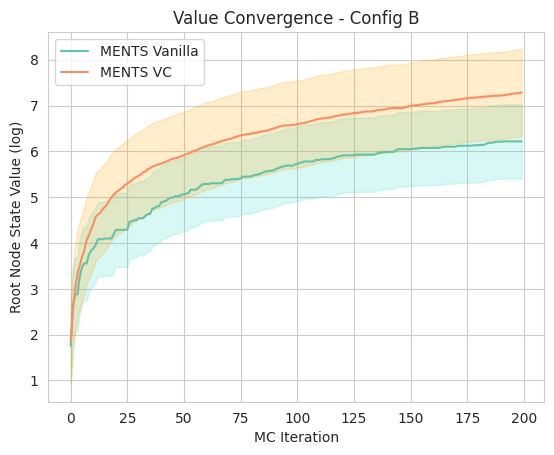}
\endminipage\hfill
\minipage{0.41\textwidth}
  \includegraphics[width=\linewidth]{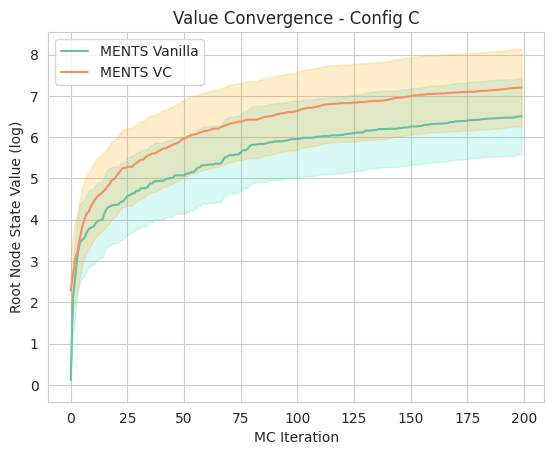}
\endminipage\hfill
\minipage{0.41\textwidth}
  \includegraphics[width=\linewidth]{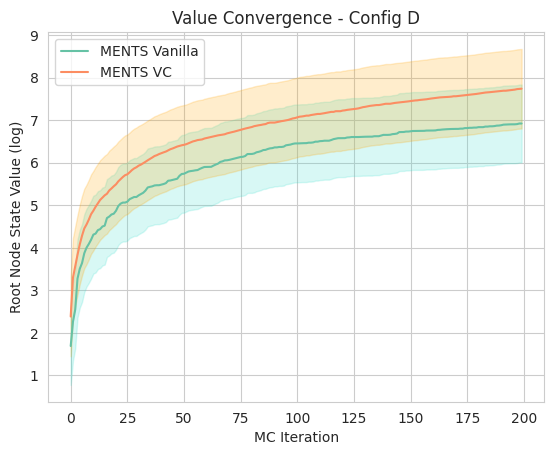}
\endminipage\hfill
\minipage{0.41\textwidth}
  \includegraphics[width=\linewidth]{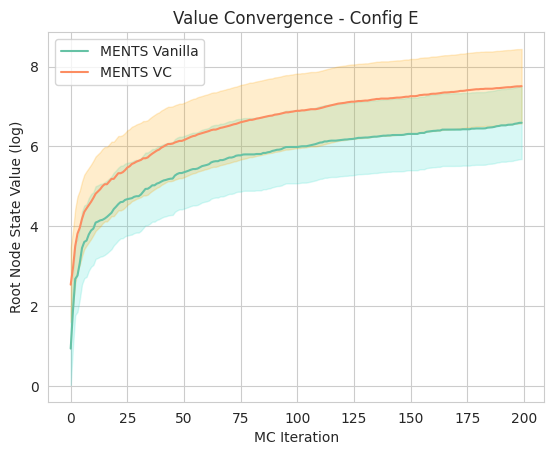}
\endminipage\hfill
\minipage{0.41\textwidth}
  \includegraphics[width=\linewidth]{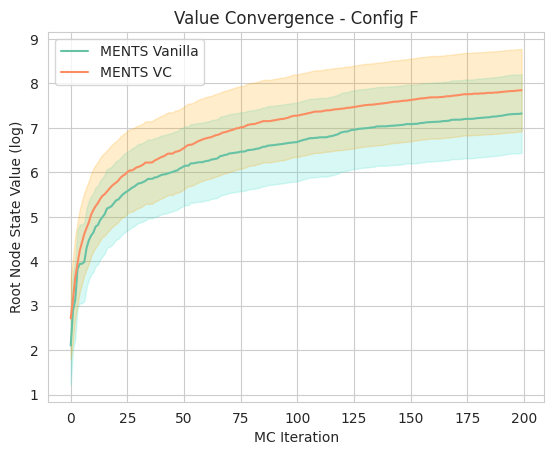}
\endminipage\hfill
\caption{Value convergence performance of the expanded financial option simulation.} \label{fig:value_conv_maritime_plots}
\end{figure}

\end{document}